\definecolor{Gray}{gray}{0.85}
\newif\ifpaper
\def\sT{{T}}
\def\tbK{\tilde{K}}
\def\bK{{K}}
\def\tbk{\tilde{k}}
\def\bg{{g}}
\def\bz{{z}}
\def\bi{{\boldsymbol i}}
\def\bI{{I}}
\def\bX{{X}}
\def\bx{{x}}
\def\bu{{u}}
\def\bA{{A}}
\def\b0{{0}}
\def\RR{\mathbb{R}}
\def\>{\rangle}
\def\vec{\operatorname{\mathop{vec}}}
\def\diag{\operatorname{\mathop{diag}}}
\def\Set#1{\left\{ #1 \right\}}
\def\Bigbar#1{\mathrel{\left|\vphantom{#1}\right.}}
\def\Setbar#1#2{\Set{#1 \Bigbar{#1 #2} #2}}
\newcommand{\qedwhite}{\hfill \ensuremath{\Box}}
\newcommand{\E}{\mathbb{E}}
\newcommand{\distas}[1]{\mathbin{\overset{#1}{\sim}}}
\newcommand{\distasNormal}{\mathbin{\overset{iid}{\sim}\mathcal{N}(0,1)}}
\newcommand{\subG}{\mathrm{subG}}
\newtheorem{theorem}{Theorem}[section]
\newtheorem{lemma}[theorem]{Lemma}
\newtheorem{proposition}[theorem]{Proposition}
\newtheorem{assumptions}[theorem]{Assumption}
\newenvironment{proof}{\par\noindent{\bf Proof:\ }}{\hfill$\Box$\\[2mm]}
\newcommand{\Id}{\mathbb{I}}
\newcommand{\bigOmg}[1]{\Omega\Big(#1\Big)}
\newcommand{\inner}[1]{\left\langle#1\right\rangle}
\newcommand{\bigexp}[1]{\exp\left(#1\right)}
\newcommand{\norm}[1]{\left\|#1\right\|}
\newcommand{\abs}[1]{\left|#1\right|}
\newcommand{\svmin}[1]{\sigma_{\textrm{min}}\left(#1\right)}
\newcommand{\evmax}[1]{\lambda_{\textrm{max}}\left(#1\right)}
\newcommand{\evmin}[1]{\lambda_{\textrm{min}}\left(#1\right)}
\def\bydef{\mathrel{\mathop:}=}
\def\PP{\mathbb{P}}
\def\tr{\mathop{\rm tr}\nolimits}
\def\min{\mathop{\rm min}\nolimits}
\def\max{\mathop{\rm max}\nolimits}
\title{Global Convergence of Deep Networks with One Wide Layer Followed by Pyramidal Topology}
\date{} 
\author{%
  Quynh Nguyen\thanks{This work was done partly while the author was at TU Kaiserslautern} \\
  MPI-MIS, Germany\\
  \and
  Marco Mondelli \\
  IST Austria 
}
\begin{document}

\maketitle

\begin{abstract}
        Recent works have shown that gradient descent can find a global minimum for over-parameterized neural networks
      where the widths of all the hidden layers scale polynomially with $N$ ($N$ being the number of training samples).
      In this paper, we prove that, for deep networks, a single layer of width $N$ following the input layer suffices to ensure a similar guarantee. 
      In particular, all the remaining layers are allowed to have constant widths, and form a pyramidal topology. 
      We show an application of our result to the widely used LeCun's initialization
      and obtain an over-parameterization requirement for the single wide layer of order $N^2.$
\end{abstract}

\section{Introduction} 
Training a neural network is NP-Hard in the worst case \cite{Blum1989}, and the optimization problem is non-convex with
many distinct local minima \cite{Auer96, SafranShamir2018,Yun2019}. 
Yet, in practice neural networks with many more parameters than training samples can be successfully trained using gradient descent methods  \cite{Zhang2017}.
Understanding this phenomenon has recently attracted a lot of interest within the research community.

In \cite{JacotEtc2018}, it is shown that, for the limiting case of infinitely wide neural networks,
the convergence of the gradient flow trajectory can be studied via the so-called `Neural Tangent Kernel' (NTK).
Other recent works study the convergence properties of gradient descent, 
but they consider only one-hidden-layer networks \cite{BrutzkusEtal2018, Amit2019, DuEtal2018_ICLR, LiLiang2018, OymakMahdi2019, SongYang2020,wu2019global},
or require all the hidden layers to scale polynomially with the number of samples \cite{AllenZhuEtal2018,DuEtal2019,ZouEtal2018,ZouGu2019}.
In contrast, neural networks as used in practice are typically only wide at the first layer(s), 
after which the width starts to decrease toward the output layer \cite{han2017deep,VGG}. 
Motivated by this fact, we study how gradient descent performs under this pyramidal topology.
Into this direction, it has been shown that the loss function of this class of networks is well-behaved,
e.g. all sublevel sets are connected \cite{QuynhICML2019},
or a weak form of Polyak-Lojasiewicz inequality is (locally) satisfied \cite{QuynhICML2018}.
However, no algorithmic guarantees have been provided so far in the literature.

\textbf{Main contributions.}  
We show that a single wide layer followed by a pyramidal topology suffices to guarantee linear convergence 
of gradient descent to a global optimum.
More specifically, in our main result (Theorem \ref{thm:main}) 
we identify  a set of sufficient conditions on the initialization and the network topology 
under which the global convergence of gradient descent is obtained.
In Section \ref{sec:suff}, we show that these conditions are satisfied 
when the network has $N$ neurons in the first layer and a constant (i.e., independent of $N$) number of neurons in the remaining layers, $N$ being the number of training samples. 
Section \ref{sec:xavier} shows an application of our theorem to the popular LeCun's initialization \cite{XavierBengio2010,he2015delving,lecun2012efficient},
in which case the width of the first layer scales roughly as $N^2/\lambda_*^2$, 
where $\lambda_*$ is the smallest eigenvalue of the expected feature output at the first layer.
Lastly, in Section \ref{sec:spherical_data} we show that $\lambda_*$ scales as a constant (i.e., independent of $N$) for sub-Gaussian training data.

\textbf{Comparison with related work.} Table \ref{table:prog} summarizes existing results and compares them with ours.
The focus here is on regression problems. 
For classification problems (with logistic loss) we refer to \cite{ChenCaoZouGu2019,JiMatus2020,NitandaEtal2019, SoudryEtal2018}.
Note that a direct comparison is not possible since the settings of these works are different.
The novelty here is that we require only the first layer to be wide, 
while previous works require all the hidden layers to be wide. 
Thus, we are able to analyze a more realistic network topology -- the pyramidal topology \cite{han2017deep}.
Furthermore, we identify a class of initializations such that  
the requirement on the width of the first layer is only $N$ neurons. This is, to the best of our knowledge,
the first time that such a result is proved for gradient descent, 
although it was known that a width of  $N$ neurons
suffices for achieving a well-behaved loss surface, see \cite{QuynhICML2019,QuynhICLR2019,QuynhICML2017,QuynhICML2018}.
For LeCun's initialization, our over-parameterization requirement is of order $N^2$, which matches the best existing bounds 
for shallow nets \cite{OymakMahdi2019,SongYang2020}. 
Let us highlight that we consider the standard parameterization, 
as opposed to the NTK parameterization \cite{ChizatEtc2019,JacotEtc2018} (see e.g. \cite{Sohl2019} for a discussion on their performance). 


\textbf{Proof techniques.} 
The work of \cite{DuEtal2019,DuEtal2018_ICLR} analyzes the Gram matrices of the various layers 
and shows that they tend to be independent of the network parameters.
In \cite{AllenZhuEtal2018,ZouEtal2018,ZouGu2019}, the authors obtain a local semi-smoothness property of the loss function 
and a lower bound on the gradient of the last hidden layer.
These results share the same network topology as the NTK analysis \cite{JacotEtc2018}, 
in the sense that all the hidden layers need to be simultaneously very large. 
In \cite{OymakMahdi2019}, the authors analyze the Jacobian of a two-layer network, but this appears to be intractable for multilayer architectures.

Our paper shares with prior work \cite{DuEtal2018_ICLR,JacotEtc2018,OymakMahdi2019} the intuition that over-parameterization, under the square loss, 
makes the trajectory of gradient descent remain bounded.
We then exploit the structure of the pyramidal topology via 
a corresponding version of the Polyak-Lojasiewicz (PL) inequality \cite{QuynhICML2018},
and the fact that the gradient of the loss is locally Lipschitz continuous.
Using these two properties, we obtain the linear convergence of gradient descent
by using the well-known recipe in non-convex optimization \cite{Polyak1963}: ``{\em Lipschitz gradient + PL-inequality $\Longrightarrow$ Linear convergence}''.

We highlight that our non-convex optimization perspective allows us to consider more general settings than existing NTK analyses. 
In fact, if the width of one of the layers is constant, then the NTK is not well defined \cite{JaschaEtal2020}. 
On the contrary, our paper just requires the first layer to be overparameterized (i.e., all the other layers can have constant widths). To obtain the result for LeCun's initialization, we show that the smallest eigenvalue of the expected feature output at the first layer scales as a constant. 
This requires a bound on the smallest singular value of the Khatri-Rao powers of a random matrix with sub-Gaussian rows, which may be of independent interest. 


\begin{table*}[t]
    \caption{Convergence guarantees for gradient descent.
    $N$ is the number of training samples; $L$ is the network depth;
    $\lambda_0$ is the smallest eigenvalue of the Gram matrix for a two-layer network \cite{DuEtal2018_ICLR,OymakMahdi2019}
    ($\lambda_0$ scales as a constant under suitable assumptions on the training data);
    $\phi$ is the minimum $L_2$ distance between any pair of training data points;
    $\lambda_{\textrm{min}}(K^{(L)})$ is the smallest eigenvalue of the Gram matrix defined recursively for an $L$-layer network \cite{DuEtal2019}
  (the dependence of $\lambda_{\textrm{min}}(K^{(L)})$ on $(N,L)$ remains unclear);
    $\lambda_*$ is defined in \eqref{eq:lambda*} and we show that it scales as a constant for sub-Gaussian training data on the sphere.
    Prior works assume the training samples have unit norm, i.e. $\norm{x_i}=1$ for $x_i\in\RR^d$, 
    whereas, for LeCun's initialization, we assume the data has norm $\sqrt{d}$.
    }
    \label{table:prog}
    \begin{center}
      \begin{small}
      \begin{tabular}{p{0.16\textwidth}p{0.045\textwidth}p{0.07\textwidth}p{0.08\textwidth}p{0.17\textwidth}p{0.09\textwidth}p{0.085\textwidth}p{0.065\textwidth}}
      \toprule
       & Deep? & Multiple\newline Outputs? & Activation & Layer Width & Parame-\newline terization & Train All\newline Layers? & \#Wide Layers \\
      \midrule
      \cite{OymakMahdi2019} & No & No & Smooth & $\bigOmg{N^2\lambda_0^{-2}}$ & NTK  & No & x \\[0.3cm]
      \cite{AllenZhuEtal2018} & Yes & Yes & General & $\bigOmg{N^{24}L^{12}\phi^{-4}}$ & Standard & No & All \\[0.3cm]
      \cite{ZouGu2019} & Yes & No & ReLU & $\bigOmg{N^{8}L^{12}\phi^{-4}}$ & Standard & No & All \\[0.2cm]
      \cite{DuEtal2019} & Yes & No & Smooth & $\bigOmg{\frac{N^4 2^{\mathcal O(L)}}{\lambda_{\textrm{min}}^4(K^{(L)})}}$ & NTK & Yes & All \\[0.5cm]
      \textbf{Ours (general)} & \textbf{Yes} & \textbf{Yes} & \textbf{Smooth} 
	& $\bm{N}$
	& \textbf{Standard} & \textbf{Yes} & \textbf{One} \\ [0.1cm]
	\textbf{Ours (LeCun)} & \textbf{Yes} & \textbf{Yes} & \textbf{Smooth} 
	& $\bm{\bigOmg{N^2 2^{\mathcal O(L)}\lambda_*^{-2}}}$
	& \textbf{Standard} & \textbf{Yes} & \textbf{One} \\
      \bottomrule
      \end{tabular}
      \end{small}
    \end{center}
    \vspace{-7pt}
\end{table*}

\section{Problem Setup}

We consider an $L$-layer neural network with activation function $\sigma:\RR\to\RR$ and parameters $\theta=(W_l)_{l=1}^L$, where $W_l\in\RR^{n_{l-1}\times n_l}$ is the weight matrix at layer $l$.
Given $\theta_a=(W_l^a)_{l=1}^L$ and $\theta_b=(W_l^b)_{l=1}^L$, 
their $L_2$ distance is given by $\norm{\theta_a-\theta_b}_2=\sqrt{\sum_{l=1}^L\norm{W_l^a-W_l^b}_F^2}$, where $\|\cdot\|_F$ denotes the Frobenius norm.
Let $X\in\RR^{N\times d}$ and $Y\in\RR^{N\times n_L}$ be respectively the training input and output, where $N$ is the number of training samples, $d$ is the input dimension and $n_L$ is the output dimension (for consistency, we set $n_0=d$). 
Let $F_l\in\RR^{N\times n_l}$ be the output of layer $l$, which is defined as
\begin{align}\label{eq:F_l}
    F_l = 
    \begin{cases}
	X & l=0,\\
	\sigma \big( F_{l-1} W_l \big) & l\in [L-1],\\
	F_{L-1} W_L & l=L,
    \end{cases}
\end{align}
where $[L-1]=\{1, \ldots, L-1\}$ and the activation function $\sigma$ is applied componentwise. 
Let $G_l=F_{l-1}W_l\in\RR^{N\times n_l}$ for $l\in[L-1]$ and $G_L=F_L$ denote the pre-activation output. 
Let $f_l=\vec(F_l)\in\RR^{Nn_l}$ and $y=\vec(Y)\in\RR^{Nn_L}$
be obtained by concatenating their columns.

We are interested in minimizing the square loss
$    \Phi(\theta) = \frac{1}{2}\norm{f_L(\theta)-y}_2^2 .$ To do so, we consider the gradient descent (GD) update $\theta_{k+1}=\theta_k - \eta\nabla\Phi(\theta_k)$, where $\eta$ is the step size and $\theta_k=(W_l^k)_{l=1}^L$ contains all parameters at step $k.$


In this paper, we consider a class of networks with one wide layer followed by a pyramidal topology,
as studied in prior works \cite{QuynhICML2019, QuynhICML2017,QuynhICML2018} in the theory of optimization landscape (see also Figure \ref{fig:net}).

\begin{assumptions}(Pyramidal network topology)\label{ass:net_topo}
    Let $n_1\geq N$ and $n_{2}\geq n_{3}\geq\ldots\geq n_L.$
\end{assumptions}
Note that this assumption does not imply any ordering between $n_1$ and $n_2$. We make the following assumptions on the activation function $\sigma.$
\begin{assumptions}(Activation function)\label{ass:act} 
    Fix $\gamma\in (0, 1)$ and $\beta>0$. 
    Let $\sigma$ satisfy that:
    (i) $\sigma'(x)\in[\gamma,1]$, (ii) $|\sigma(x)|\leq|x|$ for every $x\in\RR$, and (iii) $\sigma'$ is $\beta$-Lipschitz. 
\end{assumptions}
As a concrete example, we consider the following family of parameterized ReLU functions, smoothened by a Gaussian kernel (see Figure \ref{fig:act} for an illustration):
\begin{align}\label{eq:smooth_lrelu}
    \sigma(x) = -\frac{(1-\gamma)^2}{2\pi\beta}\hspace{-.1em} + \hspace{-.1em}\frac{\beta}{1-\gamma}\hspace{-.1em}\int_{-\infty}^{\infty}\hspace{-1.25em} \max(\gamma u, u)\, e^{-\frac{\pi\beta^2(x-u)^2}{(1-\gamma)^2}} du.
\end{align}
The activation \eqref{eq:smooth_lrelu} satisfies Assumption \ref{ass:act} and it uniformly approximates the ReLU function over $\mathbb R$, i.e., $\lim_{\beta\to\infty} \sup_{x\in\RR} |\sigma(x)-\max(\gamma x,x)|=0$ (for the proof, see Lemma \ref{lem:act} in Appendix \ref{app:prop}).


\begin{figure}
\centering
\begin{minipage}{.5\textwidth}
    \centering
    \includegraphics[width=0.5\linewidth]{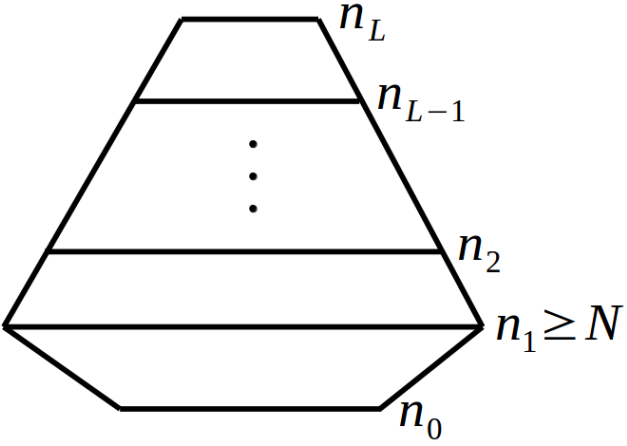}
  \caption{A network satisfying Assumption \ref{ass:net_topo}.}
  \label{fig:net}
\end{minipage}
\hspace{0.5em}
\begin{minipage}{.47\textwidth}
    \centering
    \includegraphics[width=0.5\linewidth]{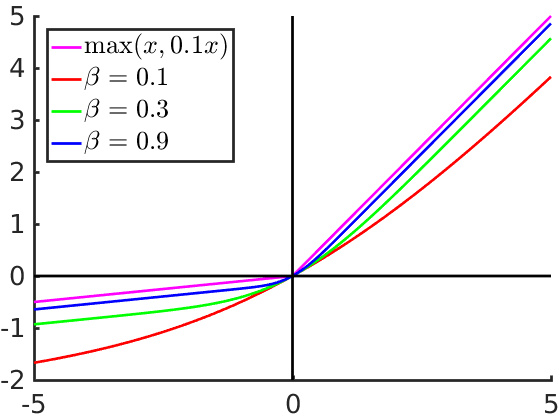}
  \caption{Activations \eqref{eq:smooth_lrelu} satisfying Assumption \ref{ass:act}. }
  \label{fig:act}
\end{minipage}
\end{figure}

\section{Main Results}
First, let us introduce some notation for the singular values of the weight matrices at initialization:
\begin{equation}\label{eq:notation_init}
\begin{split}
&\bar{\lambda}_l=\left\{\begin{array}{ll}
\hspace{-.3em}\frac{2}{3}(1+\norm{W_l^0}_2), & \hspace{-.6em}\mbox{ for } l\in\{1, 2\},\\
\vspace{-0.5em}
& \\
\hspace{-.3em}\norm{W_l^0}_2, &\hspace{-.6em} \mbox{ for }  l\in \{3, \ldots, L\},\\
\end{array}\right.\,\, \lambda_l=\svmin{W_l^0},
\,\,\, \lambda_{i\to j} = \prod_{l=i}^j \lambda_l, \,\,\,\bar{\lambda}_{i\to j} = \prod_{l=i}^j\bar{\lambda}_l,\\
\end{split}
\end{equation}
where $\svmin{A}$ and $\norm{A}_2$ are the smallest resp. largest singular value of the matrix $A.$
We define $\lambda_F=\svmin{\sigma(XW_1^0)}$ as the smallest singular value of the output of the first hidden layer at initialization. 
We also make the following assumptions on the initialization. 
\begin{assumptions}(Initial conditions)\label{ass:init} 
\begin{align}
&\begin{multlined}
\lambda_F^2 \ge \frac{\gamma^4}{3}\left(\frac{6}{\gamma^2}\right)^{L}\norm{X}_F\sqrt{2\Phi(\theta_0)}\frac{\bar{\lambda}_{3\to L}}{\lambda^2_{3\to L}} \max\left(\frac{2\bar{\lambda}_1\bar{\lambda}_2}{\min_{l\in \{3, \ldots, L\}}\lambda_l\bar{\lambda}_l},
\bar{\lambda}_1, \bar{\lambda}_2\right), \label{ass:init_1st_equation}
\end{multlined}\\
&\lambda_F^3 \ge  \frac{2\gamma^4}{3}\left(\frac{6}{\gamma^2}\right)^{L}\norm{X}_2\norm{X}_F\sqrt{2\Phi(\theta_0)}\frac{\bar{\lambda}_{3\to L}}{\lambda^2_{3\to L}}\bar{\lambda}_2. \label{ass:init_2nd_equation}
\end{align}
\end{assumptions}
Our main theorem is the following. Its proof is presented in Section \ref{sec:proofmain}. 
\begin{theorem}\label{thm:main}
    Let the network satisfy Assumption \ref{ass:net_topo}, 
    the activation function satisfy Assumption \ref{ass:act} 
    and the initial conditions satisfy Assumption \ref{ass:init}. 
    Define
\begin{equation}\label{eq:alpha_0}
\begin{split}
    &\alpha_0 = \frac{4}{\gamma^4}\left(\frac{\gamma^2}{4}\right)^L \lambda_F^2 \lambda_{3\to L}^2,\\
    &Q_0=L\sqrt{L}\left(\frac{3}{2}\right)^{2(L-1)}\norm{X}_F^2  \frac{\bar{\lambda}_{1\to L}^2}{\min_{l\in [L]}\bar{\lambda}_l^{2}}+ L\sqrt{L} \norm{X}_F \Big( 1 + L\beta\norm{X}_F R \Big) R \sqrt{2\Phi(\theta_0)},
\end{split}
\end{equation}
with $R=\prod_{p=1}^L \max\big(1,\frac{3}{2}\bar{\lambda}_p\big).$
Let the learning rate be $
\eta < \min\Big(\frac{1}{\alpha_0}, \frac{1}{Q_0}\Big)$.
Then, the training loss vanishes at a geometric rate as
    \begin{equation}\label{eq:tloss}
	\Phi(\theta_k) \leq (1-\eta\alpha_0)^k \Phi(\theta_0).
    \end{equation}
Furthermore, define
\begin{equation}\label{eq:defQ1new}
Q_1=\frac{4}{3}\left(\frac{3}{2}\right)^L\frac{\norm{X}_F}{\alpha_0}  \sum_{l=1}^L  \frac{\bar{\lambda}_{1\to L}}{\bar{\lambda}_l} \sqrt{2\Phi(\theta_0)}.
\end{equation}    
    Then,  the network parameters converge to a global minimizer $\theta_*$ at a geometric rate as
    \begin{equation}\label{eq:npar}
	\norm{\theta_k-\theta_*} _2
	\leq(1-\eta\alpha_0)^{k/2} Q_1 .
    \end{equation}
\end{theorem}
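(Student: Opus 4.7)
The plan is to follow the Polyak--Lojasiewicz (PL) recipe alluded to in the introduction: combine a PL-type lower bound $\norm{\nabla \Phi(\theta)}_2^2 \geq \alpha_0\, \Phi(\theta)$ with a local Lipschitz bound on $\nabla\Phi$, and close the argument by an induction that keeps the iterates inside a ball where both properties hold. Concretely, I would define a ball $\mathcal{B}=\{\theta : \norm{\theta-\theta_0}_2 \le Q_1\sqrt{2\Phi(\theta_0)}\}$ around initialization, with radius tuned so that it both contains the entire GD trajectory and certifies the PL constant $\alpha_0$ from \eqref{eq:alpha_0}.

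First, I would establish three deterministic estimates valid for every $\theta\in\mathcal{B}$. The first is a \emph{singular-value stability} estimate: using the $3/2$ factors built into $\bar\lambda_l$ in \eqref{eq:notation_init}, one shows that $\norm{W_l}_2 \le \tfrac{3}{2}\bar\lambda_l$ and $\svmin{W_l}\ge \tfrac{2}{3}\lambda_l$ throughout the ball, and similarly $\svmin{\sigma(XW_1)} \ge \lambda_F/\text{const}$. This is the step that uses Assumption \ref{ass:init}: the two lower bounds \eqref{ass:init_1st_equation}--\eqref{ass:init_2nd_equation} are calibrated so that a Frobenius-norm perturbation of size $\sqrt{2\Phi(\theta_0)}$ cannot destroy these spectral properties, leveraging that $\sigma$ has derivative in $[\gamma,1]$. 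The second ingredient is the PL inequality itself. Using the explicit formula $\partial\Phi/\partial W_L = F_{L-1}^\top(F_L-Y)$, one lower bounds $\norm{\nabla\Phi}_2^2$ by $\svmin{F_{L-1}}^2 \cdot 2\Phi(\theta)$, and then lower bounds $\svmin{F_{L-1}}$ by $\svmin{\sigma(XW_1)}\cdot\prod_{l=3}^{L}\svmin{W_l}$ times a factor coming from the diagonal Jacobians $\diag(\sigma'(G_l))$, which is at least $\gamma^{L-2}$; combined with the first estimate this produces the constant $\alpha_0$ in \eqref{eq:alpha_0}. The third is a gradient Lipschitz estimate $\norm{\nabla\Phi(\theta)-\nabla\Phi(\theta')}_2 \le Q_0\norm{\theta-\theta'}_2$ on $\mathcal{B}$, obtained by a layer-by-layer calculation: each partial derivative $\partial\Phi/\partial W_l$ is a product of $L-1$ bounded operators (weight matrices and $\diag(\sigma'(G_l))$) multiplied against the residual $F_L-Y$, and differentiating any of these factors produces terms controlled by the initialization operator norms (up to the $3/2$ factor from the first estimate), $\beta$ (via the Lipschitzness of $\sigma'$), and $\sqrt{2\Phi(\theta_0)}$ from the residual; matching these terms reproduces $Q_0$ in \eqref{eq:alpha_0}.

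With these ingredients I would run an induction on $k$ with hypothesis: $\theta_j\in\mathcal{B}$ for all $j\le k$ and $\Phi(\theta_j)\le(1-\eta\alpha_0)^j\Phi(\theta_0)$. The descent lemma associated with the Lipschitz constant $Q_0$, combined with the PL inequality, gives
\[
\Phi(\theta_{k+1}) \le \Phi(\theta_k) - \eta\bigl(1-\tfrac{\eta Q_0}{2}\bigr)\norm{\nabla\Phi(\theta_k)}_2^2 \le (1-\eta\alpha_0)\,\Phi(\theta_k),
\]
using $\eta \le 1/Q_0$; this advances the loss bound, yielding \eqref{eq:tloss}. To advance the containment $\theta_{k+1}\in\mathcal{B}$, I would write $\norm{\theta_{k+1}-\theta_0}_2 \le \eta\sum_{j=0}^{k}\norm{\nabla\Phi(\theta_j)}_2$, and then bound each $\norm{\nabla\Phi(\theta_j)}_2$ directly from the layer-by-layer formula by a constant times $\sqrt{\Phi(\theta_j)}$ (the sum over layers of products of initialization norms, precisely reproducing the prefactor in \eqref{eq:defQ1new}). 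Summing the resulting geometric series in $\sqrt{1-\eta\alpha_0}$, with ratio at most $2/(\eta\alpha_0)$, gives the radius $Q_1\sqrt{2\Phi(\theta_0)}$, closing the induction.

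The same telescoping estimate shows $\{\theta_k\}$ is Cauchy in $\ell_2$, so it converges to some $\theta_*$; continuity of $\Phi$ combined with \eqref{eq:tloss} forces $\Phi(\theta_*)=0$, making $\theta_*$ a global minimizer, and passing to the limit $m\to\infty$ in $\norm{\theta_k-\theta_m}_2 \le \sum_{j=k}^{m-1}\eta\norm{\nabla\Phi(\theta_j)}_2$ yields \eqref{eq:npar}. The main technical obstacle is the singular-value stability step: showing that the small eigenvalue $\lambda_F$ survives inside a ball of radius $\sim\sqrt{\Phi(\theta_0)}$ requires a delicate perturbation analysis through the first-layer nonlinearity and through the subsequent products of weight matrices under the pyramidal constraint $n_2\ge\cdots\ge n_L$; essentially every factor appearing in \eqref{ass:init_1st_equation}--\eqref{ass:init_2nd_equation} is there to make this step go through, so the heart of the proof is bookkeeping the correct constants rather than any single deep inequality.
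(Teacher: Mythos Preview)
Your overall architecture---induction on $k$ maintaining spectral bounds, local Lipschitz gradient yielding the descent lemma, PL to get geometric decay, then a Cauchy-sequence argument---matches the paper. But the PL step, as you state it, has a genuine gap. You propose to lower bound $\norm{\nabla\Phi}_2$ via $\partial\Phi/\partial W_L = F_{L-1}^\top(F_L-Y)$ and then control $\svmin{F_{L-1}}$. This fails for two independent reasons. First, the theorem only assumes $n_1\ge N$; the later layers may satisfy $n_{L-1}<N$, in which case $F_{L-1}\in\RR^{N\times n_{L-1}}$ has rank at most $n_{L-1}<N$ and there is no bound of the form $\norm{F_{L-1}^\top v}_2\ge c\norm{v}_2$ for all $v\in\RR^N$---the residual $F_L-Y$ can live in the kernel of $F_{L-1}^\top$. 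Second, your proposed chain $\svmin{F_{L-1}}\gtrsim\gamma^{L-2}\svmin{F_1}\prod_l\svmin{W_l}$ is false: entrywise application of $\sigma$ does not preserve singular values. For a concrete counterexample, take leaky-ReLU with slope $\gamma$ and $A=\begin{pmatrix}\gamma&-1\\-1&\gamma\end{pmatrix}$; then $\svmin{A}=1-\gamma>0$ but $\sigma(A)=\gamma\begin{pmatrix}1&-1\\-1&1\end{pmatrix}$ is singular.

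The paper instead establishes the PL inequality through $\nabla_{W_2}\Phi$ (Lemma~\ref{lem:PL}, part 3): one has $\vec(\nabla_{W_2}\Phi)=(\Id_{n_2}\otimes F_1^\top)\prod_{p=3}^L\Sigma_{p-1}(W_p\otimes\Id_N)(f_L-y)$, and the point is that $F_1\in\RR^{N\times n_1}$ \emph{does} have full row rank (since $n_1\ge N$), while each factor $\Sigma_{p-1}(W_p\otimes\Id_N)\in\RR^{Nn_{p-1}\times Nn_p}$ is tall precisely because of the pyramidal assumption $n_{p-1}\ge n_p$, so $\svmin{\prod_p A_p}\ge\prod_p\svmin{A_p}$. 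This is where the pyramidal topology actually enters---in the PL bound, not in the perturbation analysis where you placed it---and it explains why $\alpha_0$ in \eqref{eq:alpha_0} contains $\lambda_{3\to L}$ but not $\lambda_2$. Once you swap in this PL mechanism, the rest of your outline (the bound $\norm{W_l^{r+1}-W_l^0}_F\le\eta\sum_s\norm{\nabla_{W_l}\Phi(\theta_s)}_F$ summed as a geometric series, Weyl's inequality for the spectral stability, the Jacobian-Lipschitz bound giving $Q_0$, and the Cauchy argument for \eqref{eq:npar}) is essentially what the paper does.
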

Theorem \ref{thm:main} shows that, for our pyramidal network topology, 
gradient descent converges to a global optimum under suitable initializations. 
Next, we discuss how these initial conditions may be satisfied.
\subsection{Width $N$ suffices for a class of initializations}\label{sec:suff}
Let us show an example of an initialization that fulfills Assumption \ref{ass:init}. 
Recall that, by Assumption \ref{ass:act}, $\sigma(0)=0.$
Pick $W_1^0$ so that $\lambda_F$ is strictly positive\footnote{For analytic activation functions such as \eqref{eq:smooth_lrelu}, and almost all training data, 
the set of $W_1^0$ for which $\sigma(XW_1^0)$ does not have full rank has measure zero.}. One concrete example is $W_1^0$ chosen according to LeCun's initialization, i.e., $[W_1^0]_{ij}\distas{}\mathcal{N}(0,1/d)$.
Pick $(W_l^0)_{l=3}^L$ so that $\lambda_l\geq 1$ and $\lambda_l^2\geq c\bar{\lambda}_l$ for every $l\in[3,L],$ for some $c>1.$ One concrete example is $[W_l^0]_{ij}\sim\mathcal N(0, (200c)^2/n_{l-1})$, which fulfils our requirements w.p. $\ge 1-\sum_{l=3}^L e^{-\Omega(n_{l-1})}$ under the extra condition $\sqrt{n_{l-1}}\ge 1.01\sqrt{n_l}$. Another option is to pick $W_l^0$, for $3\le l\le L$, to be scaled identity matrices (or rectangular matrices whose top block is a scaled identity). Next, set $W_2^0$ to be a random matrix 
with i.i.d. entries whose distribution has zero mean and variance $v$. 
Then, by choosing a sufficiently small $v$ (dependent on $c$), the following upper bounds hold with high probability:
\begin{equation}\label{eq:neweq}
\begin{split}
\bar{\lambda}_2&= \frac{2}{3}(1+\|W_2^0\|_2)\le 1,\\
\sqrt{2\Phi(\theta_0)}&\le \norm{Y}_F + \|F_L(\theta_0)\|_F\le \norm{Y}_F + \prod_{l=1}^L\|W_l^0\|_2\|X\|_F\le 2 \norm{Y}_F.
\end{split}
\end{equation}
We note that a trivial choice of $v$ would be $v=0$, which directly implies that \eqref{eq:neweq} holds with probability 1. 
Now one observes that to satisfy \eqref{ass:init_1st_equation}, it suffices to have
\begin{align}\label{eq:rewriteeq}
    \lambda_F^2 \left( \frac{\gamma^4}{3}\left(\frac{6}{\gamma^2}\right)^{L}2\norm{X}_F\norm{Y}_F \max\Big(2\bar{\lambda}_1, 1\Big) \right)^{-1}\ge \frac{\bar{\lambda}_{3\to L}}{\lambda^2_{3\to L}} . 
\end{align}
The LHS of \eqref{eq:rewriteeq} depends only on $W_1^0$ and it is strictly positive as $\lambda_F>0$,
whereas the RHS of \eqref{eq:rewriteeq} depends only on $(W_l^0)_{l=3}^L.$ 
Once the LHS stays fixed, the RHS can be made arbitrarily small by increasing the value of $c$ (and, consequently, decreasing the value of $v$).
Thus, \eqref{eq:rewriteeq} is satisfied for $c$ large enough, and condition \eqref{ass:init_1st_equation} holds.
Similarly, we can show that condition \eqref{ass:init_2nd_equation} also holds. 
Note that this initialization does not introduce additional over-parameterization requirements. 
Hence, Theorem \ref{thm:main} requires only $N$ neurons at the first layer and allows a constant number of neurons 
in all the remaining layers.

In this example, the total number of parameters of the network is $\Omega(N)$, which is believed to be tight for memorizing $N$ {\em arbitrary} data points, see e.g. \cite{bartlett2019nearly, baum1989size, ge2019mildly, vershynin2020memory, yun2019small}. 
However, our result is not optimal in terms of layer widths. In fact, in \cite{yun2019small} it is shown that, for a three-layer network, $\sqrt{N}$ neurons in each layer suffice for perfect memorization. Notice that \cite{yun2019small} concerns the memorization capacity of neural networks, while we are interested in algorithmic guarantees.

As a technical remark, we note that if $x_i=x_j$ for $i\neq j$, then $\lambda_F=0$. 
Thus, Assumption 3.2 cannot hold unless $\Phi=0$ (i.e., we initialize at a global minimum) or $X=0$ (i.e., the GD iterates do not move).
In general, by using arguments along the lines of \cite{DuEtal2019}, one can show that, if the data points are not parallel and the activation function is analytic and not polynomial, 
then $\lambda_F>0$. Furthermore, if $x_i$ and $x_j$ are close, then $\lambda_F$ is small and, therefore, $\alpha_0$ is small. Thus, GD requires more iterations to converge to a global optimum. 
This happens regardless of the value of $y_i$ and $y_j$. 
Providing results for deep pyramidal networks that depend on the quality of the labels is an outstanding problem. 
Solving it could also lead to generalization bounds, see e.g. \cite{arora2019fine}. As a final note, we highlight that 
Theorem \ref{thm:main} makes no specific assumption about the data (beyond requiring that $\lambda_F>0$ so that the statement is meaningful, which holds for almost every dataset).
In other settings, making additional assumptions on the data is crucial for obtaining further improvements \cite{Baum1988,ChenCaoZouGu2019,JiMatus2020,NitandaEtal2019}.



\subsection{LeCun's initialization: Width $\Omega(N^2)$ suffices}\label{sec:xavier}

The widely used LeCun's initialization,
i.e., $[W_l^0]_{ij}\distas{}\mathcal{N}(0,1/n_{l-1})$ for all $l\in[L], i\in[n_{l-1}], j\in[n_l]$,
satisfies our Assumption \ref{ass:init} under a stronger requirement on the width of the first layer,
and thus the results of Theorem \ref{thm:main} hold.
For space reason, the formal statement and proof are postponed to Appendix \ref{app:xavier}.
There, our main  Theorem \ref{thm:main_instance_1} applies to any training data.
Below, we discuss how this result looks like when considering the following setting (standard in the literature):
\emph{(i)} $N\ge d$, \emph{(ii)} the training samples lie on the sphere of radius $\sqrt{d}$,
\emph{(iii)} $n_L$ is a constant, and \emph{(iv)} the target labels satisfy $\norm{y_i}=\mathcal{O}(1)$ for $i\in[N].$
Then, Assumption \ref{ass:init} is satisfied w.h.p. if the first layer scales as:
\begin{equation}\label{eq:firstlscale}
 \boxed{
 n_1=\Omega\left( \max\left( 
	  \frac{\norm{X}_2^2}{\lambda_*}\left(\log\frac{N}{\lambda_*}\right)^2,
	  \frac{N^2 2^{\mathcal{O}(L)}}{\lambda_*^2}
      \right)\right),
}
\end{equation}
where $\lambda_*$ is the smallest eigenvalue of the expected Gram matrix w.r.t. the output of the first layer:
\begin{align}\label{eq:lambda*}
    \lambda_*=\evmin{G_*}, \qquad G_* =  \E_{w\distas{}\mathcal{N}(0,\frac{1}{d}\Id_d)} \left[\sigma(Xw) \sigma(Xw)^T\right].
\end{align}
Compared to Assumption \ref{ass:net_topo}, our result in this section also requires a slightly stronger requirement 
on the pyramidal topology, namely $\sqrt{n_{l-1}}\geq 1.01(\sqrt{n_l}+t),$ for some constant $t>0$. 

We note that the bound \eqref{eq:firstlscale} holds for any training data that lie on the sphere.
Now, let us discuss how this bound scales for {\em random data} (still on the sphere). First, we have that 
  \begin{equation}\label{eq:ubstar}
        \lambda_*\leq \frac{\tr(G_*)}{N} = \frac{\E\norm{\sigma(Xw)}_2^2}{N} \leq \frac{\E\norm{Xw}_2^2}{N} = \frac{\norm{X}_F^2}{Nd}=1. 
  \end{equation}
Then, if we additionally assume that the rows of $\bX \in \mathbb R^{N\times d}$ are sub-Gaussian and isotropic, $\norm{X}_2^2$ is of order $N$, see e.g. Theorem 5.39 in  \cite{Vershynin2010}. 
As a result, we have that $\boxed{n_1=\Omega(N^2/\lambda_*^2)}.$

To conclude, we briefly outline the steps leading to \eqref{eq:firstlscale}. 
First, by Gaussian concentration, we bound the output of the network at initialization (see Lemma \ref{lem:init_output} 
in Appendix \ref{app:lem:init_output}). Then, 
we show that, with high probability, $\lambda_F\ge \sqrt{n_1\lambda_*/4}$ (see Lemma \ref{lem:svmin_F1_matrix_chernoff} 
in Appendix \ref{app:lem:svmin_F1_matrix_chernoff}). 
Finally, we bound the quantities $\bar{\lambda}_l$ and $\lambda_l$ 
using results on the singular values of random Gaussian matrices. 
By computing the terms $\alpha_0$, $Q_0$ and $Q_1$ defined in \eqref{eq:alpha_0} and \eqref{eq:defQ1new}, 
we can also show that the number of iterations needed to achieve $\epsilon$ training loss scales as 
$\boxed{\frac{N^{3/2}2^{\mathcal O(L)}}{\lambda_*}\log(1/\epsilon)}.$
The next section shows that $\lambda_*=\Theta(1)$.

\subsection{Lower bound on $\lambda_*$}\label{sec:spherical_data}
By definition \eqref{eq:lambda*}, $\lambda_*$ depends only on the activation $\sigma$ and on the training data $X$. 
Under some mild conditions on $(X,\sigma)$, one can show that $\lambda_*>0,$ see also the discussion at the end of Section \ref{sec:suff}.
Nevertheless, this fact does not reveal how $\lambda_*$ scales with $N$ and $d$. 
Our next theorem shows that, for sub-Gaussian data,
$\lambda_*$ is lower bounded by a constant independent of $N,d.$ The detailed proof is given in Appendix \ref{app:thla}.
\begin{theorem}\label{th:lambdastarnew}
Let $X=[x_1,\ldots,x_N]^T\in\RR^{N\times d}$ be a matrix whose rows are i.i.d. random sub-Gaussian vectors 
with $\norm{x_i}_2=\sqrt{d}$ and $\|x_i\|_{\psi_2}\le c_1$ for all $i\in[N]$, 
where $\|x_i\|_{\psi_2}$ denotes the sub-Gaussian norm of $x_i$ and $c_1$ is a numerical constant (independent of $d$). 
Assume that \it{(i)} $\sigma\in L^2(\mathbb R, e^{-x^2/2}/\sqrt{2\pi})$
\footnote{The condition $\sigma\in L^2(\mathbb R, e^{-x^2/2}/\sqrt{2\pi})$ 
means that $\int_{\mathbb R} \frac{1}{\sqrt{2\pi}} |\sigma(x)|^2 e^{-x^2/2}\,dx < \infty.$ 
One can easily check that most of the popular activation functions in deep learning is contained in this $L^2$ space, including \eqref{eq:smooth_lrelu}.
}, 
\it{(ii)} $\sigma$ is \emph{not} a linear function, and \it{(iii)}
    $|\sigma(x)|\le |x|$ for every $x\in \mathbb R$. 
Fix any integer $k\ge 2$. Then, for $N\le d^{k}$, we have
    \begin{equation}\label{eq:lbstar}
	\mathbb P\left(\lambda_*\ge b_1\right)\ge  1-2N^2 e^{-c_2 N^{4/(5k)}}.
    \end{equation}
    Here, $c_2>0$ is independent of $(N, d, k)$, and $b_1>0$ is independent of $(N, d)$.
\end{theorem}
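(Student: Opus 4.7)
The plan is to Hermite-expand $\sigma$ with respect to the standard Gaussian, write $G_*$ as a Hadamard series in $XX^\top/d$, and reduce the lower bound on $\lambda_*$ to a high-probability lower bound on the smallest singular value of a Khatri-Rao power of $X$. Since $\|x_i\|_2=\sqrt{d}$ and $w\sim\mathcal{N}(0,I_d/d)$, each $x_i^\top w$ is standard Gaussian. Writing the $L^2$-Hermite expansion $\sigma=\sum_{p\ge 0}\mu_p H_p$ in the orthonormal probabilist's Hermite polynomials, and using the identity $\E[H_p(U)H_q(V)]=\delta_{pq}\rho^p$ for jointly standard Gaussians $(U,V)$ of correlation $\rho$, I would obtain
\[
[G_*]_{ij}=\E\big[\sigma(x_i^\top w)\,\sigma(x_j^\top w)\big]=\sum_{p\ge 0}\mu_p^2\Big(\frac{x_i^\top x_j}{d}\Big)^p,
\]
so $G_*=\sum_{p\ge 0}\mu_p^2\,(XX^\top/d)^{\odot p}$. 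Assumption (iii) forces $\sigma$ to grow at most linearly and hence not to be a polynomial of degree $\ge 2$; combined with Assumption (ii), $\sigma$ is a genuinely non-polynomial $L^2$ function, and therefore has non-zero Hermite coefficients $\mu_p$ of arbitrarily large order $p\ge 2$.

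Next, since every Hadamard power of a PSD matrix is PSD (Schur product theorem), I can retain only one term of the expansion. I would pick $p^*$ of the form $\Theta(k)$ with $\mu_{p^*}\neq 0$, which exists by the previous paragraph, and write
\[
G_*\succeq\mu_{p^*}^2\,(XX^\top/d)^{\odot p^*}.
\]
Letting $Z\in\RR^{N\times d^{p^*}}$ be the Khatri-Rao power with rows $z_i=x_i^{\otimes p^*}/d^{p^*/2}$, one has $ZZ^\top=(XX^\top/d)^{\odot p^*}$, whence $\lambda_*\ge\mu_{p^*}^2\,\svmin{Z}^2$. It therefore suffices to prove $\svmin{Z}^2\ge\tfrac12$ with probability at least $1-2N^2\exp(-c_2 N^{4/(5k)})$ under the sub-Gaussian row assumption on $X$, from which the theorem follows with $b_1=\mu_{p^*}^2/2$, a constant that depends only on $\sigma$ and $k$ (which is fixed) and not on $(N,d)$.

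The remaining step, and the main obstacle, is exactly the Khatri-Rao singular value bound flagged in the paper as of independent interest. The diagonal of $ZZ^\top$ is identically $1$ (because $\|x_i\|_2=\sqrt d$), so the target reduces to $\|ZZ^\top-I_N\|_2\le\tfrac12$. A pure entrywise control $|x_i^\top x_j|/d\le C\sqrt{\log N/d}$ combined with Gershgorin only handles $N\lesssim d^{p^*/2}$, which falls short of $d^k$. To reach the full range, I would instead bound $u^\top(ZZ^\top-I_N)u$ uniformly in $u\in\mathbb{S}^{N-1}$ by a polynomial-chaos (tensorized Hanson-Wright) inequality for order-$p^*$ monomials of sub-Gaussian variables, combined with a truncation that tames their sub-Weibull tails of shape $2/p^*$ and an $\varepsilon$-net argument on the sphere. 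The delicate point is that the raw chaos tail is not quite strong enough to beat the $\exp(\Theta(N))$ cardinality of the net, so the truncation level, the Khatri-Rao order $p^*\asymp k$ (taken large enough that $d^{p^*}\gg N^2$), and the net resolution have to be balanced jointly; this three-way optimization is what I expect ultimately to produce the peculiar $4/(5k)$ exponent and to consume the bulk of the appendix proof.
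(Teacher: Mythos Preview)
Your reduction via the Hermite expansion, together with the observation that $\sigma$ cannot be a polynomial and therefore has nonzero Hermite coefficients of arbitrarily high order, is exactly what the paper does. The divergence is entirely in the final step, and there you overcomplicate matters substantially.

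You correctly note that the Gershgorin/entrywise approach handles only $N\lesssim d^{p^*/2}$, then conclude this ``falls short of $d^k$'' and reach for a polynomial-chaos/$\varepsilon$-net argument. But the paper uses precisely the Gershgorin bound you discard. The observation you miss is that, since $\mu_r(\sigma)\neq 0$ for arbitrarily large $r$, nothing stops you from taking $r\ge 10k$: this is still $\Theta(k)$, so $b_1=\mu_r^2(\sigma)/8$ remains a constant independent of $(N,d)$, and now $d^{r/2}\ge d^{5k}\ge N^5$, so the crude bound is ample. Concretely, the paper's Khatri-Rao lemma writes
\[
\sigma_{\min}^2(X^{*r})\ \ge\ d^r - N\Bigl(\max_{i\neq j}|\langle x_i,x_j\rangle|\Bigr)^r,
\]
controls the maximum via sub-Gaussian concentration on $\langle x_i,x_j\rangle$ plus a union bound over $N^2$ pairs, and sets the threshold $t=(3d^{r/2}/(4N))^{1/r}$. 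The failure probability becomes $2N^2\exp(-c_3\,dN^{-2/r})$; since $N\le d^k$ gives $d\ge N^{1/k}$ and $r\ge 10k$ gives $2/r\le 1/(5k)$, one has $dN^{-2/r}\ge N^{1/k-2/r}\ge N^{4/(5k)}$. That is the entire source of the ``peculiar $4/(5k)$ exponent'': it is two lines of arithmetic from the choice $r=10k$, not the output of any three-way optimization among truncation level, chaos order, and net resolution. The whole Khatri-Rao lemma in the paper is about ten lines; no Hanson--Wright, no $\varepsilon$-net, no truncation.
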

We remark that the same result of Theorem \ref{th:lambdastarnew} holds if $\|x_i\|_2=r$ and $[W_1^0]_{ij}\distas{}\mathcal{N}(0,1/r^2)$, for any $r>0$. In order to handle different scalings of the data and the weights of the first layer,
one would need to extend the Hermite analysis of Lemma \ref{lem:lambdas} in Appendix \ref{app:grammat}. 

By using \eqref{eq:ubstar}, one immediately obtains that the lower bound \eqref{eq:lbstar} is tight (up to a constant). 
It is also necessary for $\sigma$ to be non-linear, otherwise $G_*=\frac{1}{d}X X^\sT$ and $\lambda_*=0$ when $d<N.$  
Below we provide a proof sketch for Theorem \ref{th:lambdastarnew}. By using the Hermite expansion, one can show that
    \begin{equation}\label{eq:Gstarconn}
    	G_* = \sum_{r=0}^{\infty} \frac{\mu_r^2(\sigma)}{d^r} \, (X^{*r}) {(X^{*r})}^T,
    \end{equation}
    where $\mu_r(\sigma)$ denotes the $r$-th Hermite coefficient of $\sigma$,
    and each row of $X^{\ast r}$ is obtained by taking the Kronecker product of the corresponding row of $X$ with itself for $r$ times.
    The proof of \eqref{eq:Gstarconn} is given in Appendix \ref{app:grammat}. 
    As $\sigma$ is not a linear function and $\abs{\sigma(x)}\leq\abs{x}$ for $x\in\RR$, 
    we can show that $\mu_r(\sigma)>0$ for arbitrarily large $r$.
    Thus, it remains to lower bound the smallest singular value of $X^{\ast r}.$ This is done in the following lemma, whose proof appears in Appendix \ref{app:lemma:KRprodimprov}.
\begin{lemma}\label{lemma:KRprodimprov}
Let $X$ satisfy the assumptions of Theorem \ref{th:lambdastarnew}. Fix any integer $r\ge 2$. Then, for $N\le d^r,$ we have
$\mathbb P\left(\sigma_{\rm min}(\bX^{\ast r}) \ge d^{r/2}/2\right)\ge 1-2N^2 e^{-c_3d N^{-2/r}},$
where $c_3>0$ is independent of $(N, d, r).$
\end{lemma}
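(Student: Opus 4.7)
The plan is to reduce the statement to an estimate on the $N\times N$ Gram matrix $M := X^{\ast r}(X^{\ast r})^T$. Since the $i$-th row of $X^{\ast r}$ equals $x_i^{\otimes r}$, one has $M_{ij} = \langle x_i, x_j\rangle^r$; in particular, the diagonal entries are $M_{ii}=\|x_i\|_2^{2r}=d^r$ by assumption. Hence it suffices to control the off-diagonal entries uniformly: if $\max_{i\neq j}|M_{ij}|\leq 3 d^r/(4(N-1))$, then Gershgorin's disc theorem applied to the symmetric matrix $M$ yields
\[
\lambda_{\min}(M) \ \geq\ d^r - \max_{i}\sum_{j\neq i}|M_{ij}| \ \geq\ d^r - \tfrac{3}{4}d^r \ =\ \tfrac{1}{4}d^r,
\]
and therefore $\sigma_{\min}(X^{\ast r}) = \sqrt{\lambda_{\min}(M)} \ge d^{r/2}/2$, which is exactly the desired bound.

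The heart of the proof is then to bound each off-diagonal entry via sub-Gaussian concentration. Fix $i\neq j$ and condition on $x_j$: since $x_i$ is independent of $x_j$ and $\|x_i\|_{\psi_2}\leq c_1$, the definition of the vector sub-Gaussian norm implies that $\langle x_i, x_j/\|x_j\|_2\rangle$ is sub-Gaussian with $\psi_2$-norm at most $c_1$. Multiplying by $\|x_j\|_2=\sqrt{d}$ and invoking the standard one-dimensional sub-Gaussian tail bound,
\[
\Pr\!\left(|\langle x_i, x_j\rangle| \geq s\right) \ \leq\ 2\exp\!\left(-\frac{c\, s^2}{c_1^2\, d}\right)
\]
for an absolute constant $c > 0$. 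I would choose the threshold $s := (3/4)^{1/r}\, d/N^{1/r}$, so that $s^r = 3d^r/(4N) \leq 3d^r/(4(N-1))$. The exponent becomes $-c(3/4)^{2/r} d/(c_1^2 N^{2/r})$, and using the elementary monotonicity $(3/4)^{2/r}\geq 3/4$ valid for all $r\geq 2$, it is upper bounded by $-c_3\, d/N^{2/r}$ with $c_3:=3c/(4c_1^2)$ a universal constant independent of $(N,d,r)$.

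A union bound over the at most $N^2$ ordered pairs $(i,j)$ with $i\neq j$ then produces the event $\{\max_{i\neq j}|\langle x_i,x_j\rangle|\leq s\}$ with probability at least $1 - 2N^2 \exp(-c_3\, d\, N^{-2/r})$, on which the Gershgorin estimate above applies and delivers the claim. The one mildly delicate bookkeeping point is to verify that $c_3$ truly does not depend on $r$, which as noted reduces to the observation $(3/4)^{2/r}\geq 3/4$ for $r\ge 2$. Apart from this, the argument is a clean pairing of the standard sub-Gaussian inner-product tail bound with a one-line diagonal-dominance estimate on $M$, and I do not anticipate a serious technical obstacle.
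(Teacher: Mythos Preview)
Your proposal is correct and follows essentially the same route as the paper: both reduce to controlling $\max_{i\neq j}|\langle x_i,x_j\rangle|$ via the sub-Gaussian tail bound and a union bound, then feed this into a diagonal-dominance estimate on the Gram matrix $M_{ij}=\langle x_i,x_j\rangle^r$. The only cosmetic difference is that you invoke Gershgorin's theorem, whereas the paper writes out $\|K^T z\|_2^2 = d^r + \sum_{i\neq j} z_i z_j \langle x_i,x_j\rangle^r$ and bounds the off-diagonal sum by $N(\max_{i\neq j}|\langle x_i,x_j\rangle|)^r$ via Cauchy--Schwarz; these yield the same inequality (yours is marginally tighter with $N-1$ in place of $N$), and your explicit check that $(3/4)^{2/r}\ge 3/4$ makes the $r$-independence of $c_3$ cleaner than in the paper.
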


The probability in Lemma \ref{lemma:KRprodimprov} 
tends to 1 as long as $N$ is $\mathcal O(d^{r/2-\epsilon})$ for any $\epsilon>0$. We remark that it is possible to tighten this result for $r\in \{2, 3, 4\}$ (see Appendix \ref{app:prooflemmaKRprod}).

Lemma \ref{lemma:KRprodimprov} also allows one to study the scaling of other quantities appearing in prior works. 
For instance, the $\lambda_0$ of Assumption 3.1 in \cite{DuEtal2018_ICLR} (see also Table \ref{table:prog}) 
is $\Omega(1)$
when \emph{(i)} the data points are sub-Gaussian and \emph{(ii)} $N$ grows at most polynomially in $d$. 
If $N$ grows exponentially in $d$, then $\lambda_0$ is $\Omega((\log N)^{-3/2})$. 
This last estimate uses that the $r$-th Hermite coefficient of the step function scales as $1/r^{3/4}$. 
Similar considerations can be done for the bounds in \cite{OymakMahdi2019}.

Several prior works (see e.g. \cite{AllenZhuEtal2018, OymakMahdi2019, ZouEtal2018, ZouGu2019}) derive bounds on the layer widths depending on the minimum distance between any pair of training data points. It would be interesting to see if such bounds can be improved by exploiting the techniques of this paper. 

\section{Proof of Theorem \ref{thm:main}}\label{sec:proofmain}

Let $\otimes$ denote the Kronecker product, and let $\Sigma_l=\diag[\vec(\sigma'(G_l))]\in\RR^{Nn_l\times Nn_l}.$   
Below, we frequently encounter situations where we need to evaluate the matrices $\Sigma_l,F_l$ at specific iterations of gradient descent.
To this end, we define the shorthands $F_l^k=F_l(\theta_k), f_l^k=\vec(F_l^k),$ and $\Sigma_l^k=\Sigma_l(\theta_k).$
We omit the parameter $\theta_k$ and write just $F_l,\Sigma_l$ when it is clear from the context.
Now let us recall the following results from \cite{QuynhICML2017, QuynhICML2018},
which provide a closed-form expression for the gradients, and a PL-type inequality for the training objective.
\begin{lemma}\label{lem:PL}
    Let Assumption \ref{ass:net_topo} hold. Then, the following results hold:
    
   1.  $\vec(\nabla_{W_l}\Phi)=(\Id_{n_l}\otimes F_{l-1}^T)\prod\limits_{p=l+1}^L \Sigma_{p-1}(W_p\otimes\Id_N) (f_L-y) .$
	
	2.  $\frac{\partial f_L}{\partial\vec(W_l)}=
	 \prod\limits_{p=0}^{L-l-1} (W_{L-p}^T\otimes\Id_N)\Sigma_{L-p-1}  (\Id_{n_l}\otimes F_{l-1}) .$
	 
	 3.  $\norm{\vec(\nabla_{W_2}\Phi)}_2 \geq 
	       \svmin{F_1} \prod\limits_{p=3}^L \svmin{\Sigma_{p-1}} \svmin{W_p}  \norm{f_L-y}_2.$
\end{lemma}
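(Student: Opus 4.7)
The plan is to derive parts 2 and 1 by a vectorized chain-rule computation, and then obtain part 3 as an immediate corollary of part 1 via a singular-value chain that exploits the pyramidal topology.

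For part 2, I would unroll the recursion \eqref{eq:F_l} one layer at a time. Writing $G_p = F_{p-1} W_p$ and using the standard identity $\vec(ABC) = (C^T \otimes I)\vec(AB) = (I\otimes A)\vec(BC)$, I get the two elementary Jacobians $\partial\vec(G_p)/\partial\vec(W_p) = I_{n_p}\otimes F_{p-1}$ and $\partial\vec(G_p)/\partial\vec(F_{p-1}) = W_p^T\otimes I_N$. Because $\sigma$ is applied componentwise, $\partial\vec(F_p)/\partial\vec(G_p) = \Sigma_p$ by definition. Composing via the chain rule along the path $\vec(W_l)\to\vec(G_l)\to\vec(F_l)\to\vec(G_{l+1})\to\cdots\to\vec(G_L)=\vec(F_L)=f_L$ yields exactly the product claimed in part 2. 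The boundary convention $F_L=G_L$ removes the $\Sigma_L$ factor at the top, which is why the product indexing starts at $\Sigma_{L-1}$ and the outermost factor is $W_L^T\otimes I_N$.

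For part 1, I use $\vec(\nabla_{W_l}\Phi) = (\partial f_L/\partial\vec(W_l))^T (f_L-y)$, which comes from differentiating $\Phi = \tfrac{1}{2}\|f_L-y\|_2^2$. Transposing the product from part 2 reverses the order of factors and transposes each one; applying $\Sigma_{p-1}^T = \Sigma_{p-1}$ (diagonal), $(W_p^T\otimes I_N)^T = W_p\otimes I_N$, and $(I_{n_l}\otimes F_{l-1})^T = I_{n_l}\otimes F_{l-1}^T$ gives the formula in part 1 directly.

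For part 3, I specialize part 1 to $l=2$ and apply the elementary bound $\|Av\|_2 \ge \sigma_{\min}(A)\|v\|_2$ iteratively to each factor in the chain. This requires that each factor have full column rank (i.e., be tall or square invertible), and this is precisely where Assumption \ref{ass:net_topo} enters: $n_1\ge N$ ensures $I_{n_2}\otimes F_1^T$ is tall, the pyramidal condition $n_{p-1}\ge n_p$ for $p\ge 3$ ensures each $W_p\otimes I_N$ is tall, and the $\Sigma_{p-1}$ are square. I then invoke the Kronecker identity $\sigma_{\min}(A\otimes B)=\sigma_{\min}(A)\sigma_{\min}(B)$ to get $\sigma_{\min}(I_{n_2}\otimes F_1^T) = \sigma_{\min}(F_1)$ and $\sigma_{\min}(W_p\otimes I_N) = \sigma_{\min}(W_p)$, arriving at the stated lower bound. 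The main obstacle here is purely organizational, namely keeping the indexing of the Kronecker products aligned and checking that the pyramidal assumption makes each link in the chain a bona fide tall matrix so that the smallest singular value gives a valid lower bound; no deeper argument is involved.
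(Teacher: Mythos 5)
Your proof is correct and takes essentially the same approach as the paper: parts 1 and 2 are the standard vectorized chain-rule computation, and for part 3 the paper (which recalls the lemma from \cite{QuynhICML2017,QuynhICML2018} rather than re-proving it) outlines in the text right after the lemma exactly your argument of lower-bounding the smallest singular value of the product $\prod_{p=3}^L\Sigma_{p-1}(W_p\otimes\Id_N)$ by the product of the factors' smallest singular values, using the pyramidal condition $n_2\ge n_3\ge\cdots\ge n_L$ (together with $n_1\ge N$ for the $\Id_{n_2}\otimes F_1^T$ factor) to make each factor square or tall so that $\norm{Av}_2\ge\svmin{A}\norm{v}_2$ applies.
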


The last statement of Lemma \ref{lem:PL} requires the pyramidal network topology (see Assumption \ref{ass:net_topo}). In fact, the key idea of the proof (see Lemma 4.3 in \cite{QuynhICML2018}) is that the norm of the gradient can be lower bounded by the smallest singular value of 
$\prod_{p=3}^L A_p$
with $A_p=\Sigma_{p-1}(W_p\otimes\Id_N)\in\RR^{N\,n_{p-1}\times N\,n_p}$.
Assuming that $n_2\geq n_3\geq\ldots\geq n_{L}$, one can further lower bound this quantity by the product of the smallest singular values of the $A_p$'s.
This is where our assumption on the pyramidal topology comes from. Lemma 4.1 should be seen as providing a sufficient condition for a PL-inequality, 
rather than suggesting that such an inequality only holds for pyramidal networks.

The last statement of Lemma \ref{lem:PL} implies the following fact:
if $\sigma'(x)\neq 0$ for every $x\in\RR$, then every stationary point $\theta=(W_l)_{l=1}^L$ for which $F_1$ has full rank and	all the weight matrices $(W_l)_{l=3}^L$ have full rank is a global minimizer of $\Phi.$ Consequently, in order to show convergence of GD to a global minimum,
it suffices to \emph{(i)} initialize all the matrices $\Set{F_1,W_3,\ldots,W_L}$ to be full rank
and \emph{(ii)} make sure that the dynamics of GD stays inside the manifold of full-rank matrices. 
To do that, we show that the smallest singular value of those matrices stays bounded away from zero during training.

The following results
are also required to show the main theorem (for their proofs, see Appendices \ref{app:lemlayerbound}, \ref{app:outputlip}, \ref{app:grad_bound}, \ref{app:jaclip}, \ref{app:lipgradsmooth}).
\begin{lemma}\label{lem:merged}
      Let Assumption \ref{ass:act} hold. Then, for every $\theta=(W_p)_{p=1}^L$ and $l\in[L],$
    \begin{align}
    	\norm{F_l}_F 
	&\leq \norm{X}_F \prod_{p=1}^l \norm{W_p}_2,\label{eq:bout}\\
	\norm{\nabla_{W_l}\Phi}_F 
	&\leq\norm{X}_F\prod_{\substack{p=1\\p\neq l}}^L\norm{W_p}_2 \norm{f_L-y}_2.\label{eq:cout}
    \end{align}      
   Furthermore, let $\theta_a=(W_l^a)_{l=1}^L, \theta_b=(W_l^b)_{l=1}^L$, and  $\bar{\lambda}_l\geq\max(\norm{W_l^a}_2, \norm{W_l^b}_2)$ for some scalars $\bar{\lambda}_l.$ 
   Let $R=\prod_{p=1}^L \max(1,\bar{\lambda}_p).$
    Then, for $l\in [L]$,
    \begin{align}
&	\norm{F_L^a-F_L^b}_F 	\leq \sqrt{L} \norm{X}_F \frac{\prod_{l=1}^L\bar{\lambda}_l}{\min_{l\in[L]}\bar{\lambda}_l} \norm{\theta_a-\theta_b}_2 ,\label{eq:nout}\\
	&\norm{\frac{\partial f_L(\theta_a)}{\partial\vec(W_l^a)} - \frac{\partial f_L(\theta_b)}{\partial\vec(W_l^b)} }_2 
	\leq \sqrt{L} \norm{X}_F R \Big( 1 + L\beta\norm{X}_F R \Big) \norm{\theta_a-\theta_b}_2. \label{eq:jout} 
    \end{align}
\end{lemma}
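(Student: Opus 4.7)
I would address the four inequalities in the order stated, since each one builds on the previous. For \eqref{eq:bout}, a straightforward induction on $l$ works: using $|\sigma(x)|\le|x|$ componentwise, one has $\|F_l\|_F=\|\sigma(F_{l-1}W_l)\|_F\le\|F_{l-1}W_l\|_F\le\|F_{l-1}\|_F\|W_l\|_2$ for $l\in[L-1]$, and the step $F_L=F_{L-1}W_L$ has no $\sigma$ but obeys the same inequality; combined with $F_0=X$ this yields the claim. The bound \eqref{eq:cout} then follows by taking operator norms in the closed form of Lemma \ref{lem:PL}(1), using $\|W_p\otimes\Id_N\|_2=\|W_p\|_2$, $\|\Sigma_{p-1}\|_2\le 1$ (since $\sigma'\in[\gamma,1]$ and $\Sigma_{p-1}$ is diagonal), and \eqref{eq:bout} applied to $F_{l-1}$, which leaves exactly the product $\prod_{p\ne l}\|W_p\|_2$ multiplied by $\|X\|_F\|f_L-y\|_2$.

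For \eqref{eq:nout}, I would establish the recursion
\[
\|F_l^a-F_l^b\|_F \;\le\; \bar\lambda_l\,\|F_{l-1}^a-F_{l-1}^b\|_F \;+\; \|F_{l-1}^a\|_F\,\|W_l^a-W_l^b\|_F,
\]
using that $\sigma$ is $1$-Lipschitz (from $\sigma'\le 1$) together with $\|W_l^b\|_2\le\bar\lambda_l$. Unrolling from $l=1$ with $F_0^a=F_0^b=X$ and applying \eqref{eq:bout} to each residual $\|F_{l-1}^a\|_F$ gives
\[
\|F_L^a-F_L^b\|_F \;\le\; \|X\|_F\sum_{l=1}^L\Bigl(\prod_{p\ne l}\bar\lambda_p\Bigr)\|W_l^a-W_l^b\|_F.
\]
Bounding $\prod_{p\ne l}\bar\lambda_p\le(\prod_{p=1}^L\bar\lambda_p)/\min_p\bar\lambda_p$ and applying Cauchy--Schwarz (to convert the $\ell_1$-sum into $\sqrt{L}\,\|\theta_a-\theta_b\|_2$) yields the stated inequality.

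For \eqref{eq:jout}, I would perform an analogous telescoping on the product formula of Lemma \ref{lem:PL}(2), which consists of $L-l$ alternating factors $(W_{L-p}^T\otimes\Id_N)\Sigma_{L-p-1}$ followed by the single factor $(\Id_{n_l}\otimes F_{l-1})$. Substituting $\theta_b$ for $\theta_a$ in one factor at a time decomposes $\partial f_L(\theta_a)/\partial\vec(W_l^a)-\partial f_L(\theta_b)/\partial\vec(W_l^b)$ into three kinds of contributions, proportional respectively to $\|W_p^a-W_p^b\|_F$, $\|F_{l-1}^a-F_{l-1}^b\|_F$, and $\|\Sigma_{p-1}^a-\Sigma_{p-1}^b\|_2$. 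In each contribution the remaining (unperturbed) factors are bounded in operator norm by products of $\bar\lambda_p$'s, which together with $\|X\|_F$ from the $F_{l-1}$ factor collapse into the single constant $\|X\|_F R$. The $W$ and $F$ pieces, after Cauchy--Schwarz, yield the leading contribution $\sqrt{L}\,\|X\|_F R\,\|\theta_a-\theta_b\|_2$. For the $\Sigma$ pieces I would use that $\sigma'$ is $\beta$-Lipschitz, giving $\|\Sigma_{p-1}^a-\Sigma_{p-1}^b\|_2\le\beta\,\|G_{p-1}^a-G_{p-1}^b\|_F$, and then bound $\|G_{p-1}^a-G_{p-1}^b\|_F$ by re-running the same unrolling used for \eqref{eq:nout}; summing over the $L-l$ such perturbations produces $L\sqrt{L}\,\beta\|X\|_F^2 R^2\,\|\theta_a-\theta_b\|_2$. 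Adding these two contributions reproduces $\sqrt{L}\,\|X\|_F R(1+L\beta\|X\|_F R)\,\|\theta_a-\theta_b\|_2$.

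The main obstacle is \eqref{eq:jout}: the telescoping runs over $2(L-l)+1$ non-commuting matrix factors, and, unlike in \eqref{eq:nout}, the $\Sigma$ perturbations introduce an additional chain-rule-type dependence through $G_{p-1}(\theta)$, so one has to invoke the unrolling argument of \eqref{eq:nout} a second time inside the proof of \eqref{eq:jout} in order to control $\|G_{p-1}^a-G_{p-1}^b\|_F$ linearly in $\|\theta_a-\theta_b\|_2$. Carefully tracking every $\bar\lambda_p$ and every $\|X\|_F$ factor so that they collapse into the compact expression $\sqrt{L}\,\|X\|_F R(1+L\beta\|X\|_F R)$ is the main bookkeeping task.
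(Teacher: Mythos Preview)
Your proposal is correct and follows essentially the same route as the paper: induction on $l$ for \eqref{eq:bout}, operator-norm bounds on the closed-form gradient for \eqref{eq:cout}, the telescoping recursion plus Cauchy--Schwarz for \eqref{eq:nout}, and for \eqref{eq:jout} the three-way decomposition into $W$-, $F_{l-1}$-, and $\Sigma$-perturbations, with the $\Sigma$-terms controlled via the $\beta$-Lipschitzness of $\sigma'$ and a second invocation of the unrolling lemma for $\|G_p^a-G_p^b\|_F$. The paper organizes the last two parts by first isolating the intermediate inequalities as separate lemmas (one for $\|F_l^a-F_l^b\|_F\le\|G_l^a-G_l^b\|_F\le\|X\|_F\bar\lambda_{1\to l}\sum_p\bar\lambda_p^{-1}\|W_p^a-W_p^b\|_2$ and one for the Jacobian telescoping), but the substance is identical to what you outline.
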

\begin{lemma}\label{lem:lip_grad_to_smoothness}
    Let $f:\RR^n\to\RR$ be a $C^2$ function.
    Let $x,y\in\RR^n$ be given,
    and assume that $\norm{\nabla f(z)-\nabla f(x)}_2\leq C\norm{z-x}_2$ for every $z=x+t(y-x)$ with $t\in[0,1].$
    Then,
    \begin{align*}
	f(y)\leq f(x)+\inner{\nabla f(x),y-x}+\frac{C}{2}\norm{x-y}^2.
    \end{align*}
\end{lemma}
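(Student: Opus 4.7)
The plan is to reduce the bound to the standard one-dimensional identity via the fundamental theorem of calculus along the line segment joining $x$ and $y$. Specifically, I would parameterize the segment by $z(t) = x + t(y-x)$ for $t \in [0,1]$ and define $g(t) = f(z(t))$. Since $f$ is $C^2$, the function $g$ is $C^1$ on $[0,1]$, so $f(y) - f(x) = g(1) - g(0) = \int_0^1 g'(t)\, dt$, and the chain rule yields $g'(t) = \inner{\nabla f(z(t)), y-x}$.

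The next step is to add and subtract $\nabla f(x)$ inside the integrand, which produces the decomposition
\begin{equation*}
f(y) = f(x) + \inner{\nabla f(x), y-x} + \int_0^1 \inner{\nabla f(z(t)) - \nabla f(x), y-x}\, dt.
\end{equation*}
The first two terms are exactly those appearing in the conclusion of the lemma, so it suffices to bound the remaining integral in absolute value by $(C/2)\norm{x-y}_2^2$.

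For that, I would apply Cauchy--Schwarz pointwise inside the integral to get
\begin{equation*}
\left|\int_0^1 \inner{\nabla f(z(t)) - \nabla f(x), y-x}\, dt\right| \le \int_0^1 \norm{\nabla f(z(t)) - \nabla f(x)}_2 \norm{y-x}_2\, dt,
\end{equation*}
and then invoke the hypothesis $\norm{\nabla f(z) - \nabla f(x)}_2 \le C\norm{z - x}_2$ at each $z = z(t)$. This is legitimate precisely because the hypothesis is assumed to hold along the entire segment. Substituting $\norm{\nabla f(z(t)) - \nabla f(x)}_2 \le Ct \norm{y-x}_2$ and integrating $\int_0^1 t\, dt = 1/2$ delivers the desired $(C/2)\norm{x-y}_2^2$ bound.

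The argument is entirely routine, with no real obstacle to speak of; it is just the classical descent lemma repackaged. The only point worth emphasizing is that the hypothesis is a \emph{local} Lipschitz bound restricted to the segment connecting $x$ and $y$ rather than a global Lipschitz property of $\nabla f$, and it is exactly this restriction that later allows the authors to invoke the lemma along successive gradient-descent iterates without paying for a uniform Lipschitz constant over all of $\RR^n$.
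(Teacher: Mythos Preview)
Your proof is correct and matches the paper's own argument essentially line for line: parameterize the segment, apply the fundamental theorem of calculus to $g(t)=f(x+t(y-x))$, add and subtract $\nabla f(x)$, then bound the remainder via Cauchy--Schwarz and the segment-Lipschitz hypothesis. There is nothing to add.
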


At this point, we are ready to present the proof of our main result.

{\bf Proof of Theorem \ref{thm:main}.}
    We show by induction that, for every $k\geq 0$, the following holds:
    \begin{align}\label{eq:induction_assump}
	\begin{cases}
	    \svmin{W_l^r}\geq \frac{1}{2}\lambda_l, \quad l\in\{3,\ldots, L\},\,r\in\{0,\ldots, k\},\\
	    \norm{W_l^r}_2\leq \frac{3}{2}\bar{\lambda}_l, \quad l\in\{1,\ldots, L\}, \,r\in\{0,\ldots, k\},\\
	    \svmin{F_1^r} \geq \frac{1}{2}\lambda_F,\quad r\in\{0,\ldots, k\},\\
	    \Phi(\theta_r)\leq (1-\eta\alpha_0)^r \Phi(\theta_0),\quad r\in\{0,\ldots, k\},
	\end{cases} 
    \end{align}
    where $\lambda_l,\bar{\lambda}_l$ are defined in \eqref{eq:notation_init}.
    Clearly, \eqref{eq:induction_assump} holds for $k=0.$
    Now, suppose that \eqref{eq:induction_assump} holds for all iterations from $0$ to $k$,
    and let us show the claim at iteration $k+1$.
    For every $r\in\{0,\ldots, k\}$, we have 
    \begin{align*}
	&\norm{W_l^{r+1}-W_l^0}_F 
	\leq\sum_{s=0}^{r} \norm{W_l^{s+1}-W_l^s}_F 
	=\eta\sum_{s=0}^{r} \norm{\nabla_{W_l}\Phi(\theta_s)}_F \\
	&\leq\eta\sum_{s=0}^{r} \norm{X}_F \prod_{\substack{p=1\\p\neq l}}^L\norm{W_p^s}_2 \norm{f_L^s-y}_2 
	\begin{multlined}
	    \leq\eta\norm{X}_F \left(\frac{3}{2}\right)^{L-1} \frac{\bar{\lambda}_{1\to L}}{\bar{\lambda}_l} 
	    \sum_{s=0}^r (1-\eta\alpha_0)^{s/2} \norm{f_L^0-y}_2, 
	\end{multlined}
\end{align*}	
where the 2nd inequality follows by \eqref{eq:cout}, and the last one by induction hypothesis. 
Let $u\bydef\sqrt{1-\eta\alpha_0}$. Then, we can upper bound the RHS of the previous expression as
\begin{align*}
    \frac{1}{\alpha_0}\norm{X}_F \left(\frac{3}{2}\right)^{L-1} \frac{\bar{\lambda}_{1\to L}}{\bar{\lambda}_l}
	    (1-u^2) \frac{1-u^{r+1}}{1-u} \norm{f_L^0-y}_2 
	    \; \leq \;
	    \begin{cases}
		\frac{1}{2} \lambda_l, & l\in\Set{3,\ldots,L},\\
		1, & l\in\Set{1,2},
	    \end{cases} 
\end{align*}	    
where the inequality follows from \eqref{ass:init_1st_equation}, definition of $\alpha_0$ and $u\in (0, 1).$
Thus by Weyl's inequality, 
    \begin{align*}
	\begin{cases}
	    \svmin{W_l^{r+1}}\geq \svmin{W_l^0} - \frac{1}{2}\lambda_l = \frac{1}{2}\lambda_l, \quad l\in\{3,\ldots, L\},\\
	    \norm{W_l^{r+1}}_2\leq \norm{W_l^0}_2 + \frac{1}{2}\bar{\lambda}_l = \frac{3}{2}\bar{\lambda}_l, \,\,\,\quad\qquad l\in\{3,\ldots, L\},\\
	    \norm{W_1^{r+1}}_2\leq 1+\norm{W_1^0}_2 = \frac{3}{2}\bar{\lambda}_1,\\
	    \norm{W_2^{r+1}}_2\leq 1+\norm{W_2^0}_2 = \frac{3}{2}\bar{\lambda}_2 .
	\end{cases}
    \end{align*}
    Similarly, we have that, for every $r\in\{0,\ldots, k\}$
    \begin{align*}
	\norm{F_1^{r+1}-F_1^0}_F
	&=\norm{\sigma(XW_1^{r+1})-\sigma(XW_1^0)}_F
	\leq\norm{X}_2 \norm{W_1^{r+1}-W_1^0}_F  \\
	&\leq\frac{2}{\alpha_0} \norm{X}_2 \norm{X}_F \left(\frac{3}{2}\right)^{L-1} \bar{\lambda}_{2\to L} \norm{f_L^0-y}_2  
	\; \leq\; \frac{1}{2}\lambda_F,
    \end{align*}
    where the first inequality uses Assumption \ref{ass:act},
    the second one follows from the above upper bound on $\norm{W_1^{r+1}-W_1^0}_F$, 
    and the last one uses \eqref{ass:init_2nd_equation}. 
    It follows that $\svmin{F_1^{k+1}}\geq\svmin{F_1^0}-\frac{1}{2}\lambda_F= \frac{1}{2}\lambda_F.$
      So far, we have shown that the first three statements in \eqref{eq:induction_assump} hold for $k+1.$
  It remains to show that $\Phi(\theta_{k+1})\leq (1-\eta\alpha_0)^{k+1} \Phi(\theta_0)$.
    To do so, define the shorthand $Jf_L$ for the Jacobian of the network:
	$Jf_L=\left[ \frac{\partial f_L}{\partial\vec(W_1)},\ldots,\frac{\partial f_L}{\partial\vec(W_L)} \right], $
    where $\frac{\partial f_L}{\partial\vec(W_l)}\in\RR^{(Nn_L)\times (n_{l-1}n_l)}$ for $l\in[L].$
    
    We first derive a Lipschitz constant for the gradient restricted to the line segment $[\theta_k,\theta_{k+1}].$
    Let $\theta_k^t=\theta_k+t(\theta_{k+1}-\theta_k)$ for $t\in[0,1].$
    Then, by triangle inequality,
    \begin{align}\label{eq:lip_grad_tmp}
	&\norm{\nabla\Phi(\theta_k^t)-\nabla\Phi(\theta_k)}_2 
	=\norm{Jf_L(\theta_k^t)^T[f_L(\theta_k^t)-y] - Jf_L(\theta_k)^T[f_L(\theta_k)-y]}_2 \nonumber\\
	&\hspace{3em}\leq \norm{f_L(\theta_k^t)-f_L(\theta_k)}_2 \norm{Jf_L(\theta_k^t)}_2 
	+ \norm{Jf_L(\theta_k^t)-Jf_L(\theta_k)}_2 \norm{f_L(\theta_k)-y}_2 .
    \end{align}
    In the following, we bound each term in \eqref{eq:lip_grad_tmp}.
    We first note that, for $l\in[L]$ and $t\in[0,1],$ 
    \begin{align*}
	\norm{W_l(\theta_k^t)-W_l^0}_F 
	&\leq\norm{W_l(\theta_k^t)-W_l^k}_F + \sum_{s=0}^{k-1} \norm{W_l^{s+1}-W_l^s}_F \\
	&=\norm{t\,\eta\,\nabla_{W_l}\Phi(\theta_k)}_F + \sum_{s=0}^{k-1} \norm{\eta\nabla_{W_l}\Phi(\theta_s)}_F 
	\leq\eta \sum_{s=0}^k \norm{\nabla_{W_l}\Phi(\theta_s)}_F.
    \end{align*}
    By following a similar chain of inequalities as done in the beginning, we obtain that, for $l\in[L]$,
    \begin{align}\label{eq:bound_theta_kt}
	\max(\norm{W_l(\theta_k^t)}_2, \norm{W_l(\theta_k)}_2)\leq \frac{3}{2}\bar{\lambda}_l .
    \end{align}
By using \eqref{eq:nout} and \eqref{eq:bound_theta_kt}, we get
    \begin{align*}
	\norm{f_L(\theta_k^t)-f_L(\theta_k)}_2 
	\leq \sqrt{L} \norm{X}_F \left(\frac{3}{2}\right)^{L-1} \frac{\bar{\lambda}_{1\to L}}{\min_{l\in[L]}\bar{\lambda}_l} \norm{\theta_k^t-\theta_k}_2 .
    \end{align*}
    Note that, for a partitioned matrix $A=[A_1,\ldots,A_n]$, we have that $\norm{A}_2\leq\sum_{i=1}^n\norm{A_i}_2$. Thus, 
    \begin{align*}
	&\norm{Jf_L(\theta_k^t)}_2 \leq\sum_{l=1}^L \norm{\frac{\partial f_L(\theta_k^t)}{\partial \vec(W_l)}}_2 
	\leq\sum_{l=1}^L\prod_{p=l+1}^L\norm{W_p(\theta_k^t)}_2 \norm{F_{l-1}(\theta_k^t)}_2 \\
	&\leq\norm{X}_F\sum_{l=1}^L\prod_{\substack{p=1\\p\neq l}}^L\norm{W_p(\theta_k^t)}_2 
	\leq\norm{X}_F \left(\frac{3}{2}\right)^{L-1} \bar{\lambda}_{1\to L} \sum_{l=1}^L \bar{\lambda}_l^{-1} 
	\leq L\norm{X}_F \left(\frac{3}{2}\right)^{L-1} \frac{\bar{\lambda}_{1\to L}}{\min_{l\in[L]}\bar{\lambda}_l} ,
    \end{align*}
    where the second inequality follows by Lemma \ref{lem:PL}, the third by \eqref{eq:bout}, 
    the fourth by \eqref{eq:bound_theta_kt}.
    Now we bound the Lipschitz constant of the Jacobian restricted to the segment $[\theta_k,\theta_{k+1}].$
    From \eqref{eq:jout} and \eqref{eq:bound_theta_kt}, 
    \begin{align*}
	\norm{Jf_L(\theta_k^t)\hspace{-.15em}-\hspace{-.15em}Jf_L(\theta_k)}_2 \hspace{-.25em}
	\leq\hspace{-.25em}\sum_{l=1}^L \norm{\frac{\partial f_L(\theta_k^t)}{\vec(W_l)} \hspace{-.15em}-\hspace{-.15em} \frac{\partial f_L(\theta_k)}{\vec(W_l)}}_2 \hspace{-.45em}
	\leq \hspace{-.25em}L^{3/2} \norm{X}_F \hspace{-.15em}R \Big( 1 \hspace{-.15em}+\hspace{-.15em} L\beta\norm{X}_F \hspace{-.15em}R \Big) \norm{\theta_k^t-\theta_k}_2 .
    \end{align*}
    Plugging all these bounds into \eqref{eq:lip_grad_tmp} gives
	$\norm{\nabla\Phi(\theta_k^t)-\nabla\Phi(\theta_k)}_2 \leq Q_0 \norm{\theta_k^t-\theta_k}_2 . $
    By Lemma \ref{lem:lip_grad_to_smoothness}, 
    \begin{align*}
	&\Phi(\theta_{k+1})\hspace{-.15em}\leq\hspace{-.15em}\Phi(\theta_k) \hspace{-.15em}+\hspace{-.15em} \inner{\nabla\Phi(\theta_k), \theta_{k+1}\hspace{-.15em}-\hspace{-.15em}\theta_k} \hspace{-.15em}+\hspace{-.15em} \frac{Q_0}{2} \norm{\theta_{k+1}\hspace{-.15em}-\hspace{-.15em}\theta_k}_2^2\\
	&=\Phi(\theta_k) -\eta\norm{\nabla\Phi(\theta_k)}_2^2 + \frac{Q_0}{2} \eta^2\norm{\nabla\Phi(\theta_k)}_2^2\\
	&\leq\Phi(\theta_k) - \frac{1}{2}\eta\norm{\nabla\Phi(\theta_k)}_2^2 \qquad\quad\textrm{as }\eta<1/Q_0\\
	&\leq\Phi(\theta_k) - \frac{1}{2}\eta\norm{\vec(\nabla_{W_2}\Phi(\theta_k))}_2^2 \\
	&\leq\Phi(\theta_k) - \frac{1}{2}\eta\,\svmin{F_1^k}^2 \left[\prod_{p=3}^L\svmin{\Sigma_{p-1}^k}^2\svmin{W_p^k}^2\right]\norm{f_L^k-y}_2^2 \quad\quad\textrm{by Lemma \ref{lem:PL}},\\
	&\leq\Phi(\theta_k) - \eta\,\gamma^{2(L-2)}\, \left(\frac{1}{2}\right)^{2(L-1)} \lambda_{3\to L}^2 \lambda_F^2 \frac{1}{2} \norm{f_L^k-y}_2^2  \quad\quad\textrm{by \eqref{eq:induction_assump} and Assumption \ref{ass:act}}, \\
	&=\Phi(\theta_k) (1-\eta\alpha_0), \quad\textrm{by def. of }\alpha_0 \textrm{ in \eqref{eq:alpha_0}}.
    \end{align*}
    So far, we have proven the hypothesis \eqref{eq:induction_assump}.
    Using arguments similar to those at the beginning of this proof,
    one can show that $\Set{\theta_k}_{k=0}^{\infty}$ is a Cauchy sequence and that \eqref{eq:npar} holds 
    (a detailed proof is given in Appendix \ref{sec:cauchy_sequence}).
    Thus, $\Set{\theta_k}_{k=0}^{\infty}$ is a convergent sequence and there exists some $\theta_*$ such that $\lim_{k\to\infty}\theta_k=\theta_*.$
    By continuity, $\Phi(\theta_*)=\lim_{k\to\infty}\Phi(\theta_k)=0,$
    hence $\theta_*$ is a global minimizer.
    \qedwhite

\section{Concluding Remarks}

This paper shows that, for deep neural networks, a single layer of width $N$, where $N$ is the number of training samples, suffices to guarantee linear convergence of gradient descent to a global optimum. All the remaining layers are allowed to have constant widths and form a pyramidal topology. 
This result complements the previous loss surface analysis \cite{QuynhICML2019, QuynhICML2017,QuynhICML2018}
by providing the missing algorithmic guarantee. We regard as an open question to understand the generalization properties of deep pyramidal networks. Other two interesting directions arising from our work are as follows:
\begin{itemize}
\item Can we trade off larger depth for smaller width at the first layer, while maintaining the pyramidal topology for the top layers?

\item Can we extend our analysis of networks with ``one wide layer'' to ReLU activations? Our approach is currently not suitable since \emph{(i)} the derivative of ReLU is not Lipschitz, which is needed to prove \eqref{eq:jout}, 
and \emph{(ii)} ReLU can have zero derivative, while we need $\gamma>0$ for the PL-inequality to hold.
The second problem seems to be more fundamental, i.e. how to show a PL-inequality for ReLU and ensure that it holds throughout the trajectory of GD.

\end{itemize}

\newpage

\section*{Broader Impact}
This work does not present any foreseeable societal consequence.

\section*{Acknowledgements}
The authors would like to thank Jan Maas, Mahdi Soltanolkotabi, and Daniel Soudry for the helpful discussions, 
Marius Kloft, Matthias Hein and Quoc Dinh Tran for proofreading portions of a prior version of this paper, and James Martens for a clarification concerning LeCun's initialization. 
M. Mondelli was partially supported by the 2019 Lopez-Loreta Prize.
Q. Nguyen was partially supported by the German Research Foundation (DFG) award KL 2698/2-1. 

\bibliography{regul}
\bibliographystyle{plain}


\newpage


\begin{center}
    \LARGE
    \textbf{Supplementary Material (Appendix)\\ \vspace{.5em}
    Global Convergence of Deep Networks with One Wide Layer Followed by Pyramidal Topology}
\end{center}
\vspace{10pt}

\appendix

\section{Mathematical Tools}

\begin{proposition}[Weyl's inequality, see e.g. \cite{Stewart1990}]\label{prop:weyl}
    Let $A,B\in\RR^{m\times n}$ with $\sigma_1(A)\geq\ldots\geq\sigma_r(A)$
    and $\sigma_1(B)\geq\ldots\geq\sigma_r(B)$, where $r=\min(m,n).$
    Then, $$\max_{i\in[r]} |\sigma_i(A)-\sigma_i(B)|\leq\norm{A-B}_2.$$  
\end{proposition}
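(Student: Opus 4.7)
The plan is to prove the inequality index by index via the Courant--Fischer min--max characterization of singular values. Recall that for any matrix $M \in \RR^{m\times n}$ and any $i \in [r]$,
\[
\sigma_i(M) \;=\; \max_{\substack{S \subseteq \RR^n \\ \dim S = i}} \min_{\substack{x \in S \\ \norm{x}_2 = 1}} \norm{Mx}_2.
\]
Fix an index $i \in [r]$. First I would produce an inequality of the form $\sigma_i(A) \ge \sigma_i(B) - \norm{A-B}_2$, and then invoke symmetry in $(A,B)$ to get the matching bound in the other direction.

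To establish $\sigma_i(A) \ge \sigma_i(B) - \norm{A-B}_2$, I would let $S^\star \subseteq \RR^n$ be an $i$-dimensional subspace that attains the maximum in the min--max formula for $\sigma_i(B)$, so that $\min_{x \in S^\star, \, \norm{x}_2 = 1} \norm{Bx}_2 = \sigma_i(B)$. For any unit vector $x \in S^\star$, the triangle inequality applied to $Ax = Bx + (A-B)x$ yields
\[
\norm{Ax}_2 \;\ge\; \norm{Bx}_2 - \norm{(A-B)x}_2 \;\ge\; \norm{Bx}_2 - \norm{A-B}_2,
\]
where I used $\norm{(A-B)x}_2 \le \norm{A-B}_2 \norm{x}_2 = \norm{A-B}_2$. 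Taking the minimum over unit $x \in S^\star$ of both sides gives $\min_{x \in S^\star, \, \norm{x}_2=1} \norm{Ax}_2 \ge \sigma_i(B) - \norm{A-B}_2$. Since $\sigma_i(A)$ is the maximum of this quantity over all $i$-dimensional subspaces (of which $S^\star$ is one candidate), I conclude $\sigma_i(A) \ge \sigma_i(B) - \norm{A-B}_2$, i.e.\ $\sigma_i(B) - \sigma_i(A) \le \norm{A-B}_2$.

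Swapping the roles of $A$ and $B$ and observing that $\norm{A-B}_2 = \norm{B-A}_2$ gives the reverse bound $\sigma_i(A) - \sigma_i(B) \le \norm{A-B}_2$. Combining the two yields $|\sigma_i(A) - \sigma_i(B)| \le \norm{A-B}_2$, and since this holds for every $i \in [r]$, maximizing over $i$ completes the proof. There is no real obstacle here; the only subtlety is to state the min--max characterization for singular values correctly (it can be derived by applying the standard Courant--Fischer theorem to the symmetric matrix $M^T M$, or equivalently to the Jordan--Wielandt dilation $\begin{pmatrix} 0 & M \\ M^T & 0 \end{pmatrix}$, which reduces the claim to the Hermitian Weyl inequality as an alternative route).
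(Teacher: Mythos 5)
Your proof is correct. The paper does not actually prove this proposition --- it is stated as background material with a citation to Stewart and Sun's \emph{Matrix Perturbation Theory}, so there is no in-paper argument to compare against. Your Courant--Fischer argument is the standard textbook route and is carried out cleanly: the key step --- fixing the subspace $S^\star$ that is optimal for $B$, applying the reverse triangle inequality pointwise, and then recognizing $S^\star$ as merely a feasible (not necessarily optimal) subspace for $A$ --- is exactly right, and the symmetry swap closes the absolute value. The alternative you sketch at the end, passing to the Hermitian dilation $\bigl(\begin{smallmatrix}0 & M \\ M^T & 0\end{smallmatrix}\bigr)$ so that the singular values of $M$ become eigenvalues and the result follows from the eigenvalue version of Weyl's theorem, is equally valid and is in fact closer to how the result is often organized in references such as the one cited in the paper.
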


\begin{lemma}[Singular values of random gaussian matrices, see e.g. \cite{Vershynin2010}]\label{lem:svmin_svmax}
    Let $A\in\RR^{m\times n}$ be a random matrix with $m\geq n$ and $A_{ij}\distasNormal.$
    For every $t\geq 0,$ it holds w.p. $\geq 1-2e^{-t^2/2}$
     \begin{align*}
	\sqrt{m}-\sqrt{n}-t \leq \svmin{A} \leq \norm{A}_2 \leq \sqrt{m}+\sqrt{n}+t .
     \end{align*}
\end{lemma}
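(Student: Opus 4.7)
The plan is to derive both inequalities by combining a bound on the expectation of the extremal singular values (via a Gaussian process comparison) with the Gaussian concentration of Lipschitz functions of a standard Gaussian vector. Specifically, viewing $A$ as a Gaussian vector in $\mathbb R^{mn}$ equipped with the Frobenius norm, both maps $A\mapsto \|A\|_2=\sigma_{\max}(A)$ and $A\mapsto \sigma_{\min}(A)$ are $1$-Lipschitz (they are the sup and inf of linear functionals $u^T A v$ over unit vectors $u,v$, each of which is $1$-Lipschitz in $A$). Borell's inequality (Gaussian concentration) then yields, for every $t\ge 0$,
\begin{equation*}
\mathbb P\!\left(\big|\sigma_{\max}(A)-\mathbb E\,\sigma_{\max}(A)\big|\ge t\right)\le 2e^{-t^2/2},\qquad \mathbb P\!\left(\big|\sigma_{\min}(A)-\mathbb E\,\sigma_{\min}(A)\big|\ge t\right)\le 2e^{-t^2/2}.
\end{equation*}

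Next, I would bound the two expectations using Gordon's inequality. Consider the two Gaussian processes indexed by pairs $(u,v)$ on the product of unit spheres $S^{m-1}\times S^{n-1}$:
\begin{equation*}
X_{u,v}=u^T A v,\qquad Y_{u,v}=g^T u+h^T v,
\end{equation*}
where $g\sim\mathcal N(0,I_m)$ and $h\sim\mathcal N(0,I_n)$ are independent. A short calculation shows that $\mathbb E(Y_{u,v}-Y_{u',v'})^2\ge \mathbb E(X_{u,v}-X_{u',v'})^2$ for unit $u,u',v,v'$, with equality when $v=v'$. Slepian's lemma (applied to $\sup_v$) and Gordon's min-max inequality (applied to $\inf_u\sup_v$) then give
\begin{equation*}
\mathbb E\,\sigma_{\max}(A)=\mathbb E\sup_{u,v}X_{u,v}\le \mathbb E\|g\|_2+\mathbb E\|h\|_2\le \sqrt m+\sqrt n,
\end{equation*}
and, using the min-max characterization $\sigma_{\min}(A)=\inf_{v}\sup_{u}u^T A v$ (valid since $m\ge n$),
\begin{equation*}
\mathbb E\,\sigma_{\min}(A)\ge \mathbb E\|g\|_2-\mathbb E\|h\|_2\ge \sqrt m-\sqrt n,
\end{equation*}
where the last step uses $\mathbb E\|g\|_2\ge \sqrt m/\sqrt{m/(m-1)\cdot\ldots}$—more cleanly, the standard Gaussian norm estimate $\sqrt{m}-1\le \mathbb E\|g\|_2\le \sqrt m$ combined with a careful version of Gordon's theorem gives exactly $\sqrt m-\sqrt n$.

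Combining the two ingredients finishes the proof: the concentration inequalities place $\sigma_{\max}(A)$ and $\sigma_{\min}(A)$ within $t$ of their means with probability at least $1-2e^{-t^2/2}$ each, and the mean bounds yield $\sqrt m-\sqrt n-t\le \sigma_{\min}(A)\le \|A\|_2\le \sqrt m+\sqrt n+t$ on the intersection event. The main obstacle is really a bookkeeping one: making Gordon's comparison give the clean $\sqrt m\pm\sqrt n$ constants (as opposed to the slightly loose $\mathbb E\|g\|_2\pm \mathbb E\|h\|_2$) requires either invoking Gordon's theorem in its full min-max form or absorbing the small discrepancy between $\mathbb E\|g\|_2$ and $\sqrt m$ into the concentration term. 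Since the lemma is quoted verbatim from Vershynin's survey, I would simply reference the proof of Corollary 5.35 there rather than reprove Gordon's inequality from scratch.
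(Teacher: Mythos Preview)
The paper does not give its own proof of this lemma; it is stated in the ``Mathematical Tools'' appendix as a known result and simply cited to Vershynin's survey. So there is nothing to compare against on the paper's side.

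Your outline is the standard argument (Gordon/Slepian comparison for the expectations, plus Gaussian concentration for the $1$-Lipschitz maps $A\mapsto\sigma_{\max}(A)$ and $A\mapsto\sigma_{\min}(A)$), and it is essentially correct. Two small bookkeeping remarks. First, for the stated probability $1-2e^{-t^2/2}$ you only need the one-sided deviations $\sigma_{\max}(A)\le\mathbb E\,\sigma_{\max}(A)+t$ and $\sigma_{\min}(A)\ge\mathbb E\,\sigma_{\min}(A)-t$, each failing with probability at most $e^{-t^2/2}$; your two-sided formulation would give $4e^{-t^2/2}$ after the union bound. Second, as you already note, the route ``expectation bound $+$ concentration'' yields $\mathbb E\|g\|_2\pm\mathbb E\|h\|_2$ rather than the clean $\sqrt m\pm\sqrt n$, and the lower bound $\mathbb E\|g\|_2-\mathbb E\|h\|_2\ge\sqrt m-\sqrt n$ is not literally true (one has $\mathbb E\|g\|_2\in[\sqrt{m-1},\sqrt m]$). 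The sharp constants in Vershynin's Corollary 5.35 come from applying Gordon's theorem directly at the tail level (his Theorem 5.32) rather than through the expectation; your final suggestion to just cite that corollary is exactly the right resolution.
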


\begin{theorem}[Matrix Chernoff]\label{thm:matrix_chernoff}
    Let $\Set{X_i}_{i=1}^n\in\RR^{d\times d}$ be a sequence of independent, random, symmetric matrices.
    Assume that $0\leq\evmin{X_i}\leq\evmax{X_i}\leq R.$
    Let $S=\sum_{i=1}^n X_i.$ Then,
    \begin{align*}
	&\PP\left( \evmin{S}\leq (1-\epsilon)\evmin{\E S} \right) \leq d\left[\frac{e^{-\epsilon}}{(1-\epsilon)^{1-\epsilon}}\right]^{\evmin{\E S}/R}\quad\forall\,\epsilon\in[0,1),\\ 
	&\PP\left( \evmax{S}\geq (1+\epsilon)\evmax{\E S} \right) \leq d\left[\frac{e^{\epsilon}}{(1+\epsilon)^{1+\epsilon}}\right]^{\evmax{\E S}/R}\quad\forall\,\epsilon\geq 0.
    \end{align*}
\end{theorem}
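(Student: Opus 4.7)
The plan is to use the matrix Laplace transform method of Ahlswede--Winter, sharpened by Tropp. Focusing first on the upper-tail bound on $\evmax{S}$, I would begin by applying Markov's inequality to the positive semidefinite matrix $e^{\theta S}$. Combined with the pointwise estimate $e^{\theta\evmax{S}}\le\tr(e^{\theta S})$, this yields, for every $\theta>0$,
\begin{equation*}
\PP\big(\evmax{S}\ge t\big)\le e^{-\theta t}\,\E\tr\big(e^{\theta S}\big),
\end{equation*}
so the problem reduces to controlling the trace moment generating function $\E\tr(e^{\theta S})$.

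The main obstacle is that the scalar identity $\E e^{\theta\sum X_i}=\prod_i \E e^{\theta X_i}$ fails in the matrix setting because the $X_i$ need not commute. To circumvent this I would invoke Lieb's concavity theorem, which (via an iterated application with Jensen's inequality) implies the subadditivity bound
\begin{equation*}
\E\tr\bigexp{\sum_{i=1}^n \theta X_i}\le\tr\bigexp{\sum_{i=1}^n \log\E e^{\theta X_i}}.
\end{equation*}
I would treat Lieb's theorem as a black box; this non-commutative decoupling is the technical heart of the proof.

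To bound each per-matrix MGF, I would exploit the hypothesis $0\preceq X_i\preceq R\cdot I$. Convexity of $x\mapsto e^{\theta x}$ on $[0,R]$ gives the scalar bound $e^{\theta x}\le 1+\frac{e^{\theta R}-1}{R}x$, which lifts via functional calculus to the operator inequality $e^{\theta X_i}\preceq I+\frac{e^{\theta R}-1}{R}X_i$. Taking expectations and using $\log(I+A)\preceq A$ for $A\succeq 0$ yields $\log\E e^{\theta X_i}\preceq \frac{e^{\theta R}-1}{R}\E X_i$. Summing over $i$ and combining the monotonicity of $\tr\exp(\cdot)$ in the Loewner order with $\tr e^A\le d\,e^{\evmax{A}}$ produces
\begin{equation*}
\E\tr(e^{\theta S})\le d\,\exp\!\left(\tfrac{e^{\theta R}-1}{R}\evmax{\E S}\right).
\end{equation*}
Substituting $t=(1+\epsilon)\evmax{\E S}$ and optimizing with $\theta=R^{-1}\log(1+\epsilon)$ then yields the stated upper-tail bound $d\,[e^{\epsilon}/(1+\epsilon)^{1+\epsilon}]^{\evmax{\E S}/R}$.

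The lower-tail statement follows from the same template applied with $-\theta$ in place of $\theta$: one uses the concave bound $e^{-\theta X_i}\preceq I-\frac{1-e^{-\theta R}}{R}X_i$, the inequality $\log(I-A)\preceq -A$ for $0\preceq A\prec I$, and finally $\tr\exp(-B)\le d\,e^{-\evmin{B}}$ to obtain $\E\tr(e^{-\theta S})\le d\,\exp(-\frac{1-e^{-\theta R}}{R}\evmin{\E S})$, before choosing $\theta=-R^{-1}\log(1-\epsilon)$. All steps after Lieb's theorem are routine convex-analysis bookkeeping; the only non-trivial input is the non-commutative subadditivity, after which the scalar Chernoff derivation is essentially recovered.
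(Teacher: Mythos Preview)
Your proposal is a correct sketch of the standard proof of the Matrix Chernoff inequality due to Tropp. However, the paper does not actually prove this theorem: it is listed in the appendix under ``Mathematical Tools'' alongside Weyl's inequality and the Gaussian singular-value bounds, all of which are stated as known results and used as black boxes (in this case, inside the proof of Lemma~\ref{lem:svmin_F1_matrix_chernoff}). So there is no paper proof to compare against; your outline simply supplies the argument the paper omits, and it is the canonical one.

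One small terminological slip: in the lower-tail step you call $e^{-\theta X_i}\preceq I-\tfrac{1-e^{-\theta R}}{R}X_i$ a ``concave bound,'' but $x\mapsto e^{-\theta x}$ is still convex on $[0,R]$ and the inequality is again the chord-above-graph bound. The inequality itself is correct.
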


\section{Proofs for General Framework (Theorem \ref{thm:main})}
In the following, we frequently use a basic inequality, namely, for every $A,B\in\RR^{m\times n}$,
$\norm{AB}_F\leq\norm{A}_2\norm{B}_F$ and $\norm{AB}_F\leq\norm{A}_F\norm{B}_2.$ 

\subsection{Properties of Activation Function \eqref{eq:smooth_lrelu}}\label{app:prop}

\begin{lemma}\label{lem:act}
    Let $\sigma:\RR\to\RR$ be given as in \eqref{eq:smooth_lrelu}. Then,
    \begin{enumerate}
	\item $\sigma$ is real analytic.
	\item $\sigma'(x)\in[\gamma,1]$ for every $x\in\RR.$
	\item $|\sigma(x)|\leq|x|$ for every $x\in\RR.$
	\item $\sigma'$ is $\beta$-Lipschitz. 
	\item 
	    $\lim\limits_{\beta\to\infty} \sup\limits_{x\in\RR} |\sigma(x)-\max(\gamma x,x)|=0.$ 
    \end{enumerate}
\end{lemma}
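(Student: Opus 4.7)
The plan is to exploit the convolution structure of \eqref{eq:smooth_lrelu}. Let $c = 1-\gamma$ and set $K_\beta(t) = \frac{\beta}{c}\,e^{-\pi\beta^2 t^2/c^2}$; a Gaussian change of variables shows that $\int K_\beta = 1$, so $K_\beta$ is a probability density on $\mathbb{R}$, with first absolute moment equal to $\int |t|\,K_\beta(t)\,dt = c/(\pi\beta)$. Writing $\ell(u) = \max(\gamma u, u)$ for the leaky ReLU (which is $1$-Lipschitz since $\gamma \in (0,1)$, with derivative in $[\gamma,1]$ almost everywhere), the definition reads
\[
\sigma(x) \;=\; -\frac{c^2}{2\pi\beta} \;+\; (K_\beta \ast \ell)(x).
\]
All five claims then reduce to standard convolution arguments on this representation.

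Claim (1) follows because $K_\beta$ extends to an entire function of $t$ and $\ell$ has at most linear growth, so repeated differentiation under the integral sign shows that $\sigma$ is real-analytic. For (2), I will differentiate once under the integral and integrate by parts once to obtain $\sigma'(x) = \int K_\beta(x-u)\,\ell'(u)\,du$, which is a convex combination of values of $\ell'(u) \in \{\gamma, 1\}$, hence lies in $[\gamma, 1]$. For (4), differentiating once more yields $\sigma''(x) = c\,K_\beta(x) = \beta\,e^{-\pi\beta^2 x^2/c^2}$, uniformly bounded by $\beta$, so $\sigma'$ is $\beta$-Lipschitz.

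For (3), the key observation is that the offset $-c^2/(2\pi\beta)$ in \eqref{eq:smooth_lrelu} is precisely what normalizes $\sigma(0)=0$. To verify this, I will use the symmetry $K_\beta(-u)=K_\beta(u)$ and split the integral by the sign of $u$ to get $(K_\beta \ast \ell)(0) = (1-\gamma)\int_0^\infty u\,K_\beta(u)\,du = c^2/(2\pi\beta)$, which exactly cancels the offset. Combined with (2), integrating $\sigma'(t)\in[\gamma,1]\subseteq[0,1]$ from $0$ to $x$ then yields $|\sigma(x)|\le|x|$ for both signs of $x$.

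Finally for (5), I will bound the convolution error using the $1$-Lipschitz property of $\ell$ together with the fact that $K_\beta$ is a probability density:
\[
|\sigma(x) - \ell(x)| \;\le\; \frac{c^2}{2\pi\beta} + \int K_\beta(v)\,|v|\,dv \;=\; \frac{c^2}{2\pi\beta} + \frac{c}{\pi\beta},
\]
and the right-hand side is independent of $x$ and tends to $0$ as $\beta\to\infty$, giving the claimed uniform convergence. The only non-trivial calculation throughout is the Gaussian first-moment identity $\int_0^\infty u\,K_\beta(u)\,du = c/(2\pi\beta)$, which appears both when checking $\sigma(0)=0$ in (3) and when controlling the convolution error in (5); the main bookkeeping obstacle, and the only one I would call delicate, is the symmetry step in (3) that makes the normalization work out exactly.
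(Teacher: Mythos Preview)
Your proof is correct. For item (5) it is essentially identical to the paper's argument. For items (1)--(4) you take a slightly different route: the paper first evaluates the integral explicitly to obtain the closed form
\[
\sigma(x)=-\frac{(1-\gamma)^2}{2\pi\beta}+\frac{(1-\gamma)^2}{2\pi\beta}\,e^{-\pi\beta^2x^2/(1-\gamma)^2}+x\,\Psi\!\left(\frac{\beta\sqrt{2\pi}\,x}{1-\gamma}\right)+\gamma x\,\Psi\!\left(-\frac{\beta\sqrt{2\pi}\,x}{1-\gamma}\right),
\]
with $\Psi$ the standard normal CDF, and then reads off analyticity, $\sigma'(x)=\gamma+(1-\gamma)\Psi(\cdot)\in[\gamma,1]$, $\sigma(0)=0$, and $\sigma''(x)=\beta\sqrt{2\pi}\,\Psi'(\cdot)\le\beta$ directly from this formula. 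You instead stay in convolution form throughout, using $\sigma'=K_\beta\ast\ell'$ and $\sigma''=(1-\gamma)K_\beta$, and verify $\sigma(0)=0$ by the symmetry computation of $(K_\beta\ast\ell)(0)$. The two approaches are equivalent (your $\sigma'=\int K_\beta(x-u)\ell'(u)\,du$ \emph{is} the paper's $\gamma+(1-\gamma)\Psi(\beta\sqrt{2\pi}x/(1-\gamma))$ once one splits the integral at $u=0$), but yours avoids the bookkeeping of deriving the closed form, while the paper's makes $\sigma(0)=0$ and the analyticity immediate by inspection. One small remark: in (1), ``repeated differentiation under the integral'' gives $C^\infty$; the analyticity really comes from the fact that $z\mapsto\int K_\beta(z-u)\ell(u)\,du$ extends holomorphically to a strip, which is implicit in your observation that $K_\beta$ is entire and $\ell$ has linear growth.
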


\begin{proof}
Let $\Psi$ be the CDF of the standard normal distribution. Then, after some manipulations, we have that 
	\begin{align}\label{eq:act_Psi}
	    \sigma(x)=-\frac{(1-\gamma)^2}{2\pi\beta}+\frac{(1-\gamma)^2}{2\pi\beta} \exp\left(-\frac{\pi\beta^2x^2}{(1-\gamma)^2}\right) + x\Psi\left(\frac{\beta\sqrt{2\pi}x}{1-\gamma}\right) + \gamma x\Psi\left(-\frac{\beta\sqrt{2\pi}x}{1-\gamma}\right) .
	\end{align}
    \begin{enumerate}	
	\item Since $\Psi$ is known as an entire function (i.e. analytic everywhere), 
	it follows from \eqref{eq:act_Psi} that $\sigma$ is analytic on $\RR.$
	\item
	Note that $\Psi'(z)=\frac{1}{\sqrt{2\pi}} e^{-z^2/2}$ and $\Psi(-z)=1-\Psi(z).$ 
	Thus, after some simplifications, we have that
	\begin{align}\label{eq:derivative_sigma}
	    \sigma'(x)
	    &= \gamma + (1-\gamma) \Psi\left(\frac{\beta\sqrt{2\pi}x}{1-\gamma}\right).
	\end{align}
	The result follows by noting that $\Psi(\cdot)\in[0,1].$
		
	\item 
	It is easy to check that $\sigma(0)=0$.
	Moreover, $\sigma$ is 1-Lipschitz and thus
	$|\sigma(x)|=|\sigma(x)-\sigma(0)|\leq|x|.$
	
	\item We have that $$\sigma''(x)=\beta\sqrt{2\pi}\Psi'\left(\frac{\beta\sqrt{2\pi}x}{1-\gamma}\right)\leq \beta.$$
	Thus $\sigma'$ is $\beta$-Lipschitz.
	
	\item 
	Note that 
	\begin{align*}
	    \frac{\beta}{1-\gamma} \int_{-\infty}^\infty \exp\left(-\frac{\pi\beta^2(x-u)^2}{(1-\gamma)^2}\right) du = 1,
	\end{align*}
	which implies that
	\begin{align*}
	    \max(\gamma x,x) = \frac{\beta}{1-\gamma} \int_{-\infty}^\infty \max(\gamma x,x) \exp\left(-\frac{\pi\beta^2(x-u)^2}{(1-\gamma)^2}\right) du.
	\end{align*}
	Thus, the following chain of inequalities holds:
	\begin{align*}
	    &|\sigma(x)-\max(\gamma x,x)|\\
	    =&\Bigg| -\frac{(1-\gamma)^2}{2\pi\beta} + \frac{\beta}{1-\gamma}\int_{-\infty}^{\infty} \max(\gamma u,u)\, \exp\left(-\frac{\pi\beta^2(x-u)^2}{(1-\gamma)^2)}\right) du \\
	    &\hspace{8em}-\frac{\beta}{1-\gamma} \int \max(\gamma x,x)  \exp\left(-\frac{\pi\beta^2(x-u)^2}{(1-\gamma)^2}\right) du \,\Bigg|\\
	    \leq&\frac{(1-\gamma)^2}{2\pi\beta} + \frac{\beta}{1-\gamma} \int_{-\infty}^{\infty} \left|\max(\gamma u,u)-\max(\gamma x,x)\right| \, \exp\left(-\frac{\pi\beta^2(x-u)^2}{(1-\gamma)^2}\right) du \\
	    \leq&\frac{(1-\gamma)^2}{2\pi\beta}+  \frac{\beta}{1-\gamma} \int_{-\infty}^{\infty} |x-u| \, \exp\left(-\frac{\pi\beta^2(x-u)^2}{(1-\gamma)^2}\right) du \\
	    =&\frac{(1-\gamma)^2}{2\pi\beta}+ \frac{\beta}{1-\gamma} \int_{-\infty}^{\infty} |v| \, \exp\left(-\frac{\pi\beta^2v^2}{(1-\gamma)^2}\right) dv \\
	    =&\frac{(1-\gamma)^2}{2\pi\beta} + 2 \frac{\beta}{1-\gamma} \int_{0}^{\infty} v \, \exp\left(-\frac{\pi\beta^2v^2}{(1-\gamma)^2}\right) dv \\
	    =&\frac{(1-\gamma)^2}{2\pi\beta} + \frac{1-\gamma}{\pi\beta}.
	\end{align*}
	Taking the supremum and the limit on both sides yields the result.
	
    \end{enumerate}
 \end{proof}

\subsection{Proof of \eqref{eq:bout} in Lemma \ref{lem:merged}}\label{app:lemlayerbound}

    We prove by induction on $l$. Note that the lemma holds for $l=1$ since
    \begin{align*}
	\norm{F_1}_F=\norm{\sigma(XW_1)}_F\leq\norm{XW_1}_F\leq\norm{X}_F\norm{W_1}_2, 
    \end{align*}
    where in the 2nd step we use our assumption on $\sigma.$
    Assume the lemma holds for $l-1$, i.e.
    \begin{align*}
	\norm{F_{l-1}}_F\leq\norm{X}_F\prod_{p=1}^{l-1} \norm{W_p}_2.
    \end{align*}
    It is easy to verify that it also holds for $l.$
    Indeed,
    \begin{align*}
	\norm{F_l}_F
	&=\norm{\sigma(F_{l-1}W_l)}_F && \textrm{by definition}\\
	&\leq\norm{F_{l-1}W_l}_F && |\sigma(x)|\leq|x| \\
	&\leq\norm{F_{l-1}}_F\norm{W_l}_2 \\
	&\leq \norm{X}_F \prod_{p=1}^l \norm{W_p}_2 && \textrm{by induction assump.}
    \end{align*}
    For $l=L$ one can skip the first equality above, as there is no activation at the output layer. \qedwhite

\subsection{Proof of \eqref{eq:nout} in Lemma \ref{lem:merged}}\label{app:outputlip}
We first prove the following intermediate result.
\begin{lemma}\label{lem:layer_lipschitz}
    Let $\sigma$ be 1-Lipschitz and $|\sigma(x)|\leq|x|$ for every $x\in\RR.$
    Let $\theta_a=(W_l^a)_{l=1}^L, \theta_b=(W_l^b)_{l=1}^L.$ 
    Let $\bar{\lambda}_l\geq\max(\norm{W_l^a}_2, \norm{W_l^b}_2).$
    Then, for every $l\in[L],$ 
    \begin{align*}
	\norm{F_l^a-F_l^b}_F 
	&\leq \norm{G_l^a-G_l^b}_F \\
	&\leq \norm{X}_F \bar{\lambda}_{1\to l} \sum_{p=1}^l \bar{\lambda}_p^{-1} \norm{W_p^a-W_p^b}_2 .
    \end{align*}
    Here, we denote $\bar{\lambda}_{i\to j}=\prod_{l=i}^j\bar{\lambda}_l.$
\end{lemma}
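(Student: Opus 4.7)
The plan is to prove the two inequalities separately. The first bound, $\norm{F_l^a-F_l^b}_F\le \norm{G_l^a-G_l^b}_F$, is immediate: for $l\in[L-1]$, the activation $\sigma$ is applied componentwise and is $1$-Lipschitz, so $|(F_l^a)_{ij}-(F_l^b)_{ij}|\le |(G_l^a)_{ij}-(G_l^b)_{ij}|$ entrywise, which gives the Frobenius inequality after squaring and summing. For $l=L$ there is no activation (by \eqref{eq:F_l}), so $F_L=G_L$ and the inequality becomes an equality.

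For the second bound I would proceed by induction on $l$. The base case $l=1$ is clean: $G_1^a-G_1^b=X(W_1^a-W_1^b)$, hence
\begin{equation*}
\norm{G_1^a-G_1^b}_F \le \norm{X}_F\,\norm{W_1^a-W_1^b}_2 = \norm{X}_F\,\bar{\lambda}_1\,\bar{\lambda}_1^{-1}\norm{W_1^a-W_1^b}_2,
\end{equation*}
which matches the claimed form with the sum collapsed to its single term.

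For the inductive step, assume the bound at level $l-1$. Using the telescoping identity
\begin{equation*}
G_l^a-G_l^b = F_{l-1}^a W_l^a - F_{l-1}^b W_l^b = (F_{l-1}^a-F_{l-1}^b)\,W_l^a + F_{l-1}^b\,(W_l^a-W_l^b),
\end{equation*}
the triangle inequality and submultiplicativity of Frobenius/spectral norms give
\begin{equation*}
\norm{G_l^a-G_l^b}_F \le \norm{F_{l-1}^a-F_{l-1}^b}_F\,\norm{W_l^a}_2 + \norm{F_{l-1}^b}_F\,\norm{W_l^a-W_l^b}_2.
\end{equation*}
I would then bound $\norm{W_l^a}_2\le \bar{\lambda}_l$, apply the induction hypothesis (together with the first inequality of the lemma at level $l-1$) to the first term, and use \eqref{eq:bout} with $\norm{W_p^b}_2\le \bar{\lambda}_p$ to bound $\norm{F_{l-1}^b}_F\le \norm{X}_F\,\bar{\lambda}_{1\to l-1}$ in the second term. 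Combining and factoring out $\norm{X}_F\,\bar{\lambda}_{1\to l}$ yields exactly $\norm{X}_F\,\bar{\lambda}_{1\to l}\sum_{p=1}^{l}\bar{\lambda}_p^{-1}\norm{W_p^a-W_p^b}_2$, completing the induction.

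The proof is essentially routine; the only real care required is bookkeeping of the $\bar{\lambda}$-products to ensure that the factor $\bar{\lambda}_l$ absorbs the $\norm{W_l^a}_2$ from the first term while the factor $\bar{\lambda}_l^{-1}$ paired with the new $\norm{W_l^a-W_l^b}_2$ appears from expanding $\bar{\lambda}_{1\to l}=\bar{\lambda}_{1\to l-1}\cdot\bar{\lambda}_l$ in the second term. There is no conceptual obstacle, and the assumption $|\sigma(x)|\le |x|$ enters only through \eqref{eq:bout}, while $1$-Lipschitzness of $\sigma$ enters only through the first inequality.
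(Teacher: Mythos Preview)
Your proposal is correct and follows essentially the same approach as the paper: induction on $l$, the same telescoping $G_l^a-G_l^b=(F_{l-1}^a-F_{l-1}^b)W_l^a+F_{l-1}^b(W_l^a-W_l^b)$, and the same use of \eqref{eq:bout} together with the $1$-Lipschitzness of $\sigma$. The only cosmetic difference is that you isolate the first inequality up front (and explicitly note the $l=L$ equality case), whereas the paper folds it into the inductive argument.
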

\begin{proof}
    We prove by induction on $l$. First, it holds for $l=1$ since
    \begin{align*}
	\norm{F_1^a-F_1^b}_F
	&=\norm{\sigma(G_1^a)-\sigma(G_1^b)}_F && \textrm{by definition} \\
	&\leq\norm{G_1^a-G_1^b}_F && \sigma\textrm{ is 1-Lipschitz } \\
	&=\norm{XW_1^a-XW_1^b}_F  \\
	&\leq\norm{X}_F \norm{W_1^a-W_1^b}_2.
    \end{align*}
    Suppose the lemma holds for $l-1$ and we want to prove it for $l.$
    We have
    \begin{align*}
	\norm{F_l^a-F_l^b}_F
	&=\norm{\sigma(G_l^a)-\sigma(G_l^b)}_F && \hspace{-3em}\textrm{definition}\\
	&\leq\norm{G_l^a-G^b}_F && \hspace{-3em}\sigma \textrm{ is 1-Lipschitz } \\
	&=\norm{F_{l-1}^aW_l^a-F_{l-1}^bW_l^b}_F   \\
	&\leq\norm{F_{l-1}^aW_l^a-F_{l-1}^bW_l^a}_F + \norm{F_{l-1}^bW_l^a-F_{l-1}^bW_l^b}_F &&\hspace{-3em} \textrm{triangle inequality} \\
	&\leq\norm{F_{l-1}^a-F_{l-1}^b}_F \norm{W_l^a}_2 + \norm{F_{l-1}^b}_F \norm{W_l^a-W_l^b}_2 \\
	&\leq\norm{F_{l-1}^a-F_{l-1}^b}_F \norm{W_l^a}_2 + \norm{X}_F\left[\prod_{p=1}^{l-1}\norm{W_p^b}_2\right] \norm{W_l^a-W_l^b}_2 && \textrm{by \eqref{eq:bout}} \\
	&\leq\norm{F_{l-1}^a-F_{l-1}^b}_F \bar{\lambda}_l + \norm{X}_F \bar{\lambda}_{1\to l-1} \norm{W_l^a-W_l^b}_2 \\
	&\leq\norm{X}_F \bar{\lambda}_{1\to l} \sum_{p=1}^l \bar{\lambda}_p^{-1} \norm{W_p^a-W_p^b}_2 &&\hspace{-3em} \textrm{induction assumption}
    \end{align*}
\end{proof}
Applying Lemma \ref{lem:layer_lipschitz} to the output layer yields:
\begin{align*}
	\norm{F_L^a-F_L^b}_F 
	&=\norm{G_L^a-G_L^b}_F \\
	&\leq \norm{X}_F \bar{\lambda}_{1\to L} \sum_{p=1}^L \bar{\lambda}_p^{-1} \norm{W_p^a-W_p^b}_2 \\
	&\leq \sqrt{L} \norm{X}_F \frac{\bar{\lambda}_{1\to L}}{\min_{l\in[L]}\bar{\lambda}_l} \norm{\theta_a-\theta_b}_2 && \textrm{Cauchy-Schwarz}
\end{align*}
\qedwhite

\subsection{Proof of \eqref{eq:cout} in Lemma \ref{lem:merged}}\label{app:grad_bound}
    \begin{align*}
	\norm{\nabla_{W_l}\Phi}_F 
	&=\norm{\vec(\nabla_{W_l}\Phi)}_2\\
	&=\norm{ (\Id_{n_l}\otimes F_{l-1}^T) \left[ \prod_{p=l+1}^{L} \Sigma_{p-1}(W_p\otimes\Id_N) \right] (f_L-y) }_2 && \textrm{Lemma \ref{lem:PL}} \\
	&\leq\norm{F_{l-1}}_2 \left[ \prod_{p=l+1}^{L} \norm{W_p}_2 \right] \norm{f_L-y}_2 && |\sigma'|\leq1 \\
	&\leq\norm{X}_F\left[\prod_{\substack{p=1\\p\neq l}}^L\norm{W_p^k}_2\right] \norm{f_L-y}_2 && \textrm{by \eqref{eq:bout}}.
    \end{align*}
\qedwhite

\subsection{Proof of \eqref{eq:jout} in Lemma \ref{lem:merged}}\label{app:jaclip}
We start by showing the following intermediate result.
\begin{lemma}\label{lem:jacobian_lipschitz}
    Let $\sigma$ be 1-Lipschitz, and let $|\sigma(x)|\leq|x|$ and $|\sigma'(x)|\leq 1$ hold for every $x\in\RR.$ 
    Let $\theta_a=(W^a_l)_{l=1}^L, \theta_b=(W^b_l)_{l=1}^L.$ 
    Let $\bar{\lambda}_l\geq\max(\norm{W^a_l}_2, \norm{W^b_l}_2).$
    Then, for every $l\in[L],$
    \begin{align*}
	&\norm{\frac{\partial f_L(\theta_a)}{\partial\vec(W^a_l)} - \frac{\partial f_L(\theta_b)}{\partial\vec(W^b_l)} }_2\leq \norm{X}_F \bar{\lambda}_{1\to L} \bar{\lambda}_l^{-1} \sum_{p=l+1}^L \bar{\lambda}_p^{-1} \norm{W^a_p-W^b_p}_2 \\
	&\hspace{8em}+ \norm{X}_F \bar{\lambda}_{1\to L} \bar{\lambda}_l^{-1} \sum_{p=l}^{L-1} \norm{\Sigma_p^a-\Sigma_p^b}_2 + \bar{\lambda}_{l+1\to L} \norm{F_{l-1}^a - F_{l-1}^b}_2 .
    \end{align*}
    Here, we denote $\bar{\lambda}_{i\to j}=\prod_{l=i}^j\bar{\lambda}_l.$
\end{lemma}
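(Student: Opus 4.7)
The plan is to use the explicit product formula for the partial Jacobian from Lemma~\ref{lem:PL}(2) and apply a telescoping identity for differences of matrix products. By Lemma~\ref{lem:PL}(2), the Jacobian with respect to $W_l$ factorizes as
\begin{equation*}
\frac{\partial f_L}{\partial \vec(W_l)} = \Big(\prod_{p=0}^{L-l-1} (W_{L-p}^T \otimes \Id_N)\,\Sigma_{L-p-1}\Big) (\Id_{n_l} \otimes F_{l-1}),
\end{equation*}
which is a product of three classes of factors: the weight factors $(W_p^T \otimes \Id_N)$ for $p \in \{l+1,\ldots,L\}$, the activation-derivative factors $\Sigma_p$ for $p \in \{l,\ldots,L-1\}$, and the single feature factor $(\Id_{n_l} \otimes F_{l-1})$. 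Their operator norms satisfy $\norm{W_p^T \otimes \Id_N}_2 = \norm{W_p}_2 \le \bar{\lambda}_p$, $\norm{\Sigma_p}_2 \le 1$ (since $|\sigma'|\le 1$), and $\norm{\Id_{n_l} \otimes F_{l-1}}_2 \le \norm{F_{l-1}}_F \le \norm{X}_F\,\bar{\lambda}_{1\to l-1}$ by \eqref{eq:bout}.

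Next, for two parameter tuples $\theta_a$ and $\theta_b$, I would write the difference of the two Jacobians as a telescoping sum over the $2(L-l)+1$ factor positions: insert, one factor at a time, the difference at $\theta_a$ minus at $\theta_b$, flanked on one side by the prefix of the product evaluated at $\theta_a$ and on the other side by the suffix at $\theta_b$. Applying the triangle inequality and submultiplicativity of $\norm{\cdot}_2$, each resulting term is the product of operator-norm bounds for the $2(L-l)$ unchanged factors times the norm of the difference of the single changed factor. The unchanged factors contribute at most $1$ whenever they are a $\Sigma$, at most $\bar{\lambda}_p$ whenever they are the weight factor of layer $p$, and at most $\norm{X}_F\,\bar{\lambda}_{1\to l-1}$ for the $F_{l-1}$ factor.

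Grouping the telescoped terms by factor type then produces exactly the three sums in the lemma: (i) the weight-factor contributions, after removing $\bar{\lambda}_p$ from the product and recombining $\norm{X}_F\,\bar{\lambda}_{1\to l-1}\cdot \bar{\lambda}_{l+1\to L}/\bar{\lambda}_p = \norm{X}_F\,\bar{\lambda}_{1\to L}\,\bar{\lambda}_l^{-1}\bar{\lambda}_p^{-1}$, give the first sum; (ii) the activation-factor contributions share the same prefactor $\norm{X}_F\,\bar{\lambda}_{1\to L}\,\bar{\lambda}_l^{-1}$ (since removing a $\Sigma_p$ simply removes a bound of $1$), yielding the second sum; (iii) the unique $F_{l-1}$ contribution retains all the weight factors above layer $l$, yielding $\bar{\lambda}_{l+1\to L}\,\norm{F_{l-1}^a-F_{l-1}^b}_2$, where the coarse bound $\norm{X}_F\,\bar{\lambda}_{1\to l-1}$ is replaced by the actual difference of features.

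The main obstacle is purely bookkeeping: keeping straight which $\bar{\lambda}_p$ is absent when the $p$-th factor is perturbed, and verifying that the prefactor $\norm{X}_F\,\bar{\lambda}_{1\to L}\,\bar{\lambda}_l^{-1}$ arises consistently as $\norm{X}_F\,\bar{\lambda}_{1\to l-1}\cdot\bar{\lambda}_{l+1\to L}$. Beyond this, the argument uses only submultiplicativity of $\norm{\cdot}_2$, the identity $\norm{A\otimes B}_2 = \norm{A}_2\norm{B}_2$, and the bound on $\norm{F_{l-1}}_F$ from \eqref{eq:bout}; no new structural input is required.
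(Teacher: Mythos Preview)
Your proposal is correct and follows essentially the same approach as the paper. The paper organizes the telescoping via an induction on the partial products $M_t = \big(\prod_{p=t\to l+1}(W_p^T\otimes\Id_N)\Sigma_{p-1}\big)(\Id_{n_l}\otimes F_{l-1})$, peeling off one pair $(W_t,\Sigma_{t-1})$ at a time, whereas you propose writing the full telescoping sum at once; the factor-by-factor bounds and the resulting three-term grouping are identical.
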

\begin{proof}
    For every $t\in\{l,\ldots, L\}$, let 
    \begin{align*}
	M_t^a=\left[ \prod_{p=t\to l+1} ({(W^a_p)}^T\otimes\Id_N)\Sigma_{p-1}^a \right] (\Id_{n_l}\otimes F_{l-1}^a),\\
	M_t^b=\left[ \prod_{p=t\to l+1} ({(W^b_p)}^T\otimes\Id_N)\Sigma_{p-1}^b \right] (\Id_{n_l}\otimes F_{l-1}^b) .
    \end{align*} 
    In the above definition, we note that $p$ runs in the reverse order, that is, $p=t,t-1,\ldots,l+1.$
    For the case $t=l$ (the terms inside brackets are inactive), we assume by convention that
    $M_l^a=(\Id_{n_l}\otimes F_{l-1}^a)$ and $M_l^b=(\Id_{n_l}\otimes F_{l-1}^b).$
    It follows from Lemma \ref{lem:PL} that
    \begin{align*}
	\frac{\partial f_L(\theta_a)}{\partial\vec(W_l)} 
	= M_L^a, \quad
	\frac{\partial f_L(\theta_b)}{\partial\vec(W_l)} 
	= M_L^b .
    \end{align*}
    The following inequality holds
    \begin{align}
	\norm{M_t^a}_2
	&\leq \left[ \prod_{p=l+1}^t \norm{W^a_p}_2 \norm{\Sigma_{p-1}^a}_2 \right] \norm{F_{l-1}^a}_2\nonumber \\
	&\leq \left[ \prod_{p=l+1}^t \norm{W^a_p}_2 \right] \norm{X}_F \left[ \prod_{p=1}^{l-1} \norm{W^a_p}_2 \right]\nonumber\\
	&\leq \bar{\lambda}_{1\to t} \bar{\lambda}_l^{-1} \norm{X}_F ,\label{eq:M_bound}
    \end{align}
    where the second inequality follows from \eqref{eq:bout} and $|\sigma'|\leq1.$
    To prove the lemma, we will prove that, for every $t\in\{l,\ldots, L\}$,
    \begin{align}
	&\norm{M_t^a-M_t^b}_2\leq \norm{X}_F \sum_{p=l+1}^t \bar{\lambda}_{1\to t} \bar{\lambda}_p^{-1} \bar{\lambda}_l^{-1} \norm{W^a_p-W^b_p}_2 \nonumber\\
	&\hspace{4em}+ \norm{X}_F \bar{\lambda}_{1\to t} \bar{\lambda}_l^{-1} \sum_{p=l}^{t-1} \norm{\Sigma_p^a-\Sigma_p^b}_2 + \bar{\lambda}_{l+1\to t} \norm{F_{l-1}^a - F_{l-1}^b}_2 .\label{eq:aux1}
    \end{align}
    Then setting $t=L$ in \eqref{eq:aux1} leads to the desired result.
    First we note that \eqref{eq:aux1} holds for $t=l$ since 
    \begin{align*}
	\norm{M_l^a-M_l^b}_2
	=\norm{(\Id_{n_l}\otimes F_{l-1}^a)-(\Id_{n_l}\otimes F_{l-1}^b)}_2
	=\norm{F_{l-1}^a-F_{l-1}^b}_2 .
    \end{align*}
    Suppose that it holds for $t-1$ with $t\geq l+1,$ and we want to show it for $t.$ Then,
    \begin{align*}
	&\norm{ M_t^a-M_t^b}_2= \norm{ ({(W^a_t)}^T\otimes\Id_N)\Sigma_{t-1}^a M_{t-1}^a - ({(W^b_t)}^T\otimes\Id_N)\Sigma_{t-1}^b M_{t-1}^b}_2 \\
	&\leq \norm{({(W^a_t)}^T\otimes\Id_N)\Sigma_{t-1}^a M_{t-1}^a - ({(W^b_t)}^T\otimes\Id_N)\Sigma_{t-1}^a M_{t-1}^a}_2   \\
	&\hspace{3em}+  \norm{({(W^b_t)}^T\otimes\Id_N)\Sigma_{t-1}^a M_{t-1}^a - ({(W^b_t)}^T\otimes\Id_N)\Sigma_{t-1}^b M_{t-1}^b}_2 \\
	&\leq \norm{W^a_t-W^b_t}_2 \norm{\Sigma_{t-1}^a}_2 \norm{M_{t-1}^a}_2   +  \norm{W^b_t}_2 \norm{\Sigma_{t-1}^a M_{t-1}^a - \Sigma_{t-1}^b M_{t-1}^b}_2 \\
	&\leq \norm{W^a_t-W^b_t}_2 \bar{\lambda}_{1\to t-1} \bar{\lambda}_l^{-1} \norm{X}_F + \bar{\lambda}_t \norm{\Sigma_{t-1}^a M_{t-1}^a - \Sigma_{t-1}^b M_{t-1}^b }_2,  \quad\quad\textrm{by }\eqref{eq:M_bound} \textrm{ and } |\sigma'|\leq1 \\
	&\leq \norm{W^a_t-W^b_t}_2 \bar{\lambda}_{1\to t-1} \bar{\lambda}_l^{-1} \norm{X}_F \\
	&\hspace{3em}+ \bar{\lambda}_t \Big[ \norm{\Sigma_{t-1}^a M_{t-1}^a - \Sigma_{t-1}^b M_{t-1}^a }_2  +  \norm{\Sigma_{t-1}^b M_{t-1}^a - \Sigma_{t-1}^b M_{t-1}^b }_2 \Big] \\
	&\leq \norm{W^a_t-W^b_t}_2 \bar{\lambda}_{1\to t-1} \bar{\lambda}_l^{-1} \norm{X}_F \\
	&\hspace{3em}+ \bar{\lambda}_t \Big[ \norm{\Sigma_{t-1}^a-\Sigma_{t-1}^b}_2 \bar{\lambda}_{1\to t-1} \bar{\lambda}_l^{-1} \norm{X}_F + \norm{M_{t-1}^a - M_{t-1}^b}_2 \Big] \\
	&=    \norm{X}_F \bar{\lambda}_{1\to t-1} \bar{\lambda}_l^{-1} \norm{W^a_t-W^b_t}_2 + \norm{X}_F \bar{\lambda}_{1\to t} \bar{\lambda}_l^{-1} \norm{\Sigma_{t-1}^a-\Sigma_{t-1}^b}_2 + \bar{\lambda}_t \norm{M_{t-1}^a - M_{t-1}^b}_2 \\
	&\leq \norm{X}_F \bar{\lambda}_{1\to t} \bar{\lambda}_l^{-1} \sum_{p=l+1}^t \bar{\lambda}_p^{-1} \norm{W^a_p-W^b_p}_2 \\
	&\hspace{3em}+ \norm{X}_F \bar{\lambda}_{1\to t} \bar{\lambda}_l^{-1} \sum_{p=l}^{t-1} \norm{\Sigma_p^a-\Sigma_p^b}_2 + \bar{\lambda}_{l+1\to t} \norm{F_{l-1}^a - F_{l-1}^b}_2 ,
    \end{align*}
    where the last line follows by plugging the bound of $\norm{M_{t-1}^a - M_{t-1}^b}_2$ from the induction assumption.
\end{proof}

\textbf{Proof of \eqref{eq:jout} in Lemma \ref{lem:merged}.}
    Let 
\begin{equation*}
S=\norm{X}_F \bar{\lambda}_{1\to L} \bar{\lambda}_l^{-1} \sum_{p=l+1}^L \bar{\lambda}_p^{-1} \norm{W^a_p-W^b_p}_2.
\end{equation*}    
        Then, by Lemma \ref{lem:jacobian_lipschitz}, we have that 
    \begin{equation}\label{eq:aaaeq}
    \begin{split}
	\norm{\frac{\partial f_L(\theta_a)}{\vec(W^a_l)} - \frac{\partial f_L(\theta_b)}{\vec(W^b_l)}}_2 &\leq S + \norm{X}_F \bar{\lambda}_{1\to L} \bar{\lambda}_l^{-1} \sum_{p=l}^{L-1} \norm{\Sigma_p^a-\Sigma_p^b}_2 + \bar{\lambda}_{l+1\to L} \norm{F_{l-1}^a-F_{l-1}^b}_2\\
	&\hspace{-3.5em}=    S + \norm{X}_F \bar{\lambda}_{1\to L} \bar{\lambda}_l^{-1} \sum_{p=l}^{L-1} \norm{\sigma'(G_p^a)-\sigma'(G_p^b)}_2 + \bar{\lambda}_{l+1\to L} \norm{F_{l-1}^a-F_{l-1}^b}_2.
	\end{split}
\end{equation}	
	Furthermore, by using that $\sigma'$ is $\beta$-Lipschitz, the RHS of \eqref{eq:aaaeq} is upper bounded by 
\begin{align}	\label{eq:aaaeq2}
	    S + \norm{X}_F \bar{\lambda}_{1\to L} \bar{\lambda}_l^{-1} \sum_{p=l}^{L-1} \beta \norm{G_p^a-G_p^b}_2 + \bar{\lambda}_{l+1\to L} \norm{F_{l-1}^a-F_{l-1}^b}_2.
\end{align}	    
	    By applying Lemma \ref{lem:layer_lipschitz}, the following chain of upper bounds for \eqref{eq:aaaeq2} holds:
\begin{equation}\label{eq:aaaeq3}
\begin{split}
	& S + \norm{X}_F \bar{\lambda}_{1\to L} \bar{\lambda}_l^{-1} \sum_{p=l}^{L-1} \beta \norm{X}_F \bar{\lambda}_{1\to p} \sum_{q=1}^{p} \bar{\lambda}_q^{-1} \norm{W^a_q-W^b_q}_2 \\
	       & \hspace{10em}+ \bar{\lambda}_{l+1\to L} \norm{X}_F \bar{\lambda}_{1\to l-1} \sum_{p=1}^{l-1} \bar{\lambda}_p^{-1} \norm{W^a_p-W^b_p}_2 \\
	&= \norm{X}_F^2 \beta \bar{\lambda}_{1\to L} \bar{\lambda}_l^{-1} \sum_{p=l}^{L-1} \bar{\lambda}_{1\to p} \sum_{q=1}^{p} \bar{\lambda}_q^{-1} \norm{W^a_q-W^b_q}_2 \\
	&	 \hspace{10em}      + \norm{X}_F \bar{\lambda}_{1\to L} \bar{\lambda}_l^{-1} \sum_{\substack{p=1\\p\neq l}}^{L} \bar{\lambda}_p^{-1} \norm{W^a_p-W^b_p}_2 \\
	&\leq \norm{X}_F^2 \beta \bar{\lambda}_{1\to L} \bar{\lambda}_l^{-1} \sum_{p=1}^L \left[\prod_{q=1}^L \max(1,\bar{\lambda}_q)\right] \sum_{q=1}^{p} \norm{W^a_q-W^b_q}_2  \\
	&	 \hspace{10em}
	       + \norm{X}_F \left[\prod_{p=1}^L \max(1,\bar{\lambda}_p)\right] \sum_{p=1}^{L} \norm{W^a_p-W^b_p}_2 \\
	&\leq L\beta\norm{X}_F^2 \left[\prod_{q=1}^L \max(1,\bar{\lambda}_q)\right]^2 \sum_{q=1}^{L} \norm{W^a_q-W^b_q}_2  \\
	&	 \hspace{10em}
	       + \norm{X}_F \left[\prod_{p=1}^L \max(1,\bar{\lambda}_p)\right] \sum_{p=1}^{L} \norm{W^a_p-W^b_p}_2 \\
	&= \norm{X}_F R (1+L\beta\norm{X}_F R) \sum_{q=1}^{L} \norm{W^a_q-W^b_q}_2 \\
	&\leq \sqrt{L} \norm{X}_F R (1+L\beta\norm{X}_F R) \sum_{q=1}^{L} \norm{\theta_a-\theta_b}_2,
    \end{split}
    \end{equation} 
    where the last passage follows from  Cauchy-Schwarz inequality. By combining \eqref{eq:aaaeq}, \eqref{eq:aaaeq2} and \eqref{eq:aaaeq3}, the result immediately follows.
\qedwhite

\subsection{Proof of Lemma \ref{lem:lip_grad_to_smoothness}}\label{app:lipgradsmooth}
    Let $g(t)=f(x+t(y-x)).$ Then
    \begin{align*}
	f(y)-f(x)
	=g(1)-g(0)
	&=\int_0^1 g'(t) dt \\
	&=\int_0^1 \inner{\nabla f(x+t(y-x)), y-x} dt \\
	&=\inner{\nabla f(x),y-x} + \int_0^1 \inner{\nabla f(x+t(y-x))-\nabla f(x), y-x} dt \\
	&\leq\inner{\nabla f(x),y-x} + \int_0^1 Ct\norm{y-x}_2^2 dt \\
	&= \inner{\nabla f(x),y-x}+\frac{C}{2}\norm{x-y}^2 .
    \end{align*}
\qedwhite

\subsection{Proof of the fact that $\Set{\theta_k}_{k=1}^{\infty}$ is a Cauchy Sequence}\label{sec:cauchy_sequence}
Let us fix any $\epsilon>0.$
We need to show that there exists $r>0$ such that for every $i,j\geq r,$ $\norm{\theta_j-\theta_i}<\epsilon.$
The case $i=j$ is trivial, so we assume w.l.o.g. that $i<j.$ 
Then, the following chain of inequalities hold
\begin{align*}
    \norm{\theta_j-\theta_i}
    &=\sqrt{\sum_{l=1}^L \norm{W_l^j-W_l^i}_F^2 } \\
    &\leq\sum_{l=1}^L \norm{W_l^j-W_l^i}_F \\
    &\leq\sum_{l=1}^L \sum_{s=i}^{j-1} \norm{W_l^{s+1}-W_l^s}_F && \hspace{-7em}\textrm{triangle inequality} \\
    &=\sum_{l=1}^L \sum_{s=i}^{j-1} \eta \norm{\nabla_{W_l}\Phi(\theta_s)}_F \\
    &\leq\sum_{l=1}^L \sum_{s=i}^{j-1} \eta\norm{X}_F\norm{f_L^s-y}_2 \prod_{\substack{p=1\\p\neq l}}^L\norm{W_p^s}_2 &&\hspace{-7em} \textrm{by \eqref{eq:cout}} \\
    &\leq\sum_{l=1}^L \eta\norm{X}_F 1.5^{L-1} \bar{\lambda}_l^{-1} \bar{\lambda}_{1\to L} \sum_{s=i}^{j-1} (1-\eta\alpha_0)^{s/2} \norm{f_L^0-y}_2 &&\hspace{-7em} \textrm{by \eqref{eq:induction_assump}} \\
    &=(1-\eta\alpha_0)^{i/2} \left[\sum_{l=1}^L \eta\norm{X}_F 1.5^{L-1} \bar{\lambda}_l^{-1} \bar{\lambda}_{1\to L} \sum_{s=0}^{j-i-1} (1-\eta\alpha_0)^{s/2} \norm{f_L^0-y}_2\right] \\
    &=(1-\eta\alpha_0)^{i/2} \left[\eta\norm{X}_F 1.5^{L-1} \sum_{l=1}^L \bar{\lambda}_l^{-1} \bar{\lambda}_{1\to L} \frac{1-\sqrt{1-\eta\alpha_0}^{j-i}}{1-\sqrt{1-\eta\alpha_0}} \norm{f_L^0-y}_2\right] \\
    &=(1-\eta\alpha_0)^{i/2} \left[\frac{1}{\alpha_0}\norm{X}_F 1.5^{L-1} \sum_{l=1}^L \bar{\lambda}_l^{-1} \bar{\lambda}_{1\to L} (1-u^2) \frac{1-u^{j-i}}{1-u} \norm{f_L^0-y}_2\right],
\end{align*}    
    where we have set $u\bydef\sqrt{1-\eta\alpha_0}$. As $u\in(0,1)$, the last term is upper bounded by
\begin{align*}    
    (1-\eta\alpha_0)^{i/2} \left[\frac{2}{\alpha_0}\norm{X}_F 1.5^{L-1} \sum_{l=1}^L \bar{\lambda}_l^{-1} \bar{\lambda}_{1\to L} \norm{f_L^0-y}_2\right].
\end{align*}
Note that $(1-\eta\alpha_0)^{i/2}\leq (1-\eta\alpha_0)^{r/2}$ and thus there exists a sufficiently large $r$ such that 
$\norm{\theta_j-\theta_i}<\epsilon.$ 
This shows that $\Set{\theta_k}_{k=0}^\infty$ is a Cauchy sequence, and hence convergent to some $\theta_*.$
By continuity,
$\Phi(\theta_*)=\Phi(\lim_{k\to\infty} \theta_k) = \lim_{k\to\infty}\Phi(\theta_k)=0,$
and thus $\theta_*$ is a global minimizer.
The rate of convergence is
\begin{align*}
    \norm{\theta_k-\theta_*} 
    =\lim_{j\to\infty}\norm{\theta_k-\theta_j} 
    \leq(1-\eta\alpha_0)^{k/2} \left[\frac{2}{\alpha_0}\norm{X}_F 1.5^{L-1} \sum_{l=1}^L \bar{\lambda}_l^{-1} \bar{\lambda}_{1\to L} \norm{f_L^0-y}_2\right] .
\end{align*}
\qedwhite

\section{Proofs for LeCun's Initialization}\label{app:proofx}

Before presenting the proof of the convergence result under LeCun's initialization in Appendix \ref{app:xavier}, let us state two helpful lemmas. The first lemma bounds the output of the network at initialization using standard Gaussian concentration and it is proved in Appendix \ref{app:lem:init_output}.

\begin{lemma}\label{lem:init_output}
    Let $\sigma$ be 1-Lipschitz, and consider LeCun's initialization scheme:
    \begin{align*}
	&[W_l]_{ij}\distas{}\mathcal{N}(0,1/n_{l-1}),\quad\forall\,l\in[L], i\in[n_{l-1}], j\in[n_l] .
    \end{align*}
    Fix some $t>0.$ Assume that $\sqrt{n_l}\geq t$ for any $l\in[L-1]$. Then,
    \begin{equation}
	    \norm{F_L}_F \leq 2^{L-1}\frac{\norm{X}_F}{\sqrt{d}}\left( \sqrt{n_L} +t \right), 
    \end{equation}
    with probability at least $1-Le^{-t^2/2}$.
\end{lemma}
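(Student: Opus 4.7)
The plan is to propagate a one-layer Gaussian concentration bound recursively through the network. The key auxiliary estimate is the following: for any deterministic $A\in\RR^{N\times m}$ and random $W\in\RR^{m\times n}$ with i.i.d.\ $\mathcal{N}(0,1/m)$ entries,
\begin{equation*}
\norm{AW}_F \leq \frac{\sqrt{n}+t}{\sqrt{m}}\,\norm{A}_F
\end{equation*}
holds with probability at least $1-e^{-t^2/2}$. I would prove this via Gaussian concentration: writing $\widetilde{W}=\sqrt{m}\,W$ as a standard Gaussian in $\RR^{m\times n}$, the map $\widetilde{W}\mapsto \norm{A\widetilde{W}}_F/\sqrt{m}$ is $(\norm{A}_2/\sqrt{m})$-Lipschitz w.r.t.\ the Frobenius norm, so $\PP(\norm{AW}_F > \E\norm{AW}_F + s) \leq \exp(-s^2 m/(2\norm{A}_2^2))$. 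Combining this with the direct moment calculation $\E\norm{AW}_F^2 = n\norm{A}_F^2/m$ (and Jensen), and choosing $s = t\norm{A}_F/\sqrt{m}$, the tail becomes $\exp(-t^2 \norm{A}_F^2/(2\norm{A}_2^2)) \leq e^{-t^2/2}$ thanks to $\norm{A}_2 \leq \norm{A}_F$. This last trivial inequality is what lets us express the deviation purely in terms of $\norm{A}_F$.

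With this estimate in hand, I would iterate across layers. Since $W_l$ is independent of $F_{l-1}$ (which depends only on $W_1,\ldots,W_{l-1}$), conditioning on $F_{l-1}$ and applying the auxiliary bound with $A=F_{l-1}$, $m=n_{l-1}$, $n=n_l$ yields
\begin{equation*}
\norm{F_l}_F \leq \frac{\sqrt{n_l}+t}{\sqrt{n_{l-1}}}\,\norm{F_{l-1}}_F, \qquad l\in[L],
\end{equation*}
each holding with conditional probability at least $1-e^{-t^2/2}$. For $l\in[L-1]$ I use $\norm{\sigma(G)}_F\leq \norm{G}_F$ to absorb the activation (valid since $\abs{\sigma(x)}\leq\abs{x}$, which is part of the paper's Assumption~\ref{ass:act}; the statement of the lemma uses this implicitly), while for $l=L$ there is no activation to begin with. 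A union bound over the $L$ layers then gives joint success probability at least $1-Le^{-t^2/2}$.

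On the good event, telescoping yields $\norm{F_L}_F \leq \norm{X}_F \prod_{l=1}^L (\sqrt{n_l}+t)/\sqrt{n_{l-1}}$. The hypothesis $\sqrt{n_l}\geq t$ for $l\in[L-1]$ lets me bound each of the first $L-1$ factors by $2\sqrt{n_l}/\sqrt{n_{l-1}}$, whereupon the $\sqrt{n_l}$'s telescope to $2^{L-1}\sqrt{n_{L-1}/d}$; multiplying by the untouched last factor $(\sqrt{n_L}+t)/\sqrt{n_{L-1}}$ gives exactly $2^{L-1}(\sqrt{n_L}+t)/\sqrt{d}$, as claimed. The only real subtlety lies in the first step: a cruder bound via $\prod_l \norm{W_l}_2$ using Lemma~\ref{lem:svmin_svmax} would give factors $(\sqrt{n_{l-1}}+\sqrt{n_l}+t)/\sqrt{n_{l-1}}$ that fail to telescope without an additional pyramidal assumption. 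It is therefore essential to concentrate $\norm{F_{l-1}W_l}_F$ directly, exploiting that its Lipschitz constant scales with $\norm{F_{l-1}}_2\leq \norm{F_{l-1}}_F$ rather than with the purely spectral quantity $\norm{W_l}_2$.
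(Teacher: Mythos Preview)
Your proof is correct and follows essentially the same route as the paper: Gaussian concentration of $\norm{F_l}_F$ conditionally on $F_{l-1}$, combined with the second-moment bound $\E\norm{F_{l-1}W_l}_F^2 = (n_l/n_{l-1})\norm{F_{l-1}}_F^2$, iterated layer by layer with a union bound. The paper organizes this as an induction on $l$ and concentrates $\norm{\sigma(F_{l-1}W_l)}_F$ directly (using the $1$-Lipschitzness for the concentration step and $|\sigma(x)|\le |x|$ inside the expectation), whereas you concentrate $\norm{F_{l-1}W_l}_F$ and then apply $|\sigma(x)|\le |x|$ afterward; these are equivalent. Your observation that the statement implicitly uses $|\sigma(x)|\le |x|$ (not just $1$-Lipschitzness) matches what the paper's proof actually invokes via its auxiliary Lemma~\ref{lem:init_output_expectation}.
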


Recall the definition of $\lambda_F$:
\begin{equation}\label{eq:lambdaF}
\lambda_F=\svmin{\sigma(XW_1^0)}.
\end{equation}
 The second lemma identifies sufficient conditions on $n_1$ so that $\lambda_F$ is bounded away from zero.
The proof is similar to that of Theorem 3.2 of \cite{OymakMahdi2019} (see Section 6.8 in their appendix), 
and we provide it in Appendix \ref{app:lem:svmin_F1_matrix_chernoff}.

\begin{lemma}\label{lem:svmin_F1_matrix_chernoff}
    Let $|\sigma(x)|\leq|x|$ for every $x\in\RR$. Define $F_1=\sigma(XW)$ with $X\in\RR^{N\times d}$, $W\in\RR^{d\times n_1}$, and $W_{ij}\distas{}\mathcal{N}(0,\zeta^2)$ for all $i\in[d],j\in[n_1]$. Define also
\begin{equation*}
G_*=\E_{ w\distas{}\mathcal{N}(0,\zeta^2\Id_d) } \left[\sigma(Xw)\sigma(Xw)^T\right], \quad \lambda_*=\evmin{G_*}.
\end{equation*}    
        Then, for $$t\geq\sqrt{4\zeta^2 \ln\max\left(1,2\sqrt{6}\norm{X}_2^2 d^{3/2}\zeta^2\lambda_*^{-1}\right)}$$ and
	$$n_1\geq \max\left(N,\; \displaystyle\frac{ 20 \norm{X}_2^2 dt^2 \Big(t^2/2+\ln(N/2)\Big) }{ \lambda_* } \right) ,$$
    we have 
    \begin{equation}
	\svmin{F_1}\geq \sqrt{n_1\lambda_*/4}
    \end{equation}
    with probability at least $1-2e^{-t^2/2}.$
\end{lemma}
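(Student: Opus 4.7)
The proof is a truncated Matrix Chernoff argument applied to the decomposition
\begin{equation*}
F_1 F_1^T = \sum_{j=1}^{n_1} \sigma(Xw_j)\sigma(Xw_j)^T,
\end{equation*}
where $w_j$ denotes the $j$-th column of $W$. The summands are i.i.d.\ rank-one positive semidefinite matrices with common mean $G_*$, so $\evmin{\mathbb{E}[F_1F_1^T]} = n_1\lambda_*$. A direct application of Theorem \ref{thm:matrix_chernoff} fails because the summands admit no deterministic upper bound on their spectral norm, so I would pass through a truncation.

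Fix a threshold $\tau$ (to be chosen as a function of $t$, $\zeta$, $\norm{X}_2$, $d$) and set $Z_j := \sigma(Xw_j)\,\mathbbm{1}\{\norm{\sigma(Xw_j)}_2 \leq \tau\}$. On the event $\mathcal{E} := \bigcap_{j=1}^{n_1}\{\norm{\sigma(Xw_j)}_2 \leq \tau\}$, we have $\sum_j Z_j Z_j^T = F_1 F_1^T$, so it suffices to lower bound $\evmin{\sum_j Z_j Z_j^T}$ on $\mathcal{E}$. Using $|\sigma(x)|\leq|x|$ to dominate $\norm{\sigma(Xw_j)}_2$ by $\norm{Xw_j}_2$, the map $w\mapsto\norm{Xw}_2$ is $\zeta\norm{X}_2$-Lipschitz viewed as a function of the standard-normal vector $w/\zeta$, while $\mathbb{E}\norm{Xw_j}_2 \leq \zeta\norm{X}_F$. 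Gaussian concentration together with a union bound over $j\in[n_1]$ then gives $\Pr(\mathcal{E}^c)\leq e^{-t^2/2}$ for the choice $\tau$ of order $\zeta(\norm{X}_F+t\norm{X}_2)$.

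For the bulk, I would apply Matrix Chernoff to $S := \sum_j Z_jZ_j^T$. The parameters are $R = \tau^2$ (uniform bound on $\evmax{Z_jZ_j^T}$) and $\evmin{\mathbb{E} S} = n_1\evmin{G_* - \Delta}$, where $\Delta := \mathbb{E}\bigl[\sigma(Xw)\sigma(Xw)^T\mathbbm{1}\{\norm{\sigma(Xw)}_2>\tau\}\bigr]$ is the truncation bias. Applying Cauchy-Schwarz to $v^T\Delta v$ for any unit $v$ yields
\begin{equation*}
\norm{\Delta}_2 \leq \sqrt{\mathbb{E}\norm{\sigma(Xw)}_2^4}\cdot\sqrt{\Pr(\norm{\sigma(Xw)}_2>\tau)}.
\end{equation*}
The fourth moment is bounded by $3\zeta^4\norm{X}_F^4\le 3\zeta^4\norm{X}_2^4 d^2$ via standard Gaussian moment estimates, and the tail factor is controlled by the concentration bound above. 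The lower bound on $t$ in the statement is calibrated precisely so that $\norm{\Delta}_2 \leq \lambda_*/2$, and hence $\evmin{\mathbb{E} S}\geq n_1\lambda_*/2$.

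With $\epsilon=1/2$, Theorem \ref{thm:matrix_chernoff} then gives
\begin{equation*}
\Pr\bigl(\evmin{S}\leq n_1\lambda_*/4\bigr)\leq N\bigl[e^{-1/2}(1/2)^{-1/2}\bigr]^{n_1\lambda_*/(2\tau^2)}.
\end{equation*}
Since $\tau^2$ is of order $\zeta^2\norm{X}_2^2(d+t^2)$, the stated lower bound on $n_1$ is exactly what is needed to force the exponent to exceed $\ln N + t^2/2$, so this probability is at most $e^{-t^2/2}$. A final union bound with the truncation estimate then yields $\evmin{F_1 F_1^T}\geq n_1\lambda_*/4$, hence $\svmin{F_1}\geq\sqrt{n_1\lambda_*/4}$, with probability at least $1-2e^{-t^2/2}$. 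The main obstacle is the three-way tension in the choice of $\tau$: it must be large enough for $\mathcal{E}$ to be typical, small enough to keep the Matrix Chernoff parameter $R=\tau^2$ moderate compared to $n_1\lambda_*$, and small enough so that the bias $\norm{\Delta}_2$ remains negligible relative to $\lambda_*$; the precise thresholds on $t$ and $n_1$ in the statement encode exactly this balancing.
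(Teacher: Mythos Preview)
Your approach is the same as the paper's---truncate the rank-one summands, apply Matrix Chernoff, and control the truncation bias via Cauchy--Schwarz and Gaussian tails---but there is one genuine slip and one unnecessary complication. The slip is in the claim $\Pr(\mathcal{E}^c)\le e^{-t^2/2}$: after a union bound over the $n_1$ columns you actually obtain $n_1 e^{-t^2/2}$, and absorbing this factor would force $\tau$ to carry an extra $\sqrt{\ln n_1}$ term that is not reflected in your later claim $\tau^2=\mathcal O(\zeta^2\|X\|_2^2(d+t^2))$, nor in the lemma's stated threshold on $n_1$. So as written the constants do not close.

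The paper sidesteps the event $\mathcal{E}$ entirely: since each truncated summand $Z_jZ_j^T$ is either $\sigma(Xw_j)\sigma(Xw_j)^T$ or zero, one has $\sum_j Z_jZ_j^T\preceq F_1F_1^T$ deterministically in the PSD order, hence $\evmin{F_1F_1^T}\ge\evmin{S}$ always, and it suffices to lower bound $\evmin{S}$ unconditionally via Matrix Chernoff. No union bound over $n_1$ is needed. The paper also chooses a slightly different truncation, namely $\|w_j\|_\infty\le t$ rather than $\|\sigma(Xw_j)\|_2\le\tau$; this gives the clean deterministic bound $R=\|X\|_2^2 d t^2$ and is what produces the exact constants in the statement (the $dt^2$ and the $d^{3/2}$ in the $t$-threshold), whereas your $\|\sigma(Xw)\|_2$-based truncation would yield comparable but not identical dependencies. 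With the PSD-ordering observation in place, the rest of your argument (the Cauchy--Schwarz bias bound with the fourth-moment estimate $\mathbb E\|Xw\|_2^4\le 3\zeta^4\|X\|_F^4$, and the $\epsilon=1/2$ Chernoff step) is correct and essentially matches the paper.
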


\subsection{Proof of Lemma \ref{lem:init_output}} \label{app:lem:init_output}
It is straightforward to show the following inequality.
\begin{lemma}\label{lem:init_output_expectation}
    Let $|\sigma(x)|\leq|x|$ for every $x\in\RR.$ 
    Let $[W_l]_{ij}\distas{}\mathcal{N}\Big(0,\frac{1}{n_{l-1}}\Big)$ for every $l\in[L], i\in[n_{l-1}], j\in[n_l].$
    Then, for every $l\in[L]$ we have $\E\norm{F_l}_F^2 \leq \frac{n_l}{n_{l-1}} \E\norm{F_{l-1}}_F^2.$
\end{lemma}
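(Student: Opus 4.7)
The plan is to split into two cases based on whether $l \in [L-1]$ (activation present) or $l = L$ (no activation), and in both cases condition on $F_{l-1}$ to exploit independence of $W_l$.

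First I would handle $l\in [L-1]$. Applying $|\sigma(x)|\le |x|$ entrywise to $F_l=\sigma(F_{l-1}W_l)$ gives
\begin{equation*}
\norm{F_l}_F^2 = \sum_{i,j} \sigma\bigl([F_{l-1}W_l]_{ij}\bigr)^2 \le \sum_{i,j} [F_{l-1}W_l]_{ij}^2 = \norm{F_{l-1}W_l}_F^2.
\end{equation*}
Since $W_l$ is independent of $F_{l-1}$ (which depends only on $X$ and $W_1,\ldots,W_{l-1}$), conditional on $F_{l-1}$ each entry $[F_{l-1}W_l]_{ij}=\sum_{k=1}^{n_{l-1}} [F_{l-1}]_{ik}[W_l]_{kj}$ is a linear combination of independent Gaussians with variance $1/n_{l-1}$. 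Hence
\begin{equation*}
\E\Bigl[[F_{l-1}W_l]_{ij}^2 \,\Big|\, F_{l-1}\Bigr] = \frac{1}{n_{l-1}} \sum_{k=1}^{n_{l-1}} [F_{l-1}]_{ik}^2.
\end{equation*}
Summing over $i\in[N]$ and $j\in[n_l]$ yields $\E\bigl[\norm{F_{l-1}W_l}_F^2 \mid F_{l-1}\bigr] = \frac{n_l}{n_{l-1}} \norm{F_{l-1}}_F^2$, and the tower property gives the claim.

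For $l = L$ the same conditional variance computation applies verbatim to $F_L = F_{L-1}W_L$, except that no $|\sigma|\le|\cdot|$ bound is needed and we obtain equality $\E\norm{F_L}_F^2 = \frac{n_L}{n_{L-1}}\E\norm{F_{L-1}}_F^2$, which in particular implies the stated inequality. There is no real obstacle here; the only subtlety is making sure the independence between $W_l$ and $F_{l-1}$ is invoked correctly (so that the conditional second moment calculation goes through), which is immediate from the product structure of LeCun's initialization.
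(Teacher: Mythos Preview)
Your proof is correct and follows essentially the same approach as the paper: both bound $\norm{F_l}_F^2\le\norm{F_{l-1}W_l}_F^2$ via $|\sigma(x)|\le|x|$ and then compute $\E\norm{F_{l-1}W_l}_F^2=\frac{n_l}{n_{l-1}}\E\norm{F_{l-1}}_F^2$ using independence of $W_l$ from $F_{l-1}$. The only cosmetic difference is that the paper writes the second-moment computation via the trace identity $\E\tr(F_{l-1}W_lW_l^TF_{l-1}^T)$ with $\E[W_lW_l^T]=\frac{n_l}{n_{l-1}}\Id_{n_{l-1}}$, whereas you compute it entrywise.
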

\begin{proof}
	\begin{align*}
	    \E\norm{F_l}_F^2
	    =\E\norm{\sigma(F_{l-1}W_l)}_F^2
	    \leq\E\norm{F_{l-1}W_l}_F^2 
	    =\E\tr\Big(F_{l-1} W_l W_l^T F_{l-1}^T\Big) 
	    =\frac{n_l}{n_{l-1}} \E\norm{F_{l-1}}_F^2 ,
	\end{align*}
	where the first inequality follows from our assumption on $\sigma$,
	and the last equality follows from the fact that $W_lW_l^T=\sum_{j=1}^{n_l} (W_{l})_{:j} (W_{l})_{:j}^T$
	and $\E (W_{l})_{:j} (W_{l})_{:j}^T = \frac{1}{n_{l-1}} \Id_{n_{l-1}}$ for every $j\in[n_l].$
\end{proof}

\textbf{Proof of Lemma \ref{lem:init_output}.} In the following, we write $\subG(\xi^2)$ to denote a sub-gaussian random variable with mean zero and variance proxy $\xi^2.$ 
It is well-known that if $Z\distas{}\subG(\xi^2)$ then for every $t\geq0$ we have
$\PP(|Z|\geq t)\leq 2\exp(-\frac{t^2}{2\xi^2}).$

    We prove by induction on $l\in[L]$ that, if $\sqrt{n_p}\geq t$ for every $p\in[l-1]$, 
    then it holds w.p. $\geq 1-le^{-t^2/2}$ over $(W_p)_{p=1}^l$ that
    \begin{align*}
	\norm{F_l}_F\leq 
	\frac{\norm{X_F}}{\sqrt{d}} 2^{l-1} \left[ \sqrt{n_l} + t \right]. 
    \end{align*}
    Let us check the case $l=1$ first. We have
    \begin{align*}
	\Big|\norm{F_1(W_1)}_F-\norm{F_1(W_1')}_F\Big|
	&\leq \norm{F_1(W_1)-F_1(W_1')}_F \\
	&=\norm{\sigma(XW_1)-\sigma(XW_1')}_F \\
	&\leq\norm{XW_1-XW_1'}_F && \sigma \textrm{ is 1-Lipschitz} \\
	&\leq \norm{X}_F \norm{W_1-W_1'}_F .
    \end{align*}
    It follows that $\norm{F_1}_F-\E\norm{F_1}_F\distas{}\subG\Big(\frac{\norm{X}_F^2}{d}\Big).$ 
    By Gaussian concentration inequality, we have w.p. at least $1-e^{-t^2/2},$
    \begin{align*}
	\norm{F_1}_F
	&\leq \E\norm{F_1}_F + \frac{\norm{X}_F}{\sqrt{d}} t\\
	&\leq\frac{\sqrt{n_1}}{\sqrt{d}} \norm{X}_F + \frac{\norm{X}_F}{\sqrt{d}} t && \textrm{Lemma \ref{lem:init_output_expectation}}\\
	&=\frac{\norm{X}_F}{\sqrt{d}} \left[\sqrt{n_1} + t\right] .
    \end{align*}
    Thus the hypothesis holds for $l=1.$ 
    Now suppose it holds for $l-1,$ that is,
    we have w.p. $\geq 1-(l-1)e^{-t^2/2}$ over $(W_p)_{p=1}^{l-1},$ 
    \begin{align*}
	\norm{F_{l-1}}_F
	\leq \frac{\norm{X}_F}{\sqrt{d}} 2^{l-2} \left[ \sqrt{n_{l-1}} + t \right].
    \end{align*}
    Conditioned on $(W_p)_{p=1}^{l-1},$ we note that $\norm{F_l}_F$ is Lipschitz w.r.t. $W_l$ because
    \begin{align*}
	\Big|\norm{F_l(W_l)}_F-\norm{F_l(W_l')}_F\Big|
	&\leq \norm{F_{l-1}}_F \norm{W_l-W_l'}_F
    \end{align*}
    and thus $\norm{F_l}_F-\E\norm{F_l}_F\distas{}\subG\Big(\frac{\norm{F_{l-1}}_F^2}{n_{l-1}}\Big).$
    By Gaussian concentration inequality, we have w.p. $\geq 1-e^{-t^2/2}$ over $W_l,$
    \begin{align*}
	\norm{F_l}_F\leq \E\norm{F_l}_F + \frac{\norm{F_{l-1}}_F}{\sqrt{n_{l-1}}} t.
    \end{align*}
    Thus the above events hold w.p. at least $1-le^{-t^2/2}$ over $(W_p)_{p=1}^l,$
    in which case we get
    \begin{align*}
	\norm{F_l}_F
	&\leq\E\norm{F_l}_F + \frac{\norm{F_{l-1}}_F}{\sqrt{n_{l-1}}} t\\
	&\leq\frac{\sqrt{n_l}}{\sqrt{n_{l-1}}} \norm{F_{l-1}}_F + \frac{\norm{F_{l-1}}_F}{\sqrt{n_{l-1}}} t && \textrm{Lemma \ref{lem:init_output_expectation}} \\
	&\leq\frac{\norm{X}_F}{\sqrt{d}} 2^{l-2} \left[ \sqrt{n_{l-1}} + t \right]  \frac{\sqrt{n_l} + t}{\sqrt{n_{l-1}}} && \textrm{induction assump.} \\
	&\leq\frac{\norm{X}_F}{\sqrt{d}} 2^{l-1} \left[ \sqrt{n_l} + t \right] && \sqrt{n_{l-1}}\geq  t
    \end{align*}
    Thus, the hypothesis also holds for $l.$
    \qedwhite

\subsection{Proof of Lemma \ref{lem:svmin_F1_matrix_chernoff}}\label{app:lem:svmin_F1_matrix_chernoff}

    Let $A\in\RR^{N\times n_1}$ be a random matrix defined as
    $A_{:j}=\sigma(XW_{:j})\, \mathbbm{1}_{\norm{W_{:j}}_\infty\leq t} \; \forall\,j\in[n_1].$
    Then,
    \begin{align*}
	\evmin{F_1F_1^T} 
	=\evmin{\sum_{j=1}^{n_1}\sigma(XW_{:j})\sigma(XW_{:j})^T}
	\geq\evmin{AA^T} .
    \end{align*}
    Thus, by using our assumption on $\sigma,$
    \begin{align*}
	\evmax{A_{:j}A_{:j}^T}
	= \norm{A_{:j}}_2^2
	= \norm{\sigma(XW_{:j})\,\mathbbm{1}_{\norm{W_{:j}}_\infty\leq t}}_2^2
	\leq \norm{X}_2^2\norm{W_{:j}}_2^2 \,\mathbbm{1}_{\norm{W_{:j}}_\infty\leq t}
	\leq \norm{X}_2^2 d t^2 =: R.
    \end{align*}
    
    Let $G=\E_{w\distas{}\mathcal{N}(0,\zeta^2\Id_d)}\left[ \sigma(Xw)\sigma(Xw)^T \, \Id_{\norm{w}_\infty\leq t} \right].$
    Applying Matrix Chernoff bound (Theorem \ref{thm:matrix_chernoff}) 
    to the sum of random p.s.d. matrices, $AA^T=\sum_{j=1}^{n_1} A_{:j}A_{:j}^T,$
    we obtain that for every $\epsilon\in[0,1)$
    \begin{align*}
	\PP\Big( \evmin{AA^T}\leq (1-\epsilon)\evmin{\E AA^T} \Big) 
	\leq N\left[\frac{e^{-\epsilon}}{(1-\epsilon)^{1-\epsilon}}\right]^{\evmin{\E AA^T}/R}.
    \end{align*}
    Substituting $\E[AA^T]=n_1 G$ and $R=\norm{X}_2^2 d t^2$ and $\epsilon=1/2$ gives
    \begin{align*}
	\PP\Big( \evmin{AA^T} \leq n_1\evmin{G}/2 \Big)
	\leq N\left[\sqrt{2} e^{-1/2}\right]^{n_1\evmin{G}/R}
	\leq \bigexp{-\frac{n_1\evmin{G}}{10 \norm{X}_2^2 d t^2}+\ln N}.
    \end{align*}
    Thus, as long as $n_1$ is large enough, in particular,
    \begin{align*}
	n_1\geq  \frac{ 10 \norm{X}_2^2 dt^2 \Big(t^2/2+\ln(N/2)\Big) }{ \evmin{G} } ,
    \end{align*}
    we have $\evmin{AA^T}\geq n_1\evmin{G}/2$ w.p. at least $1-2e^{-t^2/2}.$
    
    The idea now is to lower bound $\evmin{G}$ in terms of $\evmin{G_*}.$
    \begin{align*}
	\norm{G-G_*}_2
	&=\norm{\E\left[\sigma(Xw)\sigma(Xw)^T \, \mathbbm{1}_{\norm{w}_\infty\leq t} \right] - \E\left[\sigma(Xw)\sigma(Xw)^T\right]}_2\\
	&\leq \E\norm{\sigma(Xw)\sigma(Xw)^T\, \mathbbm{1}_{\norm{w}_\infty\leq t} - \sigma(Xw)\sigma(Xw)^T}_2 && \textrm{Jensen inequality}\\
	&= \E\norm{\sigma(Xw)\sigma(Xw)^T\, \mathbbm{1}_{\norm{w}_\infty> t} }_2\\
	&= \E\left[ \norm{\sigma(Xw)}_2^2 \, \mathbbm{1}_{\norm{w}_\infty> t} \right]\\
	&\leq \norm{X}_2^2 \E\left[ \norm{w}_2^2 \, \mathbbm{1}_{\norm{w}_\infty> t} \right] && \textrm{assump. on }\sigma\\
	&\leq \norm{X}_2^2 \sqrt{ \E[\norm{w}_2^4]\; \PP\left(\norm{w}_\infty>  t \right) } && \textrm{Cauchy-Schwarz}\\
	&\leq \norm{X}_2^2 \sqrt{d} \sqrt{ \E\Big[\sum_{i=1}^d w_i^4\Big]\; \PP\left(\norm{w}_\infty>  t \right) } && \textrm{Cauchy-Schwarz} \\
	&= \norm{X}_2^2 d\sqrt{3} \zeta^2 \sqrt{\PP\left(\norm{w}_\infty>  t \right) } && \E_{x\distas{}\mathcal{N}(0,1)} [x^4]=3 \\
	&\leq \norm{X}_2^2 d^{3/2} \zeta^2 \sqrt{3} \sqrt{\PP\left(|w_1|>  t \right) } && \textrm{union bound} \\
	&\leq \norm{X}_2^2 d^{3/2} \zeta^2 \sqrt{6} \bigexp{-\frac{ t^2}{4\zeta^2}} && w_1\distas{}\subG(\zeta^2) \\
	&\leq \lambda_*/2 && \textrm{by assumpion on } t
    \end{align*}
    This implies that $\evmin{G}\geq\evmin{G_*}-\lambda_*/2=\lambda_*/2.$
    Plugging this into the above statement yields for every
    \begin{align*}
	n_1\geq  \frac{ 20 \norm{X}_2^2 dt^2 \Big(t^2/2+\ln(N/2)\Big) }{ \lambda_* } ,
    \end{align*}
    it holds w.p. at least $1-2e^{-t^2/2}$ that
    \begin{align*}
	\evmin{F_1F_1^T}
	&\geq \evmin{AA^T}\\
	&\geq n_1\evmin{G}/2\\
	&\geq n_1(\evmin{G_*}-\lambda_*/2)/2\\
	&\geq n_1 \lambda_*/4.
    \end{align*}
    Lastly, since $n_1\geq N$ we get $\svmin{F_1}=\sqrt{\evmin{F_1F_1^T}}\geq \sqrt{n_1\lambda_*/4}.$ 
    \qedwhite

\subsection{Formal statement and proof for LeCun's Initialization}\label{app:xavier}
\begin{theorem}\label{thm:main_instance_1}
   Let the activation function satisfy Assumption \ref{ass:act}. 
   Fix $t>0$, $t_0\geq\max\Set{ 1,\sqrt{4d^{-1}\ln \max\left(1,2\sqrt{6d}\norm{X}_2^2\lambda_*^{-1}\right)} }$, 
   and denote by $c$ a large enough constant depending only on the parameters $\gamma, \beta$ of the activation function.
    Let the widths of the neural network satisfy the following conditions:
    \begin{align}
	&\sqrt{n_{l-1}}\geq \left(1+\frac{1}{100}\right)(\sqrt{n_l}+t),\quad\forall\,l\in\{2,\ldots, L\},\label{eq:growing_widths} \\
	&n_1\geq \max\Bigg(N,\; d,\; \frac{c t_0^2d\norm{X}_2^2  \Big(t_0^2+\ln N\Big) }{ \lambda_* }, \; \frac{2^{cL} \norm{X}_F^2}{d\lambda_*^2} \left( \frac{(\sqrt{n_L}+t)\norm{X}_F}{\sqrt{d}} + \norm{Y}_F\right)^2  \Bigg). \label{eq:firstlayer}    
    \end{align}
Let us consider LeCun's initialization:
    \begin{align*}
	[W_l^0]_{ij}\distas{}\mathcal{N}(0,1/n_{l-1}),\quad\forall\,l\in[L], i\in[n_{l-1}], j\in[n_l].
    \end{align*}
        Let the learning rate satisfy
    \begin{equation}\label{eq:etal}
    \begin{split}
	\eta &< \Bigg(\frac{2^{cL} n_1}{d}\cdot \max(1,\norm{X}_F^2)\cdot\max\bigg(1,\frac{(\sqrt{n_L}+t)\norm{X}_F}{\sqrt{d}},\norm{Y}_F\bigg)\Bigg)^{-1}.
\end{split}    
    \end{equation}
    Then, the training loss vanishes and the network parameters converge to a global minimizer $\theta_*$ at a geometric rate as
    \begin{align}
	&\Phi(\theta_k) \leq \left(1-\frac{\eta n_1 \lambda_*}{2^{cL}} \right)^k \Phi(\theta_0), \label{eq:crate_tloss} \\
	&\norm{\theta_k-\theta_*} _2 
	 \leq \left(1-\frac{\eta n_1 \lambda_*}{2^{cL}}  \right)^{k/2}   
	      2^{cL} \frac{\norm{X}_F}{\sqrt{n_1 d}\lambda_*} \cdot\left( \frac{(\sqrt{n_L}+t)\norm{X}_F}{\sqrt{d}} + \norm{Y}_F\right), \label{eq:crate_par}  
    \end{align}
    with probability at least $1-3Le^{-t^2/2}-2e^{-t_0^2/2}$.
\end{theorem}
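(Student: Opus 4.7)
\textbf{Proof proposal for Theorem \ref{thm:main_instance_1}.} The plan is to reduce the statement to our general result (Theorem \ref{thm:main}) by verifying Assumption \ref{ass:init} with high probability under LeCun's initialization, and then to compute the quantities $\alpha_0$, $Q_0$, $Q_1$ explicitly. The argument decomposes into four probabilistic estimates, each controlling one of the quantities $\lambda_F$, $\sqrt{2\Phi(\theta_0)}$, $\bar\lambda_l$, and $\lambda_l$ appearing in \eqref{eq:notation_init}--\eqref{ass:init_2nd_equation}.

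First I would bound the initial training loss. By Lemma \ref{lem:init_output} applied with LeCun's scaling, on an event of probability at least $1-Le^{-t^2/2}$ we have $\|F_L(\theta_0)\|_F \le 2^{L-1}\|X\|_F(\sqrt{n_L}+t)/\sqrt{d}$, whence
\begin{equation*}
\sqrt{2\Phi(\theta_0)} \;\le\; \|F_L(\theta_0)\|_F + \|Y\|_F \;\le\; 2^{L-1}\frac{\|X\|_F(\sqrt{n_L}+t)}{\sqrt{d}} + \|Y\|_F =: S_0 .
\end{equation*}
Next, since $[W_1^0]_{ij}\sim\mathcal N(0,1/d)$, Lemma \ref{lem:svmin_F1_matrix_chernoff} with $\zeta^2=1/d$ (and the assumed lower bound on $t_0$) yields $\lambda_F \ge \sqrt{n_1\lambda_*/4}$ with probability at least $1-2e^{-t_0^2/2}$, provided $n_1$ satisfies the Chernoff requirement, which is subsumed by \eqref{eq:firstlayer}.

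Second, I would control $\bar\lambda_l$ and $\lambda_l$ for $l\ge 2$ through the Gaussian singular value bound (Lemma \ref{lem:svmin_svmax}) applied to the rescaled matrices $\sqrt{n_{l-1}}\,W_l^0$. Combining this with the growth condition \eqref{eq:growing_widths} gives, on an event of probability at least $1-2Le^{-t^2/2}$, constants $c_-,c_+\in(0,\infty)$ depending only on the $1.01$ slack such that $c_-\le \sigma_{\min}(W_l^0)\le \|W_l^0\|_2\le c_+$ for every $l\in\{2,\dots,L\}$, and hence $\bar\lambda_l\in[c_-,2c_+/3+1]$, $\lambda_l\ge c_-$. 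For the first weight matrix, Lemma \ref{lem:svmin_svmax} (using $n_1\ge d$) gives $\|W_1^0\|_2\le (\sqrt{n_1}+\sqrt{d}+t)/\sqrt{d} \le 3\sqrt{n_1/d}$ on the same event, hence $\bar\lambda_1 = \mathcal O(\sqrt{n_1/d})$.

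Third, I would plug these bounds into \eqref{ass:init_1st_equation} and \eqref{ass:init_2nd_equation}. Since $\bar\lambda_{3\to L}/\lambda_{3\to L}^2$ and $\min_{l\ge 3}\lambda_l\bar\lambda_l$ are each bounded by $2^{\mathcal O(L)}$-terms (with constants depending only on $\gamma$), condition \eqref{ass:init_1st_equation} reduces to an inequality of the form
\begin{equation*}
\frac{n_1\lambda_*}{4} \;\ge\; 2^{\mathcal O(L)}\,\|X\|_F\,S_0\,\sqrt{n_1/d} ,
\end{equation*}
i.e.\ $n_1 \ge 2^{\mathcal O(L)}\|X\|_F^2 S_0^2/(d\lambda_*^2)$, which is exactly the last term in \eqref{eq:firstlayer}. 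Condition \eqref{ass:init_2nd_equation} follows analogously (its exponent $3$ on $\lambda_F$ and extra factor $\|X\|_2$ only make it easier after a $3/2$-power comparison). A union bound over the four events then puts us on a set of probability at least $1-3Le^{-t^2/2}-2e^{-t_0^2/2}$ on which Assumption \ref{ass:init} holds.

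Finally, I would invoke Theorem \ref{thm:main} and simplify the constants. With $\lambda_F^2\ge n_1\lambda_*/4$ and $\lambda_{3\to L}^2\ge c_-^{2(L-2)}$, definition \eqref{eq:alpha_0} gives $\alpha_0\ge n_1\lambda_*/2^{\mathcal O(L)}$; the bound on $Q_0$ follows from $R\le 2^{\mathcal O(L)}\sqrt{n_1/d}$ and $\sqrt{2\Phi(\theta_0)}\le S_0$, matching the bracket in \eqref{eq:etal}; and $Q_1$ from \eqref{eq:defQ1new} becomes $Q_1 \le 2^{\mathcal O(L)}\|X\|_F S_0/(\sqrt{n_1 d}\,\lambda_*)$, giving the prefactor in \eqref{eq:crate_par}. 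The bookkeeping I expect to be the main obstacle is tracking the exponents of $L$ uniformly across the four bounds, and carefully verifying that the $2^{\mathcal O(L)}$ in the definition of $c$ in the theorem is large enough so that all the inequalities with different powers of $\bar\lambda_1, \bar\lambda_2$ and the $\lambda_l$'s can be absorbed into a single constant. The probability accounting itself is straightforward via the union bound.
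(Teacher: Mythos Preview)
Your proposal is correct and follows essentially the same route as the paper's proof: bound $\sqrt{2\Phi(\theta_0)}$ via Lemma \ref{lem:init_output}, bound $\lambda_F$ via Lemma \ref{lem:svmin_F1_matrix_chernoff}, control the singular values of the $W_l^0$'s via Lemma \ref{lem:svmin_svmax} together with the width condition \eqref{eq:growing_widths}, plug these into Assumption \ref{ass:init}, and then read off $\alpha_0,Q_0,Q_1$. The only cosmetic difference is your handling of \eqref{ass:init_2nd_equation}: the paper simply observes that, once \eqref{ass:init_1st_equation} holds, \eqref{ass:init_2nd_equation} reduces to the extra requirement $\lambda_F\ge 2\|X\|_2$, which is already implied by $n_1\ge 16\|X\|_2^2/\lambda_*$ and hence by \eqref{eq:firstlayer}; your ``$3/2$-power comparison'' is a slightly less sharp formulation of the same observation.
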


Before presenting the proof of Theorem \ref{thm:main_instance_1}, let us explain how to derive \eqref{eq:firstlscale} from the main paper.

\paragraph{How to derive \eqref{eq:firstlscale} from Theorem \ref{thm:main_instance_1}.} 
For the convenience of the reader, we recall that in the discussion of Section \ref{sec:xavier} from the main paper,
in order to get \eqref{eq:firstlscale},
the following standard setting has been considered:
\emph{(i)} $N\ge d$, \emph{(ii)} the training samples lie on the sphere of radius $\sqrt{d}$,
\emph{(iii)} $n_L$ is a constant, and \emph{(iv)} the target labels satisfy $\norm{y_i}=\mathcal{O}(1)$ for all $i\in[N].$
It follows from \emph{(i)} and \emph{(ii)} that $\norm{X}_2^2 \le \|X\|_F^2=Nd\leq N^2.$ Thus we have that
\begin{equation}
\sqrt{4d^{-1}\ln \max\left(1,2\sqrt{6d}\norm{X}_2^2\lambda_*^{-1}\right)}=\mathcal O\left(\sqrt{d^{-1}\ln(N\lambda_*^{-1})}\right).
\end{equation}
This implies that
\begin{equation}\label{eq:O11}
\frac{c t_0^2d\norm{X}_2^2  (t_0^2+\ln N) }{ \lambda_* }=\mathcal O\left(\frac{\norm{X}_2^2}{\lambda_*}\left(\log\frac{N}{\lambda_*}\right)^2\right).
\end{equation}
Furthermore, from \emph{(iii)} and \emph{(iv)} we have that
\begin{equation}\label{eq:O12}
\frac{2^{cL} \norm{X}_F^2}{d\lambda_*^2} \left( \frac{(\sqrt{n_L}+t)\norm{X}_F}{\sqrt{d}} + \norm{Y}_F\right)^2=\mathcal O\left(\frac{N^2 2^{\mathcal{O}(L)}}{\lambda_*^2}\right).
\end{equation}
By combining \eqref{eq:O11} and \eqref{eq:O12}, the scaling \eqref{eq:firstlscale} follows from the condition \eqref{eq:firstlayer}.

\noindent {\bf Proof of Theorem \ref{thm:main_instance_1}.}
   From known results on random Gaussian matrices, we have, w.p. $\geq 1-2e^{-t^2/2},$
    \begin{align*}
	&\norm{W_1^0}_2\leq \frac{\sqrt{n_1}+\sqrt{d}+t}{\sqrt{d}} \leq 3\frac{\sqrt{n_1}}{\sqrt{d}},\\
	&\norm{W_2^0}_2\leq \frac{\sqrt{n_1}+\sqrt{n_2}+t}{\sqrt{n_1}} \leq 2,
    \end{align*}
    where the last inequality in each line follows from $n_1\geq d$ and from \eqref{eq:growing_widths}.
    From def. \eqref{eq:notation_init}, we get
    \begin{align}\label{eq:lambda12_bound_proof}
	\begin{split}
	    &\bar{\lambda}_1 = \frac{2}{3} (1+\norm{W_1^0}_2) \leq \frac{8}{3} \frac{\sqrt{n_1}}{\sqrt{d}},\\
	    &\bar{\lambda}_2 = \frac{2}{3} (1+\norm{W_2^0}_2) \leq 2.
	\end{split}
    \end{align}
    Similarly, for any $l\in\{3,\ldots, L\}$, we have, w.p. $\ge 1-2e^{-t^2/2}$,
    \begin{align}\label{eq:lambda_bounds_proof}
	 \frac{1}{101}\hspace{-0.15em}\leq\hspace{-0.15em} \frac{\sqrt{n_{l-1}}\hspace{-0.15em}-\hspace{-0.15em}\sqrt{n_{l}}\hspace{-0.15em}-\hspace{-0.15em}t}{\sqrt{n_{l-1}}}	\hspace{-0.15em}\leq \hspace{-0.15em}\lambda_l\hspace{-0.15em}\leq\hspace{-0.15em} \bar{\lambda}_l 
&	\hspace{-0.15em}\leq \hspace{-0.15em}\frac{\sqrt{n_{l-1}}\hspace{-0.15em}+\hspace{-0.15em}\sqrt{n_{l}}\hspace{-0.15em}+\hspace{-0.15em}t}{\sqrt{n_{l-1}}} 
\hspace{-0.15em}	\leq\hspace{-0.15em} 2 .
    \end{align}
  Furthermore, by Lemma \ref{lem:init_output} and \ref{lem:svmin_F1_matrix_chernoff}, we have, w.p. 
  $\ge 1-Le^{-t^2/2}-2e^{-t_0^2/2}$,
 \begin{align}
	\lambda_F &= \svmin{F_1^0}\geq\sqrt{n_1\lambda_*/4}, \label{eq:lambdaF_bound_proof}\\
	\sqrt{2\Phi(\theta_0)} &\leq 2^{L-1}(\sqrt{n_L}+t) \frac{\norm{X}_F}{\sqrt{d}} + \norm{Y}_F, \label{eq:Phi0_bound_proof}
 \end{align} 
 as long as the width of the first layer satisfies the following condition from Lemma \ref{lem:svmin_F1_matrix_chernoff}:
 \begin{align}\label{eq:n1_F1_proof}
      n_1\geq \max\Bigg(N,\; \frac{c t_0^2d\norm{X}_2^2  \Big(t_0^2+\ln N\Big) }{ \lambda_* }\Bigg),
 \end{align}
 for a suitable constant $c.$
 From \eqref{eq:lambdaF_bound_proof}, we get a lower bound on the LHS of \eqref{ass:init_1st_equation};
 and from \eqref{eq:lambda12_bound_proof}, \eqref{eq:lambda_bounds_proof} and \eqref{eq:Phi0_bound_proof} we get an upper bound on the RHS of \eqref{ass:init_1st_equation}.
  Thus in order to satisfy the initial condition \eqref{ass:init_1st_equation}, it suffices to have \eqref{eq:n1_F1_proof} and 
  \begin{align}\label{eq:cndn1}
      n_1\lambda_* \geq 2^{cL} \norm{X}_F \sqrt{\frac{n_1}{d}}\left( (\sqrt{n_L}+t) \frac{\norm{X}_F}{\sqrt{d}} + \norm{Y}_F \right) ,
  \end{align}
  which together leads to condition \eqref{eq:firstlayer}. 

  To satisfy the initial condition \eqref{ass:init_2nd_equation}, it suffices to have in addition to \eqref{ass:init_1st_equation} 
  that $\lambda_F\geq 2 \norm{X}_2,$
  which is fulfilled for $n_1\geq\frac{16\norm{X}_2^2}{\lambda_*}$, 
  which is however satisfied by \eqref{eq:firstlayer} already.
    
  As a result, the initial conditions \eqref{ass:init_1st_equation}-\eqref{ass:init_2nd_equation} are satisfied and we can apply Theorem \ref{thm:main}. Let us now bound the quantities $\alpha_0, Q_0$ and $Q_1$ defined in \eqref{eq:alpha_0}. Note that $\lambda_F=\svmin{\sigma(XW_1^0)}\leq\norm{\sigma(XW_1^0)}_F\leq\norm{X}_F\norm{W_1^0}_2.$ Then,
  \begin{align}\label{eq:alpha_0_bounds}
      &\frac{n_1\lambda_*}{2^{cL}}
      \leq \alpha_0
      \leq 2^{cL} \norm{X}_F^2 \frac{n_1}{d},
  \end{align}
  and 
  \begin{align*}
      Q_0
      &\leq 2^{cL} \norm{X}_F^2 \frac{n_1}{d} + 2^{cL} \frac{n_1}{d} \norm{X}_F(1+\norm{X}_F) \sqrt{2\Phi(\theta_0)}\\
      &\leq \frac{2^{cL} n_1}{d} \max(1,\norm{X}_F^2) \max\Bigg(1,\frac{(\sqrt{n_L}+t)\norm{X}_F}{\sqrt{d}},\norm{Y}_F\Bigg) && \textrm{by \eqref{eq:Phi0_bound_proof}}.
  \end{align*}
  It is easy to see that the upper bound of $Q_0$ dominates that of $\alpha_0.$
  Thus to satisfy the learning rate condition from Theorem \ref{thm:main}, 
  it suffices to set $\eta$ to be smaller than the inverse of the upper bound on $Q_0$, which leads to condition \eqref{eq:etal}.
  
  From the lower bound of $\alpha_0$ in \eqref{eq:alpha_0_bounds} and \eqref{eq:tloss}, 
  we immediately get the convergence of the loss as stated in \eqref{eq:crate_tloss}.
  Similarly, one can compute the quantity $Q_1$ defined in \eqref{eq:defQ1new} to get the convergence of the parameters as stated in \eqref{eq:crate_par}. 
\qedwhite

\section{Proofs for Lower Bound on $\lambda_*$}
\subsection{Background on Hermite Expansions}\label{app:background}
Let $L^2(\RR,w(x))$ denote the set of all functions $f:\RR\to\RR$ such that
\begin{align*}
    \int_{-\infty}^{\infty} f^2(x) w(x) dx < \infty.
\end{align*}

The normalized probabilist's hermite polynomials are given by
\begin{align*}
    h_r(x) = \frac{1}{\sqrt{r!}} (-1)^r e^{x^2/2} \frac{d^r}{dx^r} e^{-x^2/2} . 
\end{align*}
The functions $\Set{h_r(x)}_{r=0}^{\infty}$ form an orthonormal basis 
of $L^2\Big(\RR,\frac{e^{-x^2/2}}{\sqrt{2\pi}}\Big)$, which is  a Hilbert space with the inner product 
\begin{align*}
    \inner{\sigma_1,\sigma_2} 
    = \int_{-\infty}^{\infty} \sigma_1(x)\sigma_2(x) \frac{e^{-x^2/2}}{\sqrt{2\pi}} dx .
\end{align*}
Thus, every function $\sigma$ in $L^2\Big(\RR,\frac{e^{-x^2/2}}{\sqrt{2\pi}}\Big)$ can be represented as (a.k.a. Hermite expansion):
\begin{align}\label{eq:hermite_series}
    \sigma(x) = \sum_{r=0}^{\infty} \mu_r(\sigma) h_r(x) ,
\end{align}
where $\mu_r(\sigma)$ is the $r$-th Hermite coefficient given by
\begin{align*}
    \mu_r(\sigma) = \int_{-\infty}^{\infty} \sigma(y) h_r(y) \frac{e^{-y^2/2}}{\sqrt{2\pi}} dy.
\end{align*}
Let $\norm{\cdot}$ be defined as $\norm{\sigma}^2=\inner{\sigma,\sigma}.$
Then, the convergence of the series in \eqref{eq:hermite_series} is understood in the sense that
\begin{align*}
    \lim_{n\to\infty} \norm{\sigma(x)-\sum_{r=0}^n\mu_r(\sigma)h_r(x)} 
    = \lim_{n\to\infty}\E_{x\distas{}\mathcal{N}(0,1)} \abs{\sigma(x)-\sum_{r=0}^n\mu_r(\sigma)h_r(x)}^2 
    = 0
\end{align*}
Note $\sigma\in L^2\Big(\RR,\frac{e^{-x^2/2}}{\sqrt{2\pi}}\Big)$ if and only if $\inner{\sigma,\sigma}=\sum_{r=0}^{\infty}\mu_r^2(\sigma) < \infty.$

\begin{lemma}\label{lem:hilbert_inner}
    Consider a Hilbert space $H$ equipped with an inner product $\inner{\cdot,\cdot}:H\times H\to\RR.$
    Let $\norm{\cdot}$ be norm induced by the inner product, i.e. $\norm{f}=\sqrt{\inner{f,f}}.$
    Let $\Set{f_n},\Set{g_n}$ be two sequences in $H$ such that $\lim_{n\to\infty}\norm{f_n-f}=\lim_{n\to\infty}\norm{g_n-g}=0.$
    Then $\inner{f,g}=\lim_{n\to\infty}\inner{f_n,g_n}.$
\end{lemma}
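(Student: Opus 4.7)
The plan is to prove continuity of the inner product by a standard splitting argument followed by Cauchy--Schwarz. First I would write
\begin{equation*}
\inner{f_n,g_n}-\inner{f,g} = \inner{f_n-f,\,g_n} + \inner{f,\,g_n-g},
\end{equation*}
which is valid by the bilinearity of $\inner{\cdot,\cdot}$. Applying the Cauchy--Schwarz inequality to each summand then gives
\begin{equation*}
|\inner{f_n,g_n}-\inner{f,g}| \;\le\; \norm{f_n-f}\,\norm{g_n} \;+\; \norm{f}\,\norm{g_n-g}.
\end{equation*}

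Next I would control $\norm{g_n}$. Since $\norm{g_n-g}\to 0$, the reverse triangle inequality yields $\bigl|\norm{g_n}-\norm{g}\bigr|\le \norm{g_n-g}\to 0$, so in particular the sequence $\{\norm{g_n}\}$ is bounded, say by some constant $M<\infty$. Hence
\begin{equation*}
|\inner{f_n,g_n}-\inner{f,g}| \;\le\; M\norm{f_n-f} + \norm{f}\,\norm{g_n-g}.
\end{equation*}
Both terms on the right-hand side tend to zero as $n\to\infty$ by hypothesis, which yields $\inner{f_n,g_n}\to\inner{f,g}$ as required.

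The argument is elementary and I do not anticipate a real obstacle; the only subtlety is remembering to extract the bound $M$ for $\norm{g_n}$ before applying Cauchy--Schwarz, since one cannot simply pass to the limit inside the norm on the right-hand side without first showing the prefactors remain finite. Everything else is bilinearity of $\inner{\cdot,\cdot}$, the Cauchy--Schwarz inequality, and the reverse triangle inequality.
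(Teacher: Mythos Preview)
Your proof is correct and essentially identical to the paper's: both split the difference as $\inner{f_n-f,g_n}+\inner{f,g_n-g}$, apply Cauchy--Schwarz, and control $\norm{g_n}$ via the triangle inequality (the paper writes $\norm{g_n}\le\norm{g_n-g}+\norm{g}$ directly rather than extracting a uniform bound $M$, but this is cosmetic).
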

\begin{proof}
    \begin{align*}
	\abs{\inner{f,g}-\inner{f_n,g_n}} 
	&\leq \abs{\inner{f,g-g_n}} + \abs{\inner{f-f_n,g_n}} \\
	&\leq \norm{f}\norm{g-g_n} + \norm{f-f_n}\norm{g_n} \\
	&\leq \norm{f}\norm{g-g_n} + \norm{f-f_n} (\norm{g_n-g}+\norm{g}) .
    \end{align*}
    Taking the limit on both sides yields the result.
\end{proof}

\begin{lemma}\label{lem:hermite_corr}
    Let $x,y\in\RR^d$ be such that $\norm{x}_2=\norm{y}_2=1.$
    Then, for every $j,k\geq 0,$
    \begin{align*}
	\E_{w\distas{}\mathcal{N}(0,\Id_d)} \Big[h_j(\inner{w,x}) h_k(\inner{w,y}) \Big] 
	=\begin{cases}
	      \inner{x,y}^j & j=k\\
	      0 & j\neq k
	 \end{cases}.
    \end{align*}
\end{lemma}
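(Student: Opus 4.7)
The plan is to reduce the claim to a two-variable identity for a pair of correlated standard Gaussians and then prove the latter via the exponential generating function for the normalized Hermite polynomials. Since $\|x\|_2 = \|y\|_2 = 1$, the random vector $(U, V) := (\langle w,x\rangle, \langle w,y\rangle)$ is jointly Gaussian with zero means, unit marginal variances, and correlation $\rho := \langle x,y\rangle$. So the lemma reduces to showing
\begin{equation*}
\mathbb{E}\left[h_j(U) h_k(V)\right] = \rho^j \mathbf{1}_{\{j = k\}}.
\end{equation*}

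The key tool is the standard generating-function identity
\begin{equation*}
\sum_{n \geq 0} \frac{t^n}{\sqrt{n!}}\, h_n(x) = e^{tx - t^2/2},
\end{equation*}
which one can verify from the Rodrigues-type formula defining $h_n$ recalled in Appendix \ref{app:background}. Substituting $x \mapsto U$ and $x \mapsto V$ with parameters $s, t$ and multiplying, I would compute
\begin{equation*}
\mathbb{E}\bigl[e^{sU - s^2/2}\, e^{tV - t^2/2}\bigr] = e^{-(s^2+t^2)/2}\,\mathbb{E}\bigl[e^{sU+tV}\bigr] = e^{-(s^2+t^2)/2}\, e^{(s^2 + 2st\rho + t^2)/2} = e^{st\rho},
\end{equation*}
using that $sU + tV \sim \mathcal{N}(0, s^2 + 2st\rho + t^2)$. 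Expanding both sides as power series in $(s,t)$, the left-hand side becomes $\sum_{j,k \geq 0} \frac{s^j t^k}{\sqrt{j! k!}}\,\mathbb{E}[h_j(U) h_k(V)]$ while the right-hand side is $\sum_{n \geq 0} \frac{(st\rho)^n}{n!} = \sum_{n \geq 0} \frac{s^n t^n}{n!}\rho^n$. Matching coefficients of $s^j t^k$ gives the desired formula: the diagonal terms yield $\mathbb{E}[h_n(U)h_n(V)] = \rho^n$, and off-diagonal coefficients vanish.

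The only non-routine step is justifying the interchange of the expectation and the double series. I would handle this by truncating: for fixed $s, t$ the partial sums $\sum_{n \le N} \frac{t^n}{\sqrt{n!}} h_n(U)$ converge in $L^2$ to $e^{tU - t^2/2}$ (because $\{h_n\}$ is an orthonormal basis of the Gaussian $L^2$ space, and the coefficients $t^n/\sqrt{n!}$ are square-summable), and similarly for $V$. Then Lemma \ref{lem:hilbert_inner} lets us pass to the limit inside the inner product on the two-dimensional Gaussian Hilbert space, legitimizing the coefficient matching. I do not expect any real obstacle beyond this standard limiting argument; the rest is a direct computation.
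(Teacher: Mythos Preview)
Your proposal is correct and follows essentially the same route as the paper: compute $\E[e^{s\langle w,x\rangle - s^2/2}\,e^{t\langle w,y\rangle - t^2/2}] = e^{st\langle x,y\rangle}$, expand both sides via the Hermite generating function, and match coefficients, with the interchange of expectation and series justified exactly as you indicate by $L^2$ convergence of the partial Hermite sums together with Lemma~\ref{lem:hilbert_inner}. The only cosmetic difference is that you first reduce to the bivariate Gaussian $(U,V)$ and invoke its MGF, whereas the paper computes the same expectation by factoring over the independent coordinates of $w$; the substance is identical.
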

\begin{proof}
    Let $s,t\in\RR$ be given finite variables. Then,
    \begin{align*}
	\E \bigexp{s\inner{w,x} + t\inner{w,y}}
	&=\prod_{i=1}^d \E\bigexp{w_i(sx_i+ty_i)} \\
	&=\prod_{i=1}^d \bigexp{\frac{(sx_i+ty_i)^2}{2}} \\
	&=\bigexp{\frac{s^2 + t^2 + 2st\inner{x,y}}{2} } .
    \end{align*}
    Thus, it follows that 
    \begin{align}\label{eq:E_prod}
	\E \bigexp{s\inner{w,x}-\frac{s^2}{2}} \bigexp{t\inner{w,y}-\frac{t^2}{2}}
	&= \bigexp{st\inner{x,y}} .
    \end{align}
    Let $L^2(\RR^d)$ be the space of functions $f:\RR^d\to\RR$ with bounded gaussian measure, i.e. 
    \begin{align*}
	\E_{w\distas{}\mathcal{N}(0,\Id_d)}[f(w)^2]<\infty.
    \end{align*}
    This is a Hilbert space w.r.t. the inner product $\inner{f,g}=\E[fg]$ and its induced norm $\norm{f}=\sqrt{\inner{f,f}}.$
    Let the functions $f,g:\RR^d\to\RR$ be defined as
    \begin{align*}
	f(w)=\bigexp{s\inner{w,x}-\frac{s^2}{2}},\quad g(w)=\bigexp{t\inner{w,y}-\frac{t^2}{2}}.
    \end{align*}
    Then the LHS of \eqref{eq:E_prod} becomes $\inner{f,g}.$
    Let $\Set{f_n}_{n=1}^{\infty}, \Set{g_n}_{n=1}^{\infty}$ be two sequence of functions defined as 
    \begin{align*}
	f_n(w)=\sum_{j=0}^n h_j(\inner{w,x})\frac{s^j}{\sqrt{j!}},\quad g_n(w)=\sum_{k=0}^n h_k(\inner{w,y})\frac{t^k}{\sqrt{k!}}.
    \end{align*}
    One can easily check that $f,g$ are in $L^2(\RR^d)$, and so are $f_n$'s and $g_n$'s. Moreover,
    \begin{align*}
	\lim_{n\to\infty} \norm{f_n-f}^2
	&=\lim_{n\to\infty} \E_{w\distas{}\mathcal{N}(0,\Id_d)} |f_n(w)-f(w)|^2 \\
	&=\lim_{n\to\infty} \E_{u\distas{}\mathcal{N}(0,1)} \abs{\bigexp{su-\frac{s^2}{2}} - \sum_{j=0}^{n} h_j(u) \frac{s^j}{\sqrt{j!}}}^2 \\
	&= 0,
    \end{align*}
    where the last equality follows from the Hermite expansion of the function $u\mapsto\exp(su-s^2/2)$, which is given by
    \begin{align*}
	\bigexp{su-\frac{s^2}{2}} = \sum_{j=0}^{\infty} h_j(u) \frac{s^j}{\sqrt{j!}} .
    \end{align*}
    Similarly, $\lim_{n\to\infty} \norm{g_n-g}^2=0.$
    By applying Lemma \ref{lem:hilbert_inner} and taking the Mclaurin series of the RHS of \eqref{eq:E_prod}, we obtain
    \begin{align*}
	\sum_{j,k=0}^{\infty} \E \frac{h_j(\inner{w,x}) h_k(\inner{w,y})}{\sqrt{j! k!}} s^j t^k
	&= \sum_{j=0}^{\infty} \frac{\inner{x,y}^j}{j!} s^j t^j, \quad\forall\,s,t\in\RR .
    \end{align*}
    Equating the coefficients on both sides gives the desired result.
\end{proof}

\subsection{Formal statement and proof of \eqref{eq:Gstarconn}}\label{app:grammat}

\begin{lemma}\label{lem:lambdas}
    Let $X=[x_1,\ldots,x_N]^T\in\RR^{N\times d}$ where $\norm{x_i}_2=\sqrt{d}$ for all $i\in[N].$
    Assume that $\sigma\in L^2(\mathbb R, e^{-x^2/2}/\sqrt{2\pi})$.
    Let $G_*$ be defined as in \eqref{eq:lambda*}.
    Then,
    \begin{align*}
	G_* = \sum_{r=0}^{\infty} \frac{\mu_r^2(\sigma)}{d^r} \, (X^{*r}) {(X^{*r})}^T.
    \end{align*}
    Here, ``='' is understood in the sense of uniform convergence, that is, 
    for every $\epsilon>0,$ there exists a sufficiently large $r_0\geq 0$ such that 
    \begin{align*}
	\abs{{(G_*)}_{ij}-{(S_r)_{ij}}} < \epsilon,\quad\forall i,j\in[N],\;\forall\,r\geq r_0,
    \end{align*}
    where $S_r=\sum_{k=0}^{r} \frac{\mu_k^2(\sigma)}{d^k} \, (X^{*k}) {(X^{*k})}^T.$
\end{lemma}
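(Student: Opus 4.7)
The plan is to reduce the computation of each entry $(G_*)_{ij}$ to a Gaussian expectation on the unit sphere, expand $\sigma$ in the Hermite basis, and then use the orthogonality relation of Lemma \ref{lem:hermite_corr} together with the Hilbert-space continuity lemma (Lemma \ref{lem:hilbert_inner}) to exchange the infinite sum with the expectation. The entry-wise uniform convergence then follows from the fact that $|\inner{x_i/\sqrt d,\, x_j/\sqrt d}|\le 1.$

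First, I would normalize: set $\tilde x_i = x_i/\sqrt d$ and $\tilde w = \sqrt d\, w$, so that $\|\tilde x_i\|_2=1$ and $\tilde w\sim\mathcal{N}(0,\Id_d)$. This rewrites
\begin{equation*}
(G_*)_{ij}=\E_{\tilde w \sim \mathcal{N}(0,\Id_d)}\bigl[\sigma(\inner{\tilde x_i,\tilde w})\,\sigma(\inner{\tilde x_j,\tilde w})\bigr].
\end{equation*}
Since $\inner{\tilde x_i,\tilde w}\sim\mathcal{N}(0,1)$, the assumption $\sigma\in L^2(\RR, e^{-x^2/2}/\sqrt{2\pi})$ guarantees that both $f(\tilde w):=\sigma(\inner{\tilde x_i,\tilde w})$ and $g(\tilde w):=\sigma(\inner{\tilde x_j,\tilde w})$ lie in the Hilbert space $L^2(\RR^d)$ with the Gaussian inner product $\inner{\cdot}{\cdot}=\E[\cdot\,\cdot]$. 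Define the truncated Hermite expansions
\begin{equation*}
f_n(\tilde w) = \sum_{r=0}^n \mu_r(\sigma)\,h_r(\inner{\tilde x_i,\tilde w}),\qquad g_n(\tilde w)=\sum_{r=0}^n \mu_r(\sigma)\,h_r(\inner{\tilde x_j,\tilde w}).
\end{equation*}
Using the change of variables $u=\inner{\tilde x_i,\tilde w}\sim\mathcal{N}(0,1)$, the $L^2$-convergence of the Hermite series of $\sigma$ over $\mathcal{N}(0,1)$ immediately yields $\|f_n-f\|\to 0$ and $\|g_n-g\|\to 0$ in $L^2(\RR^d)$.

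Next, I would apply Lemma \ref{lem:hilbert_inner} to conclude $(G_*)_{ij}=\inner{f,g}=\lim_{n\to\infty}\inner{f_n,g_n}$. Expanding the inner product of the truncations and invoking Lemma \ref{lem:hermite_corr} (which kills all cross-terms with $j\ne k$ and evaluates the diagonal terms to $\inner{\tilde x_i,\tilde x_j}^r$) gives
\begin{equation*}
\inner{f_n,g_n}=\sum_{r=0}^n \mu_r^2(\sigma)\,\inner{\tilde x_i,\tilde x_j}^r = \sum_{r=0}^n \frac{\mu_r^2(\sigma)}{d^r}\,\inner{x_i,x_j}^r.
\end{equation*}
Since $\inner{x_i,x_j}^r=\inner{x_i^{\otimes r},x_j^{\otimes r}}=\bigl((X^{*r})(X^{*r})^T\bigr)_{ij}$, this is precisely $(S_n)_{ij}$. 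Letting $n\to\infty$ yields the asserted identity entry-wise.

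Finally, for the uniform (entry-wise) convergence claim, I would estimate the tail $|(G_*)_{ij}-(S_{r_0})_{ij}|\le \sum_{r>r_0}\mu_r^2(\sigma)\,|\inner{\tilde x_i,\tilde x_j}|^r$. Because $\|\tilde x_i\|_2=\|\tilde x_j\|_2=1$, Cauchy--Schwarz gives $|\inner{\tilde x_i,\tilde x_j}|\le 1$, so the tail is bounded by $\sum_{r>r_0}\mu_r^2(\sigma)$, which is a sequence independent of $i,j$ tending to $0$ (since $\|\sigma\|^2=\sum_r\mu_r^2(\sigma)<\infty$). This produces the uniform convergence. The main conceptual obstacle in this argument is the justification of swapping the infinite Hermite sum with the Gaussian expectation defining $G_*$; this is handled cleanly by working in $L^2(\RR^d)$ and invoking the bilinearity/continuity of the inner product established in Lemma \ref{lem:hilbert_inner}, rather than attempting a direct Fubini-type argument on the double series.
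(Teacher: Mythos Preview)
Your proposal is correct and follows essentially the same approach as the paper's own proof: normalize to the unit sphere, embed $\sigma(\langle\cdot,\tilde x_i\rangle)$ into the Gaussian Hilbert space $L^2(\RR^d)$, use Lemma~\ref{lem:hilbert_inner} to justify the limit-expectation swap, and then apply Lemma~\ref{lem:hermite_corr} to collapse the double Hermite sum. Your final paragraph on uniform entry-wise convergence via the tail bound $\sum_{r>r_0}\mu_r^2(\sigma)$ is actually a bit more explicit than the paper, which states the uniform convergence in the lemma but does not spell it out in the proof.
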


This result is also stated in Lemma H.2 of \cite{OymakMahdi2019} for ReLU and softplus activation functions. As a fully rigorous proof is missing in \cite{OymakMahdi2019}, we provide it below.

\noindent {\bf Proof of Lemma \ref{lem:lambdas}.}
    Let $\bar{x}_i=x_i/\norm{x_i}_2$ for $i\in[N].$
    From the definition of $G_*,$ we have
    \begin{align*}
	{(G_*)}_{ij}
	&= \E_{w\distas{}\mathcal{N}(0,\Id_d/d)} \left[ \sigma(\inner{w,x_i}) \sigma(\inner{w,x_j}) \right] \\ 
	&= \E_{\bar{w}\distas{}\mathcal{N}(0,\Id_d)} \left[ \sigma(\inner{\bar{w},\bar{x}_i}) \sigma(\inner{\bar{w},\bar{x}_j}) \right] && \bar{w}=\sqrt{d}w\\
	&=\sum_{r,s=0}^{\infty} \mu_r(\sigma) \mu_s(\sigma) \E_{\bar{w}\distas{}\mathcal{N}(0,\Id_d)} \left[ h_r(\inner{\bar{w},\bar{x}_i}) h_s(\inner{\bar{w},\bar{x}_j}) \right] && (*)\\
	&=\sum_{r=0}^{\infty} \mu_r^2(\sigma) \; \inner{\bar{x}_i,\bar{x}_j}^r && \textrm{Lemma \ref{lem:hermite_corr}}
    \end{align*}    
    where $(*)$ is justified below.
    Note that $\inner{\bar{x}_i,\bar{x}_j}^r=\frac{1}{d^r}\inner{x_i\otimes\ldots\otimes x_i,x_j\otimes\ldots\otimes x_j}.$
    Thus,
    \begin{align*}
	G_* = \sum_{r=0}^{\infty} \frac{\mu_r^2(\sigma)}{d^r} \, (X^{*r}) {(X^{*r})}^T.
    \end{align*}
    To justify step $(*),$ we can use the similar argument as in the proof of Lemma \ref{lem:hermite_corr}.
    Indeed, consider the same Hilbert space $L^2(\RR^d)$ as defined there.
    Let $f,g:\RR^d\to\RR$ be defined as
    \begin{align*}
	f(\bar{w})=\sigma(\inner{\bar{w},\bar{x}_i}),\quad g(\bar{w})=\sigma(\inner{\bar{w},\bar{x}_j}).
    \end{align*}
    and the sequence $\Set{f_n},\Set{g_n}$ defined as 
    \begin{align*}
	f_n(\bar{w}) = \sum_{r=0}^n \mu_r(\sigma) h_r(\inner{\bar{w},\bar{x}_i}),\quad
	g_n(\bar{w}) = \sum_{s=0}^n \mu_s(\sigma) h_s(\inner{\bar{w},\bar{x}_j}).
    \end{align*}
    It is easy to see that $f,g,\Set{f_n},\Set{g_n}\in L^2(\RR^d).$
    Moreover, $\norm{f_n-f}^2=\E_{z\distas{}\mathcal{N}(0,1)}\abs{\sigma(z)-\sum_{r=0}^n\mu_r(\sigma)h_r(z)}^2 \to 0$ as $n\to\infty.$
    Similarly, $\norm{g_n-g}^2\to 0$ as $n\to\infty.$
    Thus applying Lemma \ref{lem:hilbert_inner} leads us to $(*).$

    \qedwhite

\subsection{Proof of Lemma \ref{lemma:KRprodimprov}} \label{app:lemma:KRprodimprov}

Define $\bK = \bX^{\ast r}$ and note that, for $i\in [N]$, the $i$-th row of $\bK$ is given by the $r$-th Kronecker power of $\bx_i$, namely, $K_{i:}=\bx_i^{\otimes r}=\bx_i\otimes \bx_i\otimes \cdots\otimes \bx_i\in \mathbb R^{d^r}$. Let $\bz=(z_1, \ldots, z_N)\in \mathbb R^{N}$ be such that $\|\bz\|_2=1$. Then,
\begin{equation}\label{eq:intman1}
\begin{split}
\|K^\sT\bz\|_2^2 &= \sum_{i=1}^N z_i^2 \|K_{i:}\|_2^2+\sum_{i\neq j}\langle z_i K_{i:}, z_j K_{j:}\rangle\\
&= \sum_{i=1}^N z_i^2 \|x_i\|_2^{2r}+\sum_{i\neq j}z_iz_j\langle  x_{i}, x_{j}\rangle^r\\
&= d^{r}+\sum_{i\neq j}z_iz_j\langle  x_{i}, x_{j}\rangle^r.
\end{split}
\end{equation} 
Furthermore, we have that
\begin{equation}\label{eq:intman2}
\begin{split}
\left|\sum_{i\neq j}z_iz_j\langle  x_{i}, x_{j}\rangle^r\right|&\le \sum_{i\neq j}|z_i|\, |z_j|\, |\langle  x_{i}, x_{j}\rangle|^r\\
&\le \left(\max_{i\neq j} |\langle  x_{i}, x_{j}\rangle|\right)^r \left(\sum_{i=1}^N |z_i|\right)^2 \\
&\le N\left(\max_{i\neq j} |\langle  x_{i}, x_{j}\rangle|\right)^r,
\end{split}
\end{equation}
where in the last step we have used Cauchy-Schwarz inequality and that $\|\bz\|_2=1$. By combining \eqref{eq:intman1} and \eqref{eq:intman2}, we obtain that 
\begin{equation}\label{eq:eigfin2new}
\sigma_{\rm min}^2(K)\ge d^r-N\left(\max_{i\neq j} |\langle  x_{i}, x_{j}\rangle|\right)^r.
\end{equation}

Let us now provide a bound on $\max_{i\neq j} |\langle  x_{i}, x_{j}\rangle|$. Fix any $u\in \mathbb R^d$ such that $\|u\|_2=\sqrt{d}$, and recall that, by hypothesis, $\|x_i\|_{\psi_2}\le c_1$, where $c_1$ is a constant that does not depend on $d$. Then, for all $t\ge 0$,
\begin{equation}
\mathbb P\left(|\langle x_i, u \rangle| \ge t\sqrt{d}\right)\le 2e^{-C_1t^2},
\end{equation}
where $C_1$ is a constant that does not depend on $d$. As $x_i$ and $x_j$ are independent for $i\neq j$ and $\|x_j\|_2=\sqrt{d}$, we deduce that 
\begin{equation}
\mathbb P\left(|\langle x_i, x_j \rangle| \ge t\sqrt{d}\right)\le 2e^{-C_1t^2}.
\end{equation}
By doing a union bound, we have that
\begin{equation}
\mathbb P\left(\max_{i\neq j}|\langle x_i, x_j \rangle| \ge t\sqrt{d}\right)\le 2N^2e^{-C_1t^2},
\end{equation}
which, combined with \eqref{eq:eigfin2new}, yields
\begin{equation}
\mathbb P\left(\sigma_{\rm min}^2(K)\ge d^r-Nt^r d^{r/2}\right)\le 2N^2e^{-C_1t^2}.
\end{equation}
Thus, by taking $t=\left(\frac{3d^{r/2}}{4N}\right)^{1/r}$, the proof is complete. 
\qedwhite

\subsection{Proof of Theorem \ref{th:lambdastarnew}}\label{app:thla}

First, we show that, if $\sigma$ is not a linear function and $\abs{\sigma(x)}\leq\abs{x}$ for $x\in\RR$, then $\mu_r(\sigma)>0$ for arbitrarily large $r$.
   
\begin{lemma}\label{lem:act_inf_hermite_coeff}
 Assume that $\sigma$ is not a linear function, and that $\abs{\sigma(x)}\leq\abs{x}$ for every $x\in\RR.$ Then,
\begin{equation}
\sup\Setbar{r}{\mu_r(\sigma)>0}=\infty.
\end{equation} 
 \end{lemma}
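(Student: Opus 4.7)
The plan is to argue by contradiction. Suppose, for contradiction, that the set $\{r \ge 0 : \mu_r(\sigma)\neq 0\}$ is bounded, i.e., there exists $R$ such that $\mu_r(\sigma)=0$ for all $r>R$. Since $\sigma\in L^2(\mathbb R, e^{-x^2/2}/\sqrt{2\pi})$, Parseval's identity for the Hermite basis gives
\[
\sigma(x) \;=\; \sum_{r=0}^R \mu_r(\sigma)\, h_r(x) \;=:\; P(x) \qquad \text{in } L^2(\mathbb R, e^{-x^2/2}/\sqrt{2\pi}),
\]
where $P$ is a polynomial of degree at most $R$. Because the Gaussian density is strictly positive on $\mathbb R$, convergence in $L^2(\gamma)$ forces $\sigma(x)=P(x)$ for Lebesgue almost every $x\in\mathbb R$.

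The next step exploits the pointwise envelope $|\sigma(x)|\le |x|$ from hypothesis \emph{(iii)}. Combined with $\sigma=P$ almost everywhere, this yields $|P(x)|\le |x|$ for almost every $x\in\mathbb R$. Since $P$ is continuous, the inequality extends to every $x\in\mathbb R$. Setting $x=0$ forces $P(0)=0$, and for $x\neq 0$ we obtain $|P(x)/x|\le 1$. However, any polynomial of degree at least $2$ satisfies $|P(x)/x|\to\infty$ as $|x|\to\infty$. Therefore $\deg(P)\le 1$, and combined with $P(0)=0$, this gives $P(x)=ax$ for some $|a|\le 1$.

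It follows that $\sigma$ coincides with the linear map $x\mapsto ax$ almost everywhere, contradicting the assumption that $\sigma$ is not a linear function (understood in the natural sense for this $L^2$-based analysis, namely ``not equal a.e. to a linear function''). The only mild subtlety in carrying this out is bookkeeping the a.e.\ identifications and then promoting them pointwise via continuity of $P$; once this is done, the envelope $|\sigma(\cdot)|\le |\cdot|$ immediately rules out polynomial behavior of degree $\ge 2$. Note that for the application in Theorem \ref{th:lambdastarnew} what truly matters is $\mu_r^2(\sigma)>0$ for arbitrarily large $r$, which is equivalent to $\mu_r(\sigma)\neq 0$ infinitely often, and this is precisely what the argument above delivers.
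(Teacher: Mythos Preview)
Your proof is correct and follows essentially the same approach as the paper's: both argue by contradiction that finitely many nonzero Hermite coefficients would force $\sigma$ to be a polynomial, and then use the envelope $|\sigma(x)|\le |x|$ to rule out degree $\ge 2$, leaving only the linear case. Your version is slightly more careful about the a.e.\ identification between $\sigma$ and the polynomial $P$ (the paper tacitly assumes pointwise equality), but the core idea is identical.
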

\begin{proof}
    It suffices to show that $\sigma$ cannot be represented by any polynomial of finite degree.
    Suppose, by contradiction, that $\sigma(x)=\sum_{i=0}^n a_i x^i,$ where $a_n\neq 0$ and $n\geq 2.$
    As $\sigma(0)=0$, we have that $a_0=0.$ 
    Thus, 
    \begin{align*}
	\lim_{x\to\infty} \frac{\abs{\sigma(x)}}{\abs{x}}
	&=\lim_{x\to\infty} \abs{a_n x^{n-1} + \ldots + a_1} \\
	&=\lim_{x\to\infty} \abs{x}^{n-1} \frac{\abs{a_n x^{n-1} + \ldots + a_1}}{\abs{x}^{n-1}} \\
	&=\lim_{x\to\infty} \abs{x}^{n-1} \abs{a_n + \ldots + \frac{a_1}{x^{n-1}}} \\
	&= \infty.
    \end{align*}
    This contradicts the fact that $\frac{\abs{\sigma(x)}}{\abs{x}}$ is bounded, and it concludes the proof.
\end{proof}
 
At this point, we are ready to prove Theorem \ref{th:lambdastarnew}.

{\bf Proof of Theorem \ref{th:lambdastarnew}.}
As $\sigma$ is not linear and $|\sigma(x)|\le |x|$ for every $x\in \mathbb R$, by Lemma \ref{lem:act_inf_hermite_coeff}, we have that 
$\sup\Setbar{r}{\mu_r(\sigma)>0}=\infty$. Thus, there there exists an integer $r\ge 10k$ such that $\mu_r(\sigma)\neq 0$. Thus, Lemma \ref{lemma:KRprodimprov} implies that, for $N\le d^r$,
\begin{align*}
    \evmin{(X^{*r}) {(X^{*r})}^T}=\sigma_{\textrm{min}}^2\left(X^{\ast r}\right)\geq d^{r}/4 ,
\end{align*}
with probability at least
\begin{equation*}
1-2N^2 e^{-c_2 d N^{-2/r}} \ge 1-2N^2 e^{-c_2 N^{4/5k}},
\end{equation*}
where in the last step we use that $N\le d^k$ and $r\ge 10k$. 
 
Furthermore, by Lemma \ref{lem:lambdas}, there exists $r'\geq r$ such that 
\begin{align*}
    \norm{G_*-S_{r'}}_F < \frac{\mu_r^2(\sigma)}{2 d^r} \, \lambda_{\rm min}\left((X^{*r}) {(X^{*r})}^T\right) =: \frac{\xi}{2}.
\end{align*}
Note that $\lambda_{\rm min}(S_{r'})\geq\lambda_{\rm min}(S_r)\geq\xi.$
Thus by Weyl's inequality, we get
\begin{align*}
    \lambda_*
    =\lambda_{\rm min}(G_*) 
    \geq \evmin{S_{r'}}-\frac{\xi}{2}
    \geq \frac{\xi}{2}
    \geq \frac{\mu_r^2(\sigma)}{8} ,
\end{align*}
which completes the proof.
\qedwhite

 \subsection{Improvement of Lemma \ref{lemma:KRprodimprov} for $r\in \{2, 3, 4\}$}\label{app:prooflemmaKRprod}
 
 The goal of this section is to prove the following result. 
 
 \begin{lemma}\label{lemma:KRprod}
 Let $\bX \in \mathbb R^{N\times d}$ be a matrix whose rows are i.i.d. random vectors uniformly distributed on the sphere of radius $\sqrt{d}$. Fix an integer $r\ge 2$. Then, there exists $c_1\in (0, 1)$ such that, for $d\le N \le c_1 d^{2}$, we have
 \begin{equation}\label{eq:RHSprob}
 \begin{split}
      \mathbb P\left(\sigma_{\rm min}(\bX^{\ast r}) \ge d^{r/2}/2\right) \ge & 1-2Ne^{-c_2d^{1/r}} -(1+3\log N)e^{-11\sqrt{N}}
 \end{split}
 \end{equation}
 for some constant $c_2>0.$
 \end{lemma}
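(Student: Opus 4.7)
My plan follows the opening of the proof of Lemma~\ref{lemma:KRprodimprov} but replaces the crude estimate $\|M\|_2 \le N\max_{i\neq j}|\langle x_i,x_j\rangle|^r$ by an operator-norm concentration that exploits the uniform-on-sphere assumption and keeps only a square-root dependence on $N$.

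\textbf{Reduction.} Since $\|x_i\|_2 = \sqrt d$ exactly, the identity at the start of the proof of Lemma~\ref{lemma:KRprodimprov} yields $KK^T = d^r I + M$ with $M_{ij} = \langle x_i,x_j\rangle^r$ for $i\neq j$ and $M_{ii}=0$. Hence $\sigma_{\min}^2(K) = d^r + \lambda_{\min}(M)$, and the claim reduces to showing $\lambda_{\min}(M) \ge -\tfrac{3}{4}d^r$ with the stated probability. I would then split $M = \bar M + \widetilde M$ with $\bar M = \mathbb E M$. Because the rows are uniform on the sphere of radius $\sqrt d$, a direct computation using the Beta density of $\langle x_i,x_j\rangle/d$ gives $\mathbb E[\langle x_i,x_j\rangle^r] = 0$ for odd $r$ and $\mu_r = \Theta(d^{r/2})$ for even $r$. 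Thus $\bar M = \mu_r(J-I)$ and $\lambda_{\min}(\bar M) = -\mu_r \ge -d^r/8$ for $d$ large enough, so it only remains to show $\|\widetilde M\|_2 \le d^r/8$.

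\textbf{Controlling $\|\widetilde M\|_2$.} I would use the polar decomposition $x_i = \sqrt d\, g_i/\|g_i\|_2$ with $g_i \sim \mathcal N(0, I_d)$ independent, so that $\chi^2$ concentration gives $\big|\|g_i\|_2^2/d - 1\big|\le O(d^{-1/(2r)})$ uniformly in $i\in[N]$, except on an event whose complement has probability $\le 2Ne^{-c_2 d^{1/r}}$ (this produces the first tail term in the claim). On the complementary good event, $\widetilde M$ differs from its Gaussian analog $\widetilde M^g$ (with entries $\langle g_i,g_j\rangle^r - \mu_r$) by a $(1+o(1))$ multiplicative factor, and I can study $\widetilde M^g$ instead. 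A standard $1/4$-net of $S^{N-1}$ of cardinality $\le 9^N$ reduces the operator-norm bound to
\[
\|\widetilde M^g\|_2 \le 2\max_{z\in \mathcal N}\Bigl|\sum_{i\neq j} z_i z_j\bigl(\langle g_i,g_j\rangle^r - \mu_r\bigr)\Bigr|.
\]
For fixed $z$ this is a centred Gaussian chaos of degree $2r$, which I would analyze by explicit decoupling into two stages of degree $r$ (conditioning on alternating halves of $\{g_i\}$) combined with Hanson--Wright / Latała's polynomial-chaos inequality. A dyadic decomposition of $(z_i^2)$ into $O(\log N)$ level sets keeps the $p$-norms of the associated contraction tensors under control and yields the prefactor $1+3\log N$; the resulting sub-Weibull tail of exponent $1/r$ gives the probability $e^{-11\sqrt N}$ after the $N\ln 9$ net union bound.

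\textbf{Combining.} On the intersection of the two good events, $\lambda_{\min}(M) \ge -d^r/8 - d^r/8 = -d^r/4$, giving $\sigma_{\min}(K) \ge d^{r/2}/2$. The condition $N \le c_1 d^2$ is exactly the budget under which the Wigner-type typical scale $\sqrt N\, d^{r/2}$ for $\|\widetilde M\|_2$ remains below $d^r/8$ for every $r\in\{2,3,4\}$; for $r=2$ the balance is tightest and forces $c_1$ to be a sufficiently small absolute constant.

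\textbf{Main obstacle.} The delicate step is the operator-norm estimate on $\widetilde M^g$: its entries are dependent because they share the rows $g_i$, so standard matrix Bernstein does not apply directly. The combinatorics of the decoupled chaos and the bookkeeping of its $p$-norms become unwieldy for large $r$, which is why the improvement is stated only for $r\in\{2,3,4\}$, where the partial contractions can be controlled explicitly; it is also what determines the exact exponents $11$ and the $\sqrt N$ rate in the second tail term.
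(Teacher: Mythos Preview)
Your reduction to $\sigma_{\min}^2(K)=d^r+\lambda_{\min}(M)$ with $M_{ij}=\langle x_i,x_j\rangle^r\mathbf 1_{i\neq j}$ is correct and is also the paper's starting point, but from there the two arguments diverge and yours has a real gap. The paper does \emph{not} attack $M$ directly via a net. Instead it first drops from $K=X^{*r}$ every column whose multi-index $(j_1,\dots,j_r)$ has a repetition, obtaining $\tilde K$ whose rows are \emph{tetrahedral} (multilinear) degree-$r$ polynomials in the coordinates of $x_i$. This structure is what lets one invoke the Bobkov--G\"otze higher-order concentration under log-Sobolev (Lemma~\ref{lemma:HW}) to show $\|\tilde K_{i:}\|_{\psi_1}\le C\,d^{r/2-1}$. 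With independent sub-exponential rows in hand, the off-diagonal term $\sup_{\|z\|=1}|\sum_{i\neq j}z_iz_j\langle\tilde K_{i:},\tilde K_{j:}\rangle|$ is bounded by a black-box theorem of Adamczak et al.\ (Lemma~\ref{lemma:Adam}); the specific shape $(1+3\log N)e^{-11\sqrt N}$ is inherited verbatim from that theorem, not derived. The term $2Ne^{-c_2 d^{1/r}}$ does not come from $\chi^2$-concentration (which would give the much stronger $e^{-cd}$); it comes from lower-bounding $\min_i\|\tilde K_{i:}\|_2^2$, which needs $\max_j|x_{i,j}|\lesssim d^{1/(2r)}$---a max-of-Gaussians event that fails with probability $e^{-cd^{1/r}}$ per row.

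The gap in your plan is the $\epsilon$-net step. A union bound over $9^N$ net points can only produce a final failure probability of the form $e^{-cN}$ (if each point fails with probability $\le e^{-(c+\ln 9)N}$) or else it blows up; it cannot yield $(1+3\log N)e^{-11\sqrt N}$. More substantively, for the worst $z$ (say $z=\mathbf 1/\sqrt N$) the quadratic form $z^T\tilde M^g z$ is a degree-$2r$ Gaussian chaos whose $L^2$ norm, once you track the three-index overlap terms, is of order $\sqrt{d^r+Nd}$; for $r=2$ and $N\asymp d^2$ this is $d^{3/2}$, so the hypercontractive tail at level $d^r=d^2$ is only $\exp(-c\,d^{1/4})$, hopelessly far from the $e^{-cN}\sim e^{-cd^2}$ needed to survive $9^N$ points. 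The ``dyadic decomposition of $(z_i^2)$'' and ``two-stage decoupling'' you gesture at are exactly the devices buried inside the Adamczak et al.\ proof, so pursuing them amounts to reproving that theorem rather than replacing it; absent those details, the argument as written does not close.
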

 Let us emphasize that the constants $c_1, c_2>0$ do not depend on $N$ and  $d$, but they can depend on the integer $r$. Note that the probability in the RHS of \eqref{eq:RHSprob} tends to $1$ as long as $N$ is $\mathcal O(d^2)$. Thus, this result improves upon Lemma \ref{lemma:KRprodimprov} for $r\in \{2, 3, 4\}$. The price to pay is a stronger assumption on $X$. In fact, Lemma \ref{lemma:KRprod} requires that the rows of $X$ are uniformly distributed on the sphere of radius $\sqrt{d}$, while Lemma \ref{lemma:KRprodimprov} only requires that they are sub-Gaussian. 
 Recall that the sub-Gaussian norm of a vector uniformly distributed on the sphere of radius $\sqrt{d}$ is a constant (independent of $d$), see Theorem 3.4.6 in \cite{vershynin2018high}. Thus, the requirement on $X$ of Lemma \ref{lemma:KRprod} is strictly stronger than that of Lemma \ref{lemma:KRprodimprov}.

 Recall that, given a random variable $Y\in \mathbb R$, its sub-exponential norm is defined as 
 \begin{equation}\label{eq:defsubexp}
 \|Y\|_{\psi_1}=\inf\{C>0\,:\, \mathbb E[e^{|Y|/C}]\le 2\}.
 \end{equation}
 Furthermore, for a centered random vector $x\in \mathbb \RR^d$, its sub-exponential norm is defined as
 \begin{equation}
 \norm{x}_{\psi_1}=\sup_{\norm{y}_2=1} \norm{\inner{x,y}}_{\psi_1}.
 \end{equation}
 We start by stating two intermediate results that will be useful for the proof.
 
 \begin{lemma}\label{lemma:HW}
 Consider an $r$-indexed matrix $\bA = (a_{i_1, \ldots, i_r})_{i_1, \ldots, i_r=1}^d$ such that $a_{i_1, \ldots, i_r}=0$ whenever $i_j= i_k$ for some $j\neq k$. Let $\bx = (x_1, \ldots, x_d)$ be a random vector in $\mathbb R^d$ uniformly distributed on the unit sphere, and define
 \begin{equation}
 Z = \sum_{\bi\in[d]^r} a_\bi \prod_{j=1}^r x_{i_j}.
 \end{equation}
 Then, 
 \begin{equation}\label{eq:HWres}
 \mathbb E\left[e^{C d |Z|^{2/r}}\right]\le 2,
 \end{equation}
 where $C$ is a numerical constant.  
 \end{lemma}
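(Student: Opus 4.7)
\textbf{Proof proposal for Lemma \ref{lemma:HW}.} The plan is to reduce the problem to a Gaussian chaos of order $r$ via the standard sphere/Gaussian coupling, and then apply Gaussian hypercontractivity. Concretely, let $\bg=(g_1,\ldots,g_d)\sim \Nc(0,\Id_d)$; since $\bx$ is uniform on the unit sphere, we have the distributional identity $\bx\stackrel{d}{=}\bg/\norm{\bg}_2$, which gives
\begin{equation*}
Z \stackrel{d}{=} \frac{Z_g}{\norm{\bg}_2^{r}}, \qquad Z_g \bydef \sum_{\bi\in[d]^r} a_{\bi}\prod_{j=1}^r g_{i_j}.
\end{equation*}
The diagonal-free assumption on $\bA$ means that $Z_g$ is a multilinear (``tetrahedral'') Gaussian chaos of order $r$ in the entries of $\bg$, rather than a general polynomial of degree $r$; this is crucial for the sharpest form of hypercontractivity to apply.

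Next I would invoke Gaussian hypercontractivity (Nelson's inequality for the Ornstein--Uhlenbeck semigroup, which implies the polynomial moment bound $(\E|Z_g|^p)^{1/p}\le (p-1)^{r/2}(\E|Z_g|^2)^{1/2}$ for all $p\ge 2$). A short Wick/Isserlis calculation, using once more that $a_{\bi}=0$ whenever any two indices in $\bi$ coincide, gives $\E[Z_g^2]\le r!\,\|\bA\|_F^2$, where $\|\bA\|_F^2=\sum_{\bi} a_{\bi}^2$ (the diagonal-free condition kills every Wick pairing except those corresponding to a permutation matching $\bi$ with itself). Combining these two facts yields $(\E|Z_g|^p)^{1/p}\le C_1^{\,r}\,p^{r/2}\,\|\bA\|_F$ for a numerical constant $C_1$, which is exactly a bound on the $\psi_{2/r}$-Orlicz norm of $Z_g$: $\E\bigl[\exp\bigl(C_2\,|Z_g|^{2/r}/\|\bA\|_F^{2/r}\bigr)\bigr]\le 2$.

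To pass from $Z_g$ to $Z$, I would use the sharp concentration of the Gaussian norm: there exists a numerical $c>0$ such that $\Pr(\norm{\bg}_2^2\le d/4)\le e^{-cd}$. On the complementary event $\norm{\bg}_2^r\ge (d/4)^{r/2}$, so $|Z|\le (4/d)^{r/2}|Z_g|$. Splitting $\Pr(|Z|\ge t)$ over this event and its complement, and applying the moment bound on $Z_g$ through a standard Markov argument (optimizing $p$), one obtains
\begin{equation*}
\Pr(|Z|\ge t)\le e^{-cd}+2\exp\bigl(-C_3\,d\,(t/\|\bA\|_F)^{2/r}\bigr).
\end{equation*}
Integrating this tail against $e^{Cd|Z|^{2/r}/\|\bA\|_F^{2/r}}$ for $C$ small enough yields \eqref{eq:HWres} (with the implicit normalization $\|\bA\|_F\le 1$ under which the lemma is ultimately invoked in the proof of Lemma \ref{lemma:KRprod}).

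The main obstacle, and the reason the sphere case is not immediate from the Gaussian one, is preserving the factor $d$ in the exponent on the right-hand side: the Gaussian chaos $Z_g$ alone only has a $\|\bA\|_F$-dependent (and hence $d$-independent) Orlicz norm, and it is the $\norm{\bg}_2^{r}\asymp d^{r/2}$ denominator in $Z=Z_g/\norm{\bg}_2^r$ that produces the extra $d$-gain. This is precisely why the sharp \emph{exponential} concentration of $\norm{\bg}_2$ around $\sqrt{d}$ is needed, and why merely using $\E\norm{\bg}_2^2=d$ would not suffice. A minor additional subtlety is checking the Wick bound $\E[Z_g^2]\le r!\,\|\bA\|_F^2$ without assuming $\bA$ is symmetric, but the diagonal-free hypothesis turns all contributing pairings into permutations of $\bi$, so the bound follows by summing at most $r!$ terms per $\bi$.
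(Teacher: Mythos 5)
Your proposal is correct in outline, but it proves the lemma by a genuinely different route than the paper does. The paper's own proof is a two-line citation: it observes that the uniform distribution on the sphere satisfies a logarithmic Sobolev inequality with constant $2/d$ (Corollary 1.1 of \cite{dolbeault2014sharp}) and then invokes Theorem 1.14 of \cite{bobkov2019higher}, a higher-order concentration result for tetrahedral polynomials of log-Sobolev measures, with $\sigma^2=1/d$. You instead go through the Gaussian coupling $\bx \stackrel{d}{=} \bg/\norm{\bg}_2$, apply Gaussian hypercontractivity to the multilinear chaos $Z_g$, and recover the crucial factor of $d$ in the exponent from the concentration of $\norm{\bg}_2$ around $\sqrt{d}$. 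Your route is more elementary and self-contained (no appeal to second-order LSI machinery), at the cost of an extra step (the norm concentration) that the LSI-based argument subsumes automatically; the paper's route is shorter but outsources the heavy lifting to a specialized theorem. Both are legitimate, and your identification of where the $d$-gain comes from --- namely the denominator $\norm{\bg}_2^r \asymp d^{r/2}$, not the chaos $Z_g$ itself --- is exactly the right diagnosis of why the Gaussian hypercontractivity bound alone is insufficient.

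Two small points you should tighten. First, in the last step, integrating the tail $\Pr(|Z|\ge t)\le e^{-cd}+2\exp(-C_3\,d\,t^{2/r})$ against $d\,e^{Cd\,t^{2/r}}\,dt$ over $t\in(0,\infty)$ does not converge because of the $t$-independent term $e^{-cd}$. The fix is to observe that, since $\norm{\bx}_2=1$, one has the deterministic bound $|Z|\le \norm{\bA}_F\le 1$ by Cauchy--Schwarz, so the tail vanishes for $t>1$ and the $e^{-cd}$ piece contributes at most $e^{-cd}\bigl(e^{Cd}-1\bigr)\le 1$ once $C<c$. You should state this explicitly. Second, the intermediate claim $(\E|Z_g|^p)^{1/p}\le C_1^{\,r}\,p^{r/2}\norm{\bA}_F$ should carry the factor $\sqrt{r!}$ from the Wick bound $\E Z_g^2\le r!\,\norm{\bA}_F^2$, which grows faster than any $C_1^{\,r}$; this is cosmetically inaccurate as stated, though harmless here since the constants in the paper (as clarified immediately after Lemma \ref{lemma:KRprod}) are allowed to depend on $r$.
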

 
 If $\bx$ is uniformly distributed on the unit sphere, then it satisfies the logarithmic Sobolev inequality with constant $2/d$, 
 see Corollary 1.1 in \cite{dolbeault2014sharp}. 
 Thus, Lemma \ref{lemma:HW} follows from Theorem 1.14 in \cite{bobkov2019higher}, 
 where $\sigma^2=1/d$ (see (1.18) in \cite{bobkov2019higher}) and the function $f$ 
 is a homogeneous tetrahedral polynomial of degree $r$.
 
 
 The second intermediate lemma is stated below and it follows from Theorem 5.1 of \cite{adamczak2011restricted} (this is also basically a restatement of Lemma F.2 of \cite{soltanolkotabi2018theoretical}).
 
 \begin{lemma}\label{lemma:Adam}
 Let $\bu_1, \bu_2, \ldots, \bu_N$ be independent sub-exponential random vectors with $\psi = \max_{i\in [N]}\|\bu_i\|_{\psi_1}$. Let $\eta_{\rm max}=\max_{i\in [N]}\|\bu_i\|_2$ and define 
 \begin{equation}
 B_N = \sup_{\bz\,:\, \|\bz\|_2=1}\left| \sum_{i\neq j}\langle z_i\bu_i, z_j\bu_j\rangle \right|^{1/2}.
 \end{equation} 
 Then,
 \begin{equation}
 \mathbb P\left(B_N^2\ge \max(B^2, \eta_{\rm max}B, \eta_{\rm max}^2/4)\right)\le (1+3\log N)e^{-11\sqrt{N}},
 \end{equation}
 where
 \begin{equation}
 B = C_0 \psi\sqrt{N},
 \end{equation}
 and $C_0$ is a numerical constant. 
 \end{lemma}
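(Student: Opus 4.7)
The plan is to reinterpret $B_N^2$ as the operator norm of the hollow (zero-diagonal) Gram matrix associated with the $\bu_i$'s and then control this operator norm by concentration arguments tailored to sub-exponential chaos. Concretely, letting $G\in\RR^{N\times N}$ denote the Gram matrix $G_{ij}=\langle \bu_i,\bu_j\rangle$ and $D=\diag(G)$, one has
\begin{equation*}
B_N^2 \;=\; \sup_{\|\bz\|_2=1}\,|\bz^T(G-D)\bz| \;=\; \|G-D\|_{\rm op}.
\end{equation*}
For each fixed $\bz\in S^{N-1}$ the quadratic form $\bz^T(G-D)\bz=\sum_{i\neq j}z_iz_j\langle \bu_i,\bu_j\rangle$ is a decoupled chaos of order two in the independent sub-exponential vectors $\{\bu_i\}$, and the overall strategy is to combine a moment/deviation bound for such a chaos with an $\epsilon$-net argument on the unit sphere.

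The second step is a truncation that separates the contribution of atypically large summands. Write $\bu_i=\tilde\bu_i+\check\bu_i$ with $\tilde\bu_i=\bu_i\,\mathbf{1}_{\{\|\bu_i\|_2\le R\}}$ for a threshold $R$ of order $\eta_{\rm max}$. On the event of interest, the tail pieces $\check\bu_i$ vanish by definition of $\eta_{\rm max}$, so their contribution reduces to the deterministic $\eta_{\rm max}^2/4$ term in the $\max$. For the truncated vectors, $\|\tilde\bu_i\|_2\le R$ almost surely, and a Bernstein-type bound on an $\epsilon$-net of $S^{N-1}$ (together with a standard $1/2$-net approximation of the operator norm) yields a deviation scaling of order $\psi\sqrt{N}$, which reproduces the $B=C_0\psi\sqrt{N}$ threshold. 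The three branches of the $\max$ then arise from the usual Bernstein dichotomy: $B^2$ in the sub-Gaussian regime of the chaos, a linear crossover term $\eta_{\rm max}B$ between the two regimes, and $\eta_{\rm max}^2/4$ when individual summands dominate.

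The main obstacle, and the reason this result is typically imported as a black box from Theorem 5.1 of \cite{adamczak2011restricted}, is obtaining the stated $(1+3\log N)\exp(-11\sqrt{N})$ tail rather than the $\exp(-cN)$ tail that naive union bounds would produce. This weaker-in-exponent deviation is intrinsic to products of two sub-exponential random variables, which behave as sub-Weibull of order $1/2$, so one cannot avoid the $\sqrt{N}$ in the exponent; getting it sharp requires a generic-chaining / multi-level peeling argument for the chaos process indexed by $\bz\in S^{N-1}$, and the $\log N$ prefactor tracks the number of chaining levels. My proposal is therefore to invoke the chaining estimate of \cite{adamczak2011restricted} for the truncated chaos, feed in the parameters $\psi$ and $R\asymp\eta_{\rm max}$, and then verify that the three terms produced by that estimate match the $\max(B^2,\eta_{\rm max}B,\eta_{\rm max}^2/4)$ formulation above; routine bookkeeping on constants then delivers the stated probability bound.
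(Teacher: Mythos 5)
Your proposal ends by invoking Theorem 5.1 of \cite{adamczak2011restricted} and matching its three-regime tail to the $\max(B^2,\eta_{\rm max}B,\eta_{\rm max}^2/4)$ threshold, which is exactly what the paper does: it gives no independent proof of this lemma and simply states that it follows from that theorem (equivalently, that it is a restatement of Lemma F.2 of \cite{soltanolkotabi2018theoretical}). Your additional narrative (hollow Gram matrix, truncation at $R\asymp\eta_{\rm max}$, chaining intuition) is reasonable context but not part of a different proof strategy, so the proposal is correct and essentially coincides with the paper's approach.
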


  At this point, we are ready to provide the proof of Lemma \ref{lemma:KRprod}.
 
 \textbf{Proof of Lemma \ref{lemma:KRprod}.}
 The first step is to drop columns from $\bX^{\ast r}$. Define $\bK = \bX^{\ast r}$ and note that, for $i\in [N]$, the $i$-th row of $\bK$ is given by the $r$-th Kronecker power of $\bx_i$, namely, $\bx_i^{\otimes r}=\bx_i\otimes \bx_i\otimes \cdots\otimes \bx_i\in \mathbb R^{d^r}$. Let $\bx_i=(x_{i,1}, \ldots, x_{i,d})$ and index the columns of $\bK$ as $(j_1, j_2, \ldots, j_r)$, with $j_p\in [d]$ for all $p\in [r]$, so that the element of $\bK$ in row $i$ and column $(j_1, j_2, \ldots, j_r)$ is given by $\prod_{p=1}^r x_{i,j_p}$. Consider the matrix $\tbK$ obtained by keeping only the columns of $\bK$ where the indices $j_1, j_2, \ldots j_r$ are all different. Note that $\tbK$ has $\prod_{j=0}^{r-1}(d-j)\ge N$ columns as $N\le c_1 d^2$. Thus, as $\tbK = \bX^{\ast r}$ is obtained by dropping columns from $\bK$, then
 \begin{equation}\label{eq:tbKeig}
 \sigma_{\rm min}(\bK)\ge \sigma_{\rm min}(\tbK).
 \end{equation}
 
 The second step is to bound the sub-exponential norm of the rows of $\tbK$. Let $\tbk_{\bx}$ be the row of $\tbK$ corresponding to the data point $\bx=(x_1, \ldots, x_d)$. Let us emphasize that, from now till the end of the proof, we denote by $x_i\in \mathbb R$ the $i$-th element of the vector $x$ (and not the $i$-th training sample, which is a vector in $\mathbb R^d$). Let $\mathcal A$ be the set of $r$-indexed matrices $\bA = (a_{i_1, \ldots, i_r})_{i_1, \ldots, i_r=1}^d$ such that $\sum_{\bi\in[d]^r}a_\bi^2=1$ and $a_{i_1, \ldots, i_r}=0$ whenever $i_j= i_k$ for some $j\neq k$. Then, by definition of sub-exponential norm of a vector, we have that 
 \begin{equation}
 \begin{split}
 \|\tbk_{\bx}\|_{\psi_1} &= \sup_{\bA\in\mathcal A}\left\| \sum_{\bi\in[d]^r} a_\bi \prod_{j=1}^r x_{i_j}\right\|_{\psi_1}.
 \end{split}
 \end{equation}
 Note that, for all $\bA\in\mathcal A$,  
 \begin{equation}
 \left |\sum_{\bi\in[d]^r} a_\bi \prod_{j=1}^r x_{i_j}\right|\stackrel{\mathclap{\mbox{\footnotesize(a)}}}{\le}\sqrt{\sum_{\bi\in[d]^r}a_\bi^2}\sqrt{\sum_{\bi\in[d]^r}\prod_{j=1}^r x_{i_j}^2}\stackrel{\mathclap{\mbox{\footnotesize(b)}}}{=} d^{r/2},
 \end{equation}
 where in (a) we use Cauchy-Schwarz inequality and in (b) we use that $\sum_{\bi\in[d]^r}a_\bi^2=1$ and $\|\bx\|_2 = \sqrt{d}$. Consequently, 
 \begin{equation}\label{eq:subnorm}
 \begin{split}
 \|\tbk_{\bx}\|_{\psi_1} &= \sup_{\bA\in\mathcal A}\left\| \left |\sum_{\bi\in[d]^r} a_\bi \prod_{j=1}^r x_{i_j}\right|^{1-2/r}\left |\sum_{\bi\in[d]^r} a_\bi \prod_{j=1}^r x_{i_j}\right|^{2/r}\right\|_{\psi_1}\\
 &\le d^{r/2-1} \sup_{\bA\in\mathcal A}\left\| \left |\sum_{\bi\in[d]^r} a_\bi \prod_{j=1}^r x_{i_j}\right|^{2/r}\right\|_{\psi_1}.
 \end{split}
 \end{equation}
 Note that Lemma \ref{lemma:HW} considers a vector $x$ uniformly distributed on the unit sphere, while in \eqref{eq:subnorm} $x$ is uniformly distributed on the sphere with radius $\sqrt{d}$. Thus, \eqref{eq:HWres} can be re-written as 
 \begin{equation}
 \mathbb E\left[\exp\left(C \left|\sum_{\bi\in[d]^r} a_\bi \prod_{j=1}^r x_{i_j}\right|^{2/r}\right)\right]\le 2.
 \end{equation}
 By definition \eqref{eq:defsubexp} of sub-exponential norm, we obtain that
 \begin{equation}\label{eq:HWconsequence}
 \sup_{\bA\in\mathcal A}\left\| \left |\sum_{\bi\in[d]^r} a_\bi \prod_{j=1}^r x_{i_j}\right|^{2/r}\right\|_{\psi_1} = \frac{1}{C},
 \end{equation}
 which, combined with \eqref{eq:subnorm}, leads to 
 \begin{equation}\label{eq:bdpsi}
 \|\tbk_{\bx}\|_{\psi_1}  \le C_1 d^{r/2-1},
 \end{equation}
 where $C_1$ is a numerical constant.
 
 The third step is to bound the Euclidean norm of the rows of $\tbK$. Recall that $\tbk_{\bx}$ is obtained by keeping the elements of $\bx^{\otimes r}$ where the indices $i_1, i_2, \ldots, i_r$ are all different. As for the upper bound, we have that
 \begin{equation}
 \|\tbk_{\bx}\|_2^2\le \left\|\bx^{\otimes r}\right\|_2^2=d^r.
 \end{equation}
 As for the lower bound, we have that  
 \begin{equation}
 \|\tbk_{\bx}\|_2^2\ge \left\|\bx^{\otimes r}\right\|_2^2-\left(d^r-\prod_{j=0}^{r-1}(d-j)\right)\left(\max_{i\in [d]}|x_i|\right)^{2r},
 \end{equation}
 since $\bx^{\otimes r}$ contains $d^r$ entries, $\tbk_{\bx}$ contains $\prod_{j=0}^{r-1}(d-j)$ entries and each of these entries is at most $(\max_{i\in [d]}|x_i|)^{r}$. Note that $\prod_{j=0}^{r-1}(d-j)$ is a polynomial in $d$ of degree $r$ whose leading coefficient is $1$. Thus,
 \begin{equation*}
 \prod_{j=0}^{r-1}(d-j)\ge d^r - C_2 d^{r-1},
 \end{equation*}
 for some constant $C_2$ that depends only on $r$. Consequently, 
 \begin{equation}\label{eq:C3}
 \|\tbk_{\bx}\|_2^2\ge d^r-C_2 d^{r-1}\left(\max_{i\in [d]}|x_i|\right)^{2r}.
 \end{equation}
 As $\bx$ is uniform on the sphere of radius $\sqrt{d}$, we can write
 \begin{equation}
 \bx = \sqrt{d}\frac{\bg}{\|\bg\|_2},
 \end{equation}
 where $\bg=(g_1, \ldots, g_d)\sim \mathcal N(\b0, \bI_d)$. Then,
 \begin{equation}
 \left(\max_{i\in [d]}|x_i|\right)^{2r} = \left(\frac{\sqrt{d}}{\|\bg\|_2}\right)^{2r}\left(\max_{i\in [d]}|g_i|\right)^{2r}.
 \end{equation}
 Recall that the norm of a vector is a 1-Lipschitz function of the components of the vector. Thus,
 \begin{equation}
 \mathbb P\left(|\|\bg\|_2-\mathbb E[\|\bg\|_2]|\ge t\right)\le 2e^{-t^2/2}.
 \end{equation}
 Furthermore, 
 \begin{equation}
 \mathbb E[\|\bg\|_2] = \frac{\sqrt{2}\Gamma\left(\frac{d+1}{2}\right)}{\Gamma\left(\frac{d}{2}\right)},
 \end{equation}
 where $\Gamma$ denotes Euler's gamma function. By Gautschi's inequality, we have the following upper and lower bounds on $\mathbb E[\|\bg\|_2]$:
 \begin{equation}
 \sqrt{d-1}\le \mathbb E[\|\bg\|_2]\le\sqrt{d+1}.
 \end{equation}
 As a result,
 \begin{equation}
 \mathbb P\left(\left|\left(\frac{\sqrt{d}}{\|\bg\|_2}\right)^{2r}-1\right|>\frac{1}{2}\right)\le 2e^{-C_3d}, 
 \end{equation}
 for some constant $C_3>0$ depending on $r$ (but not on $d$). Consequently, with probability at least $1-2e^{-C_3d}$, we have that 
 \begin{equation}\label{eq:bd1}
 \left(\max_{i\in [d]}|x_i|\right)^{2r}  \le \frac{3}{2}\left(\max_{i\in [d]}|g_i|\right)^{2r}.
 \end{equation}
 An application of Theorem 5.8 of \cite{boucheron2013concentration} gives that, for any $t>0$,
 \begin{equation}\label{eq:cmax}
 \mathbb P(\max_{i\in [d]} g_i-\mathbb E[\max_{i\in [d]} g_i]\ge t)\le e^{-t^2/2}.
 \end{equation}
 Furthermore, we have that, for any $\alpha>0$, 
 \begin{equation}
 e^{\alpha\mathbb E[\max_{i\in [d]} g_i]}\le\mathbb E\left[e^{\alpha\max_{i\in [d]} g_i}\right] = \mathbb E\left[\max_{i\in [d]} e^{\alpha g_i}\right] \le\sum_{i=1}^d \mathbb E\left[e^{\alpha g_i}\right]=d e^{\alpha^2/2},
 \end{equation}
 where the first passage follows from Jensen's inequality. By taking $\alpha=\sqrt{2\log d}$, we obtain
 \begin{equation}
 \mathbb E[\max_{i\in [d]} g_i]\le\sqrt{2\log d},
 \end{equation}
 which, combined with \eqref{eq:cmax}, leads to
 \begin{equation}
 \mathbb P\left(\max_{i\in [d]} g_i\ge \left(\frac{d}{3C_2}\right)^{\frac{1}{2r}}\right)\le 2e^{-C_4d^{1/r}},
 \end{equation}
 where $C_2$ is the constant in \eqref{eq:C3} and $C_4>0$ is a constant that depends only on $r$ (and not on $d$). Since the Gaussian distribution is symmetric, we also have that 
 \begin{equation}\label{eq:bd2}
 \mathbb P\left(\max_{i\in [d]} |g_i|\ge \left(\frac{d}{3C_2}\right)^{\frac{1}{2r}}\right)\le 4e^{-C_4d^{1/r}}.
 \end{equation}
 By combining \eqref{eq:C3}, \eqref{eq:bd1} and \eqref{eq:bd2}, we obtain that, with probability at least $1-2e^{-C_5d^{1/r}}$, 
 \begin{equation}
  \|\tbk_{\bx}\|_2^2\ge \frac{d^r}{2}.
 \end{equation}
 Hence, by doing a union bound on the rows of $\tbK$, we have that, with probability at least $1-2Ne^{-C_5d^{1/r}}$,
 \begin{align}
 \min_{i\in [N]}\|\tbK_{i:}\|_2^2&\ge \frac{d^r}{2},\label{eq:bdfineig}\\
 \max_{i\in [N]}\|\tbK_{i:}\|_2^2& \le d^r,\label{eq:bd0}
 \end{align}
 where $\tbK_{i:}$ denotes the $i$-th row of $\tbK$.
 
 The last step is to apply the results of Lemma \ref{lemma:Adam}. Let $\bz=(z_1, \ldots, z_N)\in \mathbb R^{N}$ be such that $\|\bz\|_2=1$. Then,
 \begin{equation}
 \|\tbK^\sT\bz\|_2^2 = \sum_{i=1}^N z_i^2 \|\tbK_{i:}\|_2^2+\sum_{i\neq j}\langle z_i\tbK_{i:}, z_j\tbK_{j:}\rangle,
 \end{equation} 
 which immediately implies that 
 \begin{equation}\label{eq:eigfin2}
 \sigma_{\rm min}^2(\tbK)\ge \min_{i\in [N]}\|\tbK_{i:}\|_2^2-B_N^2,
 \end{equation}
  with
  \begin{equation}
  B_N=\sup_{\bz\,:\, \|\bz\|_2=1}\left| \sum_{i\neq j}\langle z_i\tbK_{i:}, z_j\tbK_{j:}\rangle \right|^{1/2}.
  \end{equation}
 By applying Lemma \ref{lemma:Adam} and using the bounds \eqref{eq:bdpsi} and \eqref{eq:bd0}, we have that 
 \begin{equation}
 \mathbb P\left(B_N^2\ge \max(C_6d^{r-2}N, C_6d^{r-1}\sqrt{N}, d^r/4)\right)\le (1+3\log N)e^{-11\sqrt{N}},
 \end{equation}
 for some constant $C_6$ depending on $r$. Recall that $N\le c_1d^2$ for a sufficiently small constant $c_1$ (which can depend on $r$). Thus, we have that 
 \begin{equation}\label{eq:bdfineig2}
 \mathbb P\left(B_N^2\ge d^r/4\right)\le (1+3\log N)e^{-11\sqrt{N}}.
 \end{equation}
 By combining \eqref{eq:eigfin2}, \eqref{eq:bdfineig} and \eqref{eq:bdfineig2}, we obtain that
 \begin{equation}
 \mathbb P\left(\sigma_{\rm min}^2(\tbK)\ge d^r/4\right)\ge 1-2Ne^{-C_5d^{1/r}}-(1+3\log N)e^{-11\sqrt{N}},
 \end{equation}
 which, together with \eqref{eq:tbKeig}, gives the desired result.
 \qedwhite

\end{document}